\PassOptionsToPackage{table,dvipsnames}{xcolor}
\documentclass{article}

\usepackage{iclr2027_conference,times}

\usepackage[utf8]{inputenc}
\usepackage[T1]{fontenc}
\usepackage{url}
\usepackage{amsfonts}
\usepackage{nicefrac}
\usepackage{microtype}

\usepackage{adjustbox}
\usepackage{soul}
\usepackage{subfigure}
\usepackage{capt-of}
\usepackage{wrapfig}
\usepackage{multirow}
\usepackage[table]{xcolor}
\usepackage{algpseudocode}
\usepackage{algorithm}
\usepackage{amssymb}
\usepackage{mathtools}
\usepackage{amsthm}
\usepackage{thmtools}
\usepackage{thm-restate}
\usepackage{todonotes}
\usepackage{tcolorbox}
\usepackage[pdfusetitle]{hyperref}
\usepackage[capitalize,noabbrev]{cleveref}

\makeatletter
\newcounter{Halgorithmic}
\AtBeginEnvironment{algorithmic}{\stepcounter{Halgorithmic}}
\providecommand{\theHALG@line}{}
\renewcommand{\theHALG@line}{\arabic{Halgorithmic}.\arabic{ALG@line}}
\makeatother

\theoremstyle{plain}
\newtheorem{theorem}{Theorem}[section]
\newtheorem{proposition}[theorem]{Proposition}

\theoremstyle{definition}

\theoremstyle{remark}

\usepackage[dvipsnames]{xcolor}

\usepackage{graphicx}
\usepackage{graphics}
\usepackage{booktabs}
\usepackage{pdflscape}
\usepackage{array}
\usepackage{siunitx}
\newcolumntype{H}{>{\setbox0=\hbox\bgroup}c<{\egroup}@{}} %

\usepackage{amsmath,amsfonts,bm}

\newcommand{\abs}[1]{\vert {#1} \vert}

\newcommand{\prob}[1]{\mathbb P\left[#1 \right]}
\DeclareMathOperator{\EX}{\mathbb{E}}%

\mathchardef\mhyphen="2D

\counterwithin*{theorem}{section}

\theoremstyle{definition}

\theoremstyle{definition}  

\crefname{axiom}{Axiom}{Axioms}

\def\eqref#1{equation~\ref{#1}}

\def\1{\bm{1}}

\DeclareMathAlphabet{\mathsfit}{\encodingdefault}{\sfdefault}{m}{sl}
\SetMathAlphabet{\mathsfit}{bold}{\encodingdefault}{\sfdefault}{bx}{n}

\DeclarePairedDelimiterX{\infdivx}[2]{[}{]}{%
  #1\;\delimsize\|\;#2%
}

\usepackage{xkcdcolors}

\crefname{figure}{Fig.}{Figs.}
\crefname{table}{Tab.}{Tabs.}
\crefname{section}{\S}{\S\S}
\crefname{chapter}{Ch.}{Chs.}
\crefname{appendix}{App.}{Apps.}
\crefname{equation}{Eq.}{Eqs.}
\crefname{theorem}{Thm.}{Thms.}
\crefname{proposition}{Prop.}{Props.}
\Crefname{proposition}{Prop.}{Props.}

\usepackage{xcolor}
\usepackage{soul}

\definecolor{bgOrange}{RGB}{254, 230, 200}
\definecolor{bgBlue}{RGB}{204, 229, 255}
\definecolor{bgGreen}{RGB}{218, 242, 219}

\newcommand{\hlOrange}[1]{\sethlcolor{bgOrange}\hl{#1}}
\newcommand{\hlBlue}[1]{\sethlcolor{bgBlue}\hl{#1}}
\newcommand{\hlGreen}[1]{\sethlcolor{bgGreen}\hl{#1}}

\def\states{{\mathcal{S}}}
\def\initState{s_I}
\def\finalState{s_F}
\def\trajectory{{\tau}}
\def\trajDist{{q}}
\def\noiseDist{p_\text{\mdseries src}}
\def\dataDist{p_\text{\mdseries tgt}}
\def\coupling{{\gamma}}
\def\markovDist{{\kappa}}
\def\actionSet{{\alpha}}

\newcommand{\indicator}[1]{\mathbf{1}[#1]}
\newcommand{\smath}[1]{\ensuremath{\smash{#1}}}
\newcommand{\rate}{u_t}
\newcommand{\rbrate}{\bar{u}_t}
\newcommand{\loss}{\mathcal{L}}
\DeclareMathOperator{\supp}{supp}

\def\prob{\mathbb{P}}

\hypersetup{
    colorlinks=true,
    linkcolor=xkcdDarkBlue,
    filecolor=xkcdDarkPurple,
    urlcolor=xkcdDarkBlue,
    citecolor=xkcdDarkBlue,
    anchorcolor=black
}

\title{Trajectory Matching: Discrete Flow Matching with Structured Actions}
    \iclrfinalcopy
\usepackage{etoolbox}
\makeatletter
\patchcmd{\@maketitle}
    {Published as a conference paper at ICLR 2027}
    {Preprint}
    {}
    {\PackageError{preprint}{Could not replace the ICLR publication header}{Check the conference style before distributing this PDF.}}
\makeatother
\author{Niklas Kemper \quad Marten Lienen \quad David L\"udke \quad Stephan G\"unnemann\\[3pt]
{\normalfont\small School of Computation, Information and Technology \& Munich Data Science Institute}\\
{\normalfont\small Technical University of Munich}\\
{\normalfont\small\hypersetup{urlcolor=black}\texttt{\href{mailto:niklas.kemper@tum.de}{niklas.kemper@tum.de}}}}
\hypersetup{
    pdfauthor={Niklas Kemper, Marten Lienen, David L\"udke, Stephan G\"unnemann},
    pdftitle={Trajectory Matching: Discrete Flow Matching with Structured Actions},
    pdfsubject={Preprint}
}

\date{}

\newif\ifincludeappendix
\includeappendixtrue

\begin{document}

\maketitle

\begin{abstract}
The natural units of structured discrete generation are often not independent single-component edits but richer actions: attaching a fragment to a molecule, inserting a node together with an incident edge, or growing a hierarchy by adding a child. Yet most existing discrete flow matching methods (DFM) are restricted to independent component-wise edits over fixed-size states. This restriction stems from endpoint conditioning: endpoints fix the source and target during training but not the action sequence between them, making tractable conditional rates hard to define for complex actions that cannot be factorized component-wise. We propose trajectory matching, which conditions the flow on a full sequence of actions connecting the endpoints. This yields principled, simulation-free training targets for user-defined action spaces, including actions that change state size, modify multiple components, or depend on earlier actions. We prove that the resulting objective recovers the marginal flow and that trajectory matching strictly subsumes factorized DFM, while additionally capturing actions with prerequisite structure such as hierarchy growth or fragment-based molecule generation. Empirically, on tree and molecular graph generation, trajectory matching with domain-aligned actions matches or surpasses factorized DFM baselines (lifting MOSES validity from 92.8\% to 99.2\%) and transfers without retraining to substructure extension and linking.
 Trajectory matching enlarges the design space of discrete flow matching by turning the action vocabulary into a modeling choice.
An interactive explanation of trajectory matching is available at \url{https://niklaskemper.com/blog/trajectory-matching/}.

\end{abstract}

\section{Introduction}

Many structured generation problems are most naturally expressed in terms of actions that reflect the structure of the domain rather than the structure of the representation. In molecular graph generation, for example, the natural actions are not single-component edits but operations like attaching a fragment to a partial structure or inserting an atom together with a bond.
Such actions can change the size of the state, modify several components at once, and depend on one another---an atom can only be attached to atoms that have already been inserted.
The value of such structured actions is well established in autoregressive models: fragment-based  molecule generation  \citep{junction_tree}, grammar-based code generation \citep{code_gen}, and insertion-based graph models \citep{graph_rnn} all exploit domain-aligned actions to achieve strong empirical performance.

However, existing discrete flow matching (DFM) \citep{discrete_flows_original,discrete_flows} and related non-autoregressive methods such as discrete diffusion \citep{discrete_diffusion} cannot easily express  such actions.
The reason is structural: DFM prescribes a conditional flow between sampled endpoints, and the corresponding rate must be tractable for training. To achieve this over combinatorially large state spaces, existing methods factorize the rate across components. This commits the model to fixed-size states, single-component edits, and independence between actions.
Recent extensions \citep{edit_flows,evoflows,tpps} lift the fixed-size constraint by supporting insertions and deletions in specific domains, but they do so by mapping these operations into augmented state spaces where they again reduce to factorized single-component edits. Actions that modify multiple components or depend on one another therefore remain out of reach.

We address this gap with trajectory matching: rather than conditioning on endpoints alone, we condition on a full trajectory of actions connecting them. A trajectory specifies which actions are applied and in what order, leaving the conditional flow to determine only when each action is taken. This makes the conditional rate straightforward to define for general action spaces, including actions that change state size, modify multiple components, or depend on one another. The resulting training objective is still simulation-free and provably recovers the marginal data-generating flow. To make training efficient, we require an efficiently sampleable distribution over trajectories connecting the endpoints. We give explicit constructions for several settings, including actions with hierarchical dependencies. Our framework also recovers standard factorized DFM as the special case of independent component-wise actions with uniformly sampled orderings, placing factorized updates and actions with prerequisites within a common framework.

Empirically, we show that domain-aligned action spaces translate into concrete gains on tree and molecular graph generation. On tree generation, child-attachment actions yield 100\% validity by construction across sampling budgets, while a factorized baseline collapses below 10\% at 10 Euler steps. On molecular graphs, we demonstrate the flexibility of trajectory matching across four action spaces, ranging from factorized bond edits, which match existing DFM baselines, to fragment-based insertions, which lift MOSES validity from 92.8\% to 99.2\%. Additionally, a single variable-size insertion model transfers zero-shot to substructure extension and linking, substantially outperforming a factorized baseline that requires task-specific adaptation.

Our core contributions are:
\begin{itemize}
    \item \textbf{Trajectory matching.} A discrete flow matching framework over user-defined action spaces,  including size-changing, multi-component, and dependent actions, obtained by conditioning training on full action trajectories rather than endpoints. The objective recovers the marginal flow (\cref{thm:trajectory-marginal}); a Rao-Blackwellized variant reduces gradient variance (\cref{prop:rao-blackwell-variance}).

    \item \textbf{A unifying framework.} Trajectory matching subsumes factorized DFM (\cref{prop:factorized-flow-equivalence}) while extending to partial-order action systems, capturing hierarchy growth and fragment-based molecule construction.

    \item \textbf{Empirical gains on structured generation.} On tree and molecular graph generation, trajectory matching with domain-aligned action spaces yields concrete gains over factorized DFM baselines: 100\% validity by construction on trees, MOSES validity from 92.8\% to 99.2\% on molecules, and zero-shot transfer to substructure extension and linking.
\end{itemize}

\section{Action-Based Discrete Flow Matching}
\label{sec:background}
\textbf{States and Actions}\quad We model generation as a sequence of actions on discrete states \smath{s \in \states} rather than component-wise updates. This enlarges the design space beyond standard factorized DFM. Let \smath{\mathcal{A}} be the global action vocabulary and \smath{A(s) \subseteq \mathcal{A}} the set of actions available at state \smath{s}; each \smath{a} produces a new state \smath{a(s) \neq s}.
The choice of \smath{\mathcal{A}} is itself a modeling decision: in molecules, for instance, one might edit bonds on a fixed atom set, attach atoms one at a time, or insert whole ring fragments.
Actions may modify multiple components, depend on one another, or change the size of the state. We say actions \smath{a_1,a_2} are \emph{independent} if the order of application does not matter, i.e., for every \smath{s} with \smath{a_1,a_2 \in A(s)}: \smath{a_1 \in A(a_2(s))}, \smath{a_2 \in A(a_1(s))}, and \smath{a_1(a_2(s)) = a_2(a_1(s))}.
For example, unmasking two tokens in text is independent, while insertion actions generally are not.

\textbf{CTMCs}\quad Our construction is built on continuous-time Markov chains (CTMC) \smath{(S_t)_{t \in [0,1]}} on the state space \smath{\states}. Rather than parameterizing the CTMC by transition rates between states, we parameterize it by an \emph{action rate} \smath{\rate(s,a) \ge 0} of taking action \smath{a \in A(s)} at state \smath{s} and time \smath{t}. The infinitesimal transition probabilities for \smath{s' \neq s} are
\begin{align*}
\resizebox{\linewidth}{!}{$\displaystyle
  \prob[S_{t+h}=s' \mid S_t=s] = h \sum_{a(s) = s'} \rate(s,a) + o(h), \quad \prob[S_{t+h} = s \mid S_t = s] = 1 - h \sum_{a \in A(s)}\rate(s,a) + o(h).
$}
\end{align*}
We say \smath{\rate} \emph{generates} the associated marginal \emph{probability path} \smath{p_t(s) = \prob[S_t = s]}.

\textbf{Discrete Flow Matching}\quad The goal of discrete flow matching \citep{discrete_flows_original,discrete_flows} is to learn the rate \smath{\rate} of a CTMC that transports the source distribution \smath{\noiseDist} to the target distribution \smath{\dataDist}. Directly specifying \smath{\rate} is in general intractable. Instead, flow matching specifies, for each pair of endpoints \smath{(\initState, \finalState)}, an endpoint-conditioned CTMC \smath{(S_t \mid \initState, \finalState)_t} with prescribed rates \smath{\rate(s, a \mid \initState, \finalState)} that generate a conditional path \smath{p_t(s \mid \initState, \finalState)} satisfying
\[
p_0(s \mid \initState, \finalState) = \indicator{s = \initState},
\qquad
p_1(s \mid \initState, \finalState) = \indicator{s = \finalState}.
\]
Training uses a coupling distribution \smath{\coupling(\initState,\finalState)} for the endpoints with marginals \smath{\noiseDist} and \smath{\dataDist}, respectively. The marginal rate and path are then given by:
\begin{align*}
\rate(s,a) = \EX_{(\initState,\finalState) \sim \coupling} [\rate(s,a \mid \initState,\finalState) \mid S_t = s],\quad
p_t(s) = \EX_{(\initState,\finalState) \sim \coupling} p_t(s \mid \initState,\finalState)
\end{align*}
where by construction \smath{p_0 = \noiseDist} and \smath{p_1 = \dataDist}. The key insight of flow matching is that the marginal rate \smath{\rate} generates the marginal path \smath{p_t} \citep{flow1}. The conditional rate \smath{\rate(\cdot \mid \initState,\finalState)} thus serves as a tractable target for an unconditional model \smath{\rate^\theta}, trained by minimizing a Bregman divergence \smath{D} \citep{generator_matching}:
\begin{align}
\label{eq:flow-matching-objective}
\EX_{(\initState,\finalState) \sim \coupling}
\EX_{t}
\EX_{s \sim p_t(\cdot \mid \initState,\finalState)}
D\!\left(
\rate(s,\cdot \mid \initState,\finalState),
\rate^\theta(s,\cdot)
\right)
\end{align}

\textbf{Factorized Flows}
\label{subsec:factorized}
Specifying the conditional rate \smath{\rate(s, a \mid \initState, \finalState)} is the central design choice in discrete flow matching: it must be efficiently computable and generate an efficiently sampleable path connecting \smath{\initState} and \smath{\finalState}. In general action spaces, however, many different action sequences --- potentially with dependencies between actions --- may connect \smath{\initState} to \smath{\finalState}, making it difficult to specify such rates from the endpoints alone. Existing discrete flow matching methods therefore make strong assumptions on the action space: 1) initial, final, and all intermediate states decompose into components \smath{s = (s^1,\dots,s^d)} with the same fixed size \smath{d}, 2) each action modifies exactly one component from its initial value \smath{\initState^i} to its final value \smath{\finalState^i}, and 3) conditioned on the endpoints, components evolve independently. Under these assumptions, the conditional rate factorizes across components: for an action \smath{a} acting on component \smath{i}, it suffices to specify a rate \smath{\rate(s, a \mid \initState, \finalState)} that depends only on \smath{s^i}, \smath{\initState^i}, and \smath{\finalState^i}. Each component then evolves independently between its initial and final values. A common choice is the linear schedule
\begin{align}
\label{eq:factorized-flow-rate}
\rate(s,a \mid \initState,\finalState)
= \indicator{a^i(s^i) = \finalState^i} \cdot 1/(1-t),
\end{align}
which generates the factorized path
\begin{align*}
\resizebox{\linewidth}{!}{$\displaystyle
p_t(s \mid \initState,\finalState)
= \prod_{i=1}^{d} p_t^i(s^i \mid \initState^i,\finalState^i), \quad \text{with} \quad
p_t^i(s^i \mid \initState^i,\finalState^i)
= (1-t)\,\indicator{s^i = \initState^i}
+ t\,\indicator{s^i = \finalState^i}.
$}
\end{align*}
Other time schedules yield analogous constructions. This is the most widely used discrete flow matching formulation, which we will refer to as \emph{factorized flows}.

\section{Trajectory Matching}
\label{sec:trajectory-matching}
Factorized flows are simple to define and have shown strong empirical performance on a variety of tasks. However, their underlying assumptions rule out many natural modeling choices in structured domains: actions cannot change the size of the state (e.g., adding an atom to a molecule), cannot modify multiple components at once (e.g., inserting a full fragment into a molecular graph), and cannot depend on one another (e.g., a bond insertion requires both endpoints to already exist). We lift these restrictions by defining flows over general action spaces. As argued in \cref{subsec:factorized}, specifying valid rates from endpoints alone is difficult in this setting, because many action sequences may connect the same endpoint pair. We therefore condition not only on endpoints but also on a full \emph{trajectory}: a concrete sequence of actions transforming \smath{\initState} into \smath{\finalState}. Conditioning on a trajectory makes the conditional rate straightforward to define, even for size-changing, multi-component, or dependent actions.

\subsection{Trajectory-conditioned flows}
A trajectory \smath{\trajectory} from an initial state \smath{\initState} to a final state \smath{\finalState} is a sequence of states and actions 
\begin{align*}
    \trajectory: \initState = s_0 \xrightarrow{a_1} s_1 \xrightarrow{a_2} \cdots \xrightarrow{a_n}s_n = \finalState.
\end{align*}
For ease of notation, we assume that no state \smath{s_i} appears more than once in one trajectory. 

Next, we define a trajectory-conditioned CTMC \smath{(S_t \mid \trajectory)_{t \in [0,1]}}. The trajectory fixes \emph{the order of actions and states} while the CTMC randomizes \emph{when} each action is taken. We again specify the process via its rate \smath{\rate(s,a \mid \trajectory)}, requiring it to generate a path with \smath{p_0(s \mid \trajectory) = \indicator{s = \initState}} and \smath{p_1(s \mid \trajectory) = \indicator{s = \finalState}}. A natural choice is to assume equal expected times between actions, i.e., the execution times \smath{t_i} of actions \smath{a_i} 
(\smath{ 0 < t_1 < \cdots < t_n < 1}) are distributed as the order statistics of \smath{n} i.i.d. uniform random variables on \smath{[0,1]}. 
This corresponds to the rate
\begin{align}
\label{eq:trajectory-rate}
\rate(s,a \mid \trajectory) = \sum_{i=0}^{n-1} \indicator{s=s_i \land a = a_{i+1}} \cdot \frac{n-i}{1-t}
\end{align}
in which only the next action in the trajectory has positive rate, proportional to the number of remaining actions. The generated path has a simple closed form:
\begin{restatable}[Trajectory-conditioned path]{proposition}{trajectorypathprop}
\label{prop:trajectory-path}
Under rate \cref{eq:trajectory-rate}, the generated probability path is
\[
p_t(s \mid \trajectory) = \sum\nolimits_{i=0}^{n} \indicator{s = s_i} \cdot \prob[K_t = i],
\]
where \smath{K_t \sim \text{Bin}(n, t)} counts the number of actions taken by time \smath{t}.
\end{restatable}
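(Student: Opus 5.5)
Because the trajectory $\trajectory$ is fixed and no state repeats, the conditional CTMC can only visit $s_0,\dots,s_n$, in that order. At state $s_i$ exactly one action ($a_{i+1}$) has positive rate, so the process is a pure-birth (counting) chain. Writing $K_t$ for the index of the current state, $S_t = s_{K_t}$ holds almost surely. Hence
\[
p_t(s \mid \trajectory) = \sum_{i=0}^{n} \indicator{s = s_i}\,\prob[K_t = i],
\]
and it remains to identify the law of $K_t$. The chain starts at $s_0 = \initState$, so $K_0 = 0$, and $K_t$ jumps from $i$ to $i+1$ at rate $(n-i)/(1-t)$.

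I would prove $K_t \sim \text{Bin}(n,t)$ by verifying the Kolmogorov forward equation. With $q_i(t) = \prob[K_t = i]$, the forward equation reads
\[
\dot q_i(t) = \frac{n-i+1}{1-t}\, q_{i-1}(t) - \frac{n-i}{1-t}\, q_i(t), \qquad q_i(0) = \indicator{i = 0},
\]
with the convention $q_{-1} \equiv 0$. I then plug in $q_i(t) = \binom{n}{i} t^i (1-t)^{n-i}$. The derivative is $\binom{n}{i}\big[i t^{i-1}(1-t)^{n-i} - (n-i)t^i(1-t)^{n-i-1}\big]$. The second term matches $-\frac{n-i}{1-t}q_i$ directly. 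For the first term, the identity $\binom{n}{i} i = \binom{n}{i-1}(n-i+1)$ turns it into $\frac{n-i+1}{1-t}q_{i-1}$. The initial condition holds, and since this is a finite linear ODE system with coefficients continuous on $[0,1)$, the solution is unique. The endpoint $t=1$ follows by continuity, giving $q_i(1) = \indicator{i = n}$, i.e.\ $p_1 = \indicator{s=\finalState}$.

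As a cross-check matching the text's intuition, I would also give a probabilistic argument. Take $n$ i.i.d.\ uniforms and let $K_t$ be the number of them that are $\le t$, so $K_t \sim \text{Bin}(n,t)$. Given $K_t = i$, each of the remaining $n-i$ uniforms is uniform on $(t,1]$ and has hazard $1/(1-t)$, so the total jump rate is $(n-i)/(1-t)$. The jump times are therefore exactly the order statistics of the uniforms.

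The main obstacle is the singularity of the rate at $t = 1$. I would handle it by working on $[0,1)$ and taking the limit, noting that $q_n(t) = t^n \to 1$.
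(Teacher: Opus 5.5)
Your proposal is correct and follows essentially the same route as the paper: plug the binomial ansatz into the Kolmogorov forward equation, use $(n-i+1)\binom{n}{i-1} = i\binom{n}{i}$ to match the inflow term, check the initial condition, and conclude by uniqueness. Your reduction to the pure-birth counting chain, the order-statistics cross-check, and the explicit treatment of the singularity at $t=1$ are sound additions that the paper leaves implicit.
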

In particular, \smath{p_t(\cdot \mid \trajectory)} is easy to sample from. Other choices of conditional rates are discussed in \cref{app:kappa-schedules}. Importantly, conditioning on the trajectory lets us define a CTMC between initial and final state even for settings with complex actions where the endpoint-conditioned rate cannot be factorized.

\subsection{Trajectory matching}
\begin{figure}[tbp]
    \centering
    \includegraphics[width=1\textwidth]{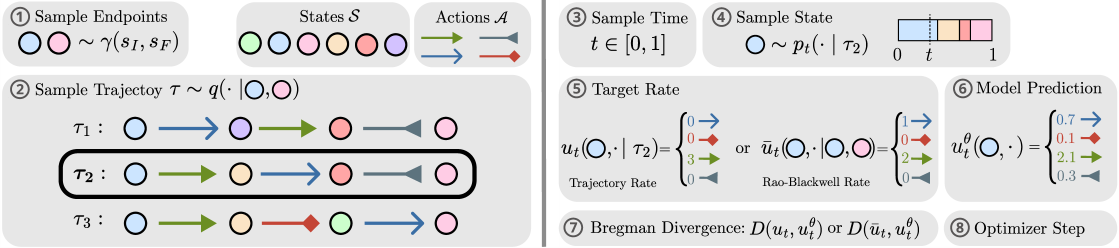}
    \vspace{-0.7cm}
    \caption{Training step of trajectory matching. Conditioning on a full trajectory (rather than endpoints alone) yields tractable training targets for general action spaces (including multi-component, dependent, or size-changing actions). Pseudocode in \cref{alg:trajectory_matching_training}.}
    \label{fig:training}
    \vspace{-0.4cm}
\end{figure}
Having defined trajectory-conditioned CTMCs, we now describe how to use their rates as training targets for an unconditional model \smath{\rate^\theta(s,\cdot)} that depends only on the current state and time. For any endpoint pair \smath{(\initState, \finalState)}, many valid trajectories may connect them. Therefore, in addition to the coupling \smath{\coupling} over endpoints, we also specify a conditional distribution 
$\trajDist(\trajectory \mid \initState, \finalState)$
over trajectories from \smath{\initState} to \smath{\finalState}. We assume \smath{\trajDist(\cdot \mid \initState, \finalState)} is supported on valid trajectories for all \smath{(\initState, \finalState)} in the support of \smath{\coupling}, and that sampling from \smath{\trajDist} is efficient. Intuitively, \smath{\trajDist} 
encodes an inductive bias over how \smath{\initState} should be transformed into \smath{\finalState}: it determines which trajectories 
contribute to training and with what weight. Analogously to endpoint-conditioned discrete 
flow matching, one can show: 
\begin{restatable}{theorem}{trajectorymarginalthm}
\label{thm:trajectory-marginal}
Let \smath{\rate(s, a \mid \trajectory)} be a conditional rate that generates the trajectory-conditioned path \smath{p_t(s \mid \trajectory)}. Define the marginal probability path and marginal rate as
\begin{align*}
p_t(s) := \EX_{\trajectory \sim \trajDist}\, p_t(s \mid \trajectory), \qquad \rate(s, a) := \EX_{\trajectory \sim \trajDist}[\, \rate(s, a \mid \trajectory) \mid S_t = s\,],
\end{align*}
where \smath{\EX_{\trajectory \sim \trajDist}} is shorthand for \smath{\EX_{(\initState, \finalState) \sim \coupling}\, \EX_{\trajectory \sim \trajDist(\cdot \mid \initState, \finalState)}}. Then \smath{\rate} generates \smath{p_t}.
\end{restatable}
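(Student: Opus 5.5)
The plan is to verify the Kolmogorov forward equation for the marginal pair \smath{(p_t, \rate)}, exactly as in the endpoint-conditioned flow matching argument. The action-rate parameterization gives the forward equation, for each fixed conditioning object \smath{\trajectory}, in the form
\[
\tfrac{d}{dt} p_t(s \mid \trajectory) = \sum_{s'} \sum_{a \in A(s'),\, a(s') = s} \rate(s', a \mid \trajectory)\, p_t(s' \mid \trajectory) - \sum_{a \in A(s)} \rate(s, a \mid \trajectory)\, p_t(s \mid \trajectory).
\]
This holds by the hypothesis that \smath{\rate(\cdot \mid \trajectory)} generates \smath{p_t(\cdot \mid \trajectory)}. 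The claim "\smath{\rate} generates \smath{p_t}" will then mean that \smath{p_t} satisfies the same equation with the marginal rate, together with the matching initial condition \smath{p_0 = \EX_{\trajectory} \indicator{s = \initState} = \noiseDist}, the first marginal of \smath{\coupling}.

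\textbf{Step 1.} Make the conditional expectation explicit. By Bayes, with \smath{\trajectory} drawn jointly with \smath{S_t} from \smath{\coupling(\initState,\finalState)\,\trajDist(\trajectory\mid\initState,\finalState)\,p_t(s\mid\trajectory)}, for \smath{p_t(s) > 0} we have
\[
\rate(s,a) = \frac{\EX_{\trajectory \sim \trajDist}[\rate(s,a \mid \trajectory)\, p_t(s \mid \trajectory)]}{p_t(s)}.
\]
Consequently \smath{\rate(s,a)\, p_t(s) = \EX_{\trajectory}[\rate(s,a\mid\trajectory)\, p_t(s\mid\trajectory)]}. This identity also holds trivially when \smath{p_t(s) = 0}, since then \smath{p_t(s\mid\trajectory) = 0} almost surely; there we may define \smath{\rate(s,\cdot)} arbitrarily.

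\textbf{Step 2.} Differentiate \smath{p_t(s) = \EX_{\trajectory} p_t(s\mid\trajectory)} under the expectation and insert the conditional forward equation. Using the identity from Step 1 twice, once for the inflow term and once for the outflow term, gives exactly the forward equation with the marginal rate \smath{\rate(s,a)} and marginal path \smath{p_t}. The action-rate parameterization causes no difficulty: the transition rate from \smath{s'} to \smath{s} is the sum over actions with \smath{a(s') = s}, and this sum is linear in \smath{\rate}, so it commutes with the expectation. Finally \smath{p_1 = \dataDist} follows in the same way from \smath{p_1(s\mid\trajectory) = \indicator{s=\finalState}}.

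\textbf{Main obstacle.} The only delicate step is justifying the interchange of \smath{d/dt} with \smath{\EX_{\trajectory}}, and of the expectation with the possibly infinite sums over \smath{s'} and \smath{a}. When the support of \smath{\coupling} and \smath{\trajDist} is finite (finite state spaces, bounded trajectory lengths), everything is a finite sum and the argument is immediate. The rates \smath{(n-i)/(1-t)} from \cref{eq:trajectory-rate} blow up only at \smath{t = 1}, so I would work on \smath{[0,1)} and obtain \smath{p_1} by continuity; \cref{prop:trajectory-path} shows that \smath{p_t(\cdot\mid\trajectory)} is polynomial in \smath{t}. In the general countable case I would assume integrability, namely \smath{\EX_{\trajectory} \sum_{a} \rate(s,a\mid\trajectory)\,p_t(s\mid\trajectory)} locally bounded in \smath{t}, and invoke dominated convergence, or equivalently work with the integrated form of the forward equation and apply Fubini.
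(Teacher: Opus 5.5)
Your proposal is correct and follows the same route as the paper's proof. You average the conditional Kolmogorov forward equation over $\trajectory \sim \trajDist$ and apply $\EX_{\trajectory}[p_t(s \mid \trajectory)\,\rate(s,a \mid \trajectory)] = p_t(s)\,\rate(s,a)$ (with $\rate(s,\cdot)$ arbitrary where $p_t(s)=0$) to both the inflow and outflow terms; your extra care with the initial condition, the interchange of $d/dt$ with the expectation, and the singularity of the rates at $t=1$ spells out regularity points the paper leaves implicit.
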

In particular, we have \smath{p_0 = \noiseDist} and \smath{p_1 = \dataDist}, so the marginal rate \smath{\rate} transports noise to data.

\textbf{Training.}\quad
Similarly to endpoint-conditioned flow matching, \cref{thm:trajectory-marginal} enables efficient simulation-free training using a Bregman divergence \smath{D}:
\begin{tcolorbox}[colback=gray!15, colframe=gray!15, boxrule=0pt, arc=2pt, left=4pt, right=4pt, top=0pt, bottom=0pt, boxsep=0pt]
\setlength{\abovedisplayskip}{0pt}
\begin{align}
\label{eq:trajectory_matching_objective}
\loss(\theta) := \EX_{\trajectory \sim \trajDist}\, \EX_{t}\, \EX_{s \sim p_t(\cdot \mid \trajectory)}\, D\!\left(\rate(s, \cdot \mid \trajectory),\, \rate^\theta(s, \cdot)\right).
\end{align}
\end{tcolorbox}
This loss is a valid proxy for the marginal matching loss in two senses. First, its gradients agree, so optimizing \smath{\loss(\theta)} is equivalent to optimizing the marginal target:
\begin{restatable}[Consistent gradients]{proposition}{traingradientsprop}
\label{prop:traj-gradients}
For any parametric family \smath{\rate^\theta},
\begin{align*}
\nabla_\theta \loss(\theta) \;=\; \nabla_\theta \EX_t\, \EX_{s \sim p_t}\, D\!\left(\rate(s, \cdot),\, \rate^\theta(s, \cdot)\right).
\end{align*}
\end{restatable}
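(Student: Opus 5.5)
The plan follows the standard conditional-to-marginal flow-matching argument, adapted to conditioning on trajectories. Write the Bregman divergence as $D(u,v) = \phi(u) - \phi(v) - \langle \nabla\phi(v), u - v\rangle$, summed (or applied jointly) over actions $a \in A(s)$. The only term in $D(\rate(s,\cdot\mid\trajectory), \rate^\theta(s,\cdot))$ that involves both $\trajectory$ and $\theta$ is linear in the conditional rate:
\[
D(\rate(s,\cdot\mid\trajectory),\rate^\theta(s,\cdot)) = \phi(\rate(s,\cdot\mid\trajectory)) - \phi(\rate^\theta(s,\cdot)) + \langle \nabla\phi(\rate^\theta(s,\cdot)), \rate^\theta(s,\cdot)\rangle - \langle \nabla\phi(\rate^\theta(s,\cdot)), \rate(s,\cdot\mid\trajectory)\rangle .
\]
The first term has zero $\theta$-gradient. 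The middle terms depend only on $(s,\theta)$. It therefore suffices to show that the expectation of each remaining term under the conditional sampling scheme equals its expectation under the marginal scheme, with $\rate(s,\cdot\mid\trajectory)$ replaced by $\rate(s,\cdot)$.

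\textbf{Step 1 (reorder the sampling).} Fix $t$. Sampling $\trajectory \sim \trajDist$ (shorthand for $\coupling$ then $\trajDist(\cdot\mid\initState,\finalState)$) and then $s \sim p_t(\cdot\mid\trajectory)$ gives a joint law on $(\trajectory, s)$. By the definition of $p_t$ in \cref{thm:trajectory-marginal}, the $s$-marginal of this joint law is exactly $p_t(s)$. Hence any function $f(s,\theta)$ satisfies $\EX_{\trajectory}\EX_{s\sim p_t(\cdot\mid\trajectory)} f(s,\theta) = \EX_{s\sim p_t} f(s,\theta)$. This handles the two middle terms.

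\textbf{Step 2 (tower property for the linear term).} By disintegration, $\EX_{\trajectory}\EX_{s\mid\trajectory}\langle \nabla\phi(\rate^\theta(s,\cdot)), \rate(s,\cdot\mid\trajectory)\rangle = \EX_{s\sim p_t}\langle \nabla\phi(\rate^\theta(s,\cdot)), \EX[\rate(s,\cdot\mid\trajectory)\mid S_t=s]\rangle$. The inner conditional expectation is, by definition, the marginal rate $\rate(s,\cdot)$ from \cref{thm:trajectory-marginal}. This step uses linearity of the inner product in its second argument and the fact that $\nabla\phi(\rate^\theta(s,\cdot))$ depends only on $s$.

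\textbf{Step 3 (reassemble).} Integrating over $t$ shows that $\loss(\theta)$ equals the marginal objective $\EX_t\EX_{s\sim p_t} D(\rate(s,\cdot),\rate^\theta(s,\cdot))$ plus the $\theta$-independent constant
\[
\EX_t\,\EX_{\trajectory}\,\EX_{s\mid\trajectory}\,\phi(\rate(s,\cdot\mid\trajectory)) \;-\; \EX_t\,\EX_{s\sim p_t}\,\phi(\rate(s,\cdot)).
\]
Differentiating in $\theta$ gives the claim.

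\textbf{Main obstacle.} The algebra is routine. The delicate part is measure-theoretic bookkeeping. First, the conditional expectation $\EX[\rate(s,a\mid\trajectory)\mid S_t=s]$ must be well defined and match the definition in \cref{thm:trajectory-marginal}, restricted to states with $p_t(s)>0$. Second, the rate $\frac{n-i}{1-t}$ blows up as $t\to 1$, so the finiteness of the constant and the exchange of $\nabla_\theta$ with the expectations need justification. I would handle this by assuming integrability and differentiability under the integral, or by restricting $t$ to $[0,1-\epsilon]$, as is standard. A further point to check is that the action sets are compatible: $\rate(s,\cdot\mid\trajectory)$ is supported on $A(s)$, which depends only on $s$, so the inner product lives in a common space for fixed $s$.
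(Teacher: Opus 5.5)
Your proposal is correct and is essentially the paper's argument: write $D$ in Bregman form, note that the only term coupling $\theta$ with the target is linear in the target, and apply the tower property with $\rate(s,\cdot)=\EX[\rate(s,\cdot\mid\trajectory)\mid S_t=s]$ from \cref{thm:trajectory-marginal}. The one difference is that you prove the identity at the level of the losses, showing they differ by a $\theta$-independent constant; this implicitly requires $\EX\,\phi(\rate(s,\cdot\mid\trajectory))<\infty$. The paper instead takes the conditional expectation of $\nabla_\theta D$ directly, which is affine in the target and so only needs integrability of the gradient estimator.
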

Second, with a sufficiently expressive family, the minimizer recovers the data-generating flow:
\begin{restatable}[Consistent optima]{corollary}{trainoptimacor}
\label{cor:traj-optima}
If the family \smath{\rate^\theta} contains the marginal rate \smath{\rate}, then any minimizer of \smath{\loss(\theta)} equals \smath{\rate} for almost all $(s, t)$ with $p_t(s) > 0$ and generates the marginal probability path \smath{p_t}.
\end{restatable}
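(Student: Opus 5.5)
The plan is to derive the corollary from \cref{prop:traj-gradients} and \cref{thm:trajectory-marginal}, working one level deeper than gradients. The proof of \cref{prop:traj-gradients} should give a stronger identity: $\loss(\theta)$ and the marginal loss
\[
\loss_{\mathrm{marg}}(\theta) := \EX_t \EX_{s \sim p_t} D\bigl(\rate(s,\cdot), \rate^\theta(s,\cdot)\bigr)
\]
differ by a constant $C$ independent of $\theta$. To see this, write the Bregman divergence as $D(u,v) = \phi(u) - \phi(v) - \langle \nabla\phi(v), u - v\rangle$. This expression is affine in its first argument up to the term $\phi(u)$, which does not depend on $\theta$. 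Rewrite the outer expectations as $\EX_t \EX_{s \sim p_t} \EX_{\trajectory \sim \trajDist(\cdot \mid S_t = s)}$, which uses the definition of $p_t$ as the mixture in \cref{thm:trajectory-marginal}. The inner conditional expectation of $\rate(s,\cdot\mid\trajectory)$ is exactly $\rate(s,\cdot)$. Hence $\loss(\theta) = \loss_{\mathrm{marg}}(\theta) + C$, where $C = \EX\bigl[\phi(\rate(s,\cdot\mid\trajectory)) - \phi(\rate(s,\cdot))\bigr] \ge 0$ by Jensen. If this constant-offset form is not stated explicitly in the proof of \cref{prop:traj-gradients}, I would re-derive it in this way rather than rely on gradients alone, since equal gradients only give equality up to a constant on connected parameter sets.

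The argument then has three steps.

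First, since $D \ge 0$ and $D(u,u) = 0$, we have $\loss_{\mathrm{marg}} \ge 0$. By hypothesis there is a $\theta^*$ with $\rate^{\theta^*} = \rate$, so $\loss_{\mathrm{marg}}(\theta^*) = 0$. Therefore $\min_\theta \loss = C$, and $\theta^\star$ minimizes $\loss$ exactly when $\loss_{\mathrm{marg}}(\theta^\star) = 0$.

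Second, $\loss_{\mathrm{marg}}(\theta^\star) = 0$ means the nonnegative integrand $D(\rate(s,\cdot), \rate^{\theta^\star}(s,\cdot))$ vanishes for almost every $t$ and every $s$ with $p_t(s) > 0$. The state space is discrete, so "almost all" only concerns $t$. Strict convexity of $\phi$ gives $D(u,v) = 0 \Rightarrow u = v$, so $\rate^{\theta^\star}(s,\cdot) = \rate(s,\cdot)$ there.

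Third, to show that $\rate^{\theta^\star}$ generates $p_t$, I would invoke \cref{thm:trajectory-marginal}: $\rate$ satisfies the Kolmogorov forward equation
\[
\partial_t p_t(s) = \sum_{s'} \sum_{a(s') = s} p_t(s')\,\rate(s',a) - p_t(s)\sum_{a \in A(s)} \rate(s,a).
\]
Every occurrence of the rate in this equation is multiplied by $p_t$ evaluated at the state where the rate is taken. Replacing $\rate$ by $\rate^{\theta^\star}$ therefore changes nothing on $\{p_t > 0\}$, and also nothing on a $t$-null set, which does not affect the integrated equation. So $p_t$ solves the forward equation for $\rate^{\theta^\star}$ with initial condition $p_0 = \noiseDist$.

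The main obstacle is the final claim that $\rate^{\theta^\star}$ \emph{generates} $p_t$, rather than merely that $p_t$ solves its forward equation. Deducing generation requires uniqueness of solutions to the forward equation, which needs regularity of the rates; for instance, \cref{eq:trajectory-rate} blows up like $1/(1-t)$. It also leaves open the behaviour of $\rate^{\theta^\star}$ on states with $p_t(s) = 0$, where it may be arbitrary. If the paper's notion of "generates" is the forward equation in integrated form, the argument above suffices. Otherwise, I would restrict to $t \in [0, 1-\epsilon]$, where the rates are bounded, argue uniqueness there, and pass to the limit $\epsilon \to 0$. Since the process started from $p_0$ never enters zero-probability states, mass-conservation shows that the arbitrary values of $\rate^{\theta^\star}$ off the support of $p_t$ do not matter.
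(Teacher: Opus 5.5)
Your proposal is correct and follows the paper's proof. Both arguments relate $\loss$ to the marginal loss, use $D\ge 0$ with equality only for equal arguments to get $\rate^{\theta^\star}=\rate$ for a.e.\ $t$ on $\{p_t>0\}$, observe that the fluxes $p_t\rate^{\theta^\star}$ and $p_t\rate$ coincide so that $p_t$ solves the integrated forward equation for $\rate^{\theta^\star}$, and conclude by uniqueness from $p_0=\noiseDist$. Your one deviation is deriving $\loss=\loss_{\mathrm{marg}}+C$ directly instead of passing from equal gradients to equal minimizers; this is a slightly more rigorous version of the paper's first step, since it avoids the implicit connectedness assumption, but it still needs $C<\infty$, which the paper also tacitly assumes.
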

\cref{fig:training} summarizes the resulting training procedure. Importantly, trajectory matching training remains simulation-free as \smath{p_t(s \mid \trajectory)} can be efficiently sampled from (\cref{prop:trajectory-path}) and \smath{\rate(s, a \mid \trajectory)} can be computed in closed-form. Sampling from \smath{\coupling} and \smath{\trajDist} is application-specific; we discuss efficient instantiations in \cref{sec:instantiations}. In our experiments, overlapping preprocessing with GPU training makes the additional wall-clock cost small (\cref{app:runtime}).

\textbf{Sampling.}\quad
Sampling proceeds as in endpoint-conditioned DFM: draw \smath{\initState \sim \noiseDist} and integrate the learned CTMC with rates \smath{\rate^\theta} over \smath{t \in [0, 1]}. With Euler integration, each step of size \smath{h} samples each available action \smath{a} independently with probability \smath{h\,\rate^\theta(s, a)} and applies the sampled compatible actions to the current state; see \cref{app:training_sampling}.
Although the trajectory-conditioned targets \smath{\rate(\cdot \mid \trajectory)} are sparse---only the next action in the sampled trajectory has positive rate---the learned model \smath{\rate^\theta} approximates the \emph{marginal} rate (\cref{thm:trajectory-marginal}), which can place mass on many currently available actions. This allows multiple actions per Euler step, so sampling does not degenerate to sequential generation.

In summary, trajectory matching separates three modeling decisions: i) \textbf{the state and action space}, which determines what transformations are possible; ii) \textbf{the source and target distributions \smath{\noiseDist, \dataDist} and coupling \smath{\coupling}}, which specify \emph{what} is transported; and iii) \textbf{the trajectory distribution \smath{\trajDist(\trajectory \mid \initState, \finalState)}}, which specifies \emph{how} endpoints are connected through action sequences.

\subsection{Variance reduction via Rao–Blackwellization}
\label{sec:rao-blackwell}
As we have seen, the trajectory-matching objective \cref{eq:trajectory_matching_objective} recovers the correct marginal flow by supervising the model on a single next action per sampled trajectory. We can sharpen the training signal by observing that multiple trajectories in \smath{\trajDist(\cdot \mid \initState, \finalState)} may pass through the same state \smath{s} at time \smath{t}, each prescribing a potentially different next action. We now show how to supervise the model on all such actions simultaneously, reducing gradient variance while preserving the optimum. To this end, define the posterior distribution over trajectories given the current state \smath{s} and endpoints \smath{(\initState,\finalState)}:
\begin{align*}
    p_t(\trajectory \mid s, \initState, \finalState) := \frac{p_t(s \mid \trajectory) \cdot \trajDist(\trajectory \mid \initState, \finalState)}{\sum_{\trajectory'} p_t(s \mid \trajectory') \cdot \trajDist(\trajectory' \mid \initState, \finalState)}
\end{align*}
The \emph{Rao-Blackwellized target} is then the expectation of \smath{\rate(s,a \mid \trajectory)} under this posterior:
\begin{align*}
    \rbrate(s,a \mid \initState, \finalState) := \EX_{\trajectory \sim p_t(\cdot \mid s, \initState, \finalState)} \rate(s,a \mid \trajectory).
\end{align*}
When this expectation can be computed efficiently (as we show for several trajectory families in \cref{sec:instantiations}), the Rao--Blackwellized rate can replace \smath{\rate(\cdot \mid \trajectory)} as the training target in \cref{eq:trajectory_matching_objective}. This preserves the expected gradient and the optimum while reducing gradient variance (\cref{prop:rao-blackwell-consistency,prop:rao-blackwell-variance}). Our ablation suggests that this yields faster convergence to comparable final performance (\cref{app:rb-ablation}).

In summary, conditioning on the full trajectory gives a general recipe for defining valid conditional flows over arbitrary action spaces, including settings where endpoint-conditioned rates cannot be specified directly. This yields a simulation-free training objective whose optimum recovers the 
marginal data-generating flow. 
Gradient variance can be further reduced by Rao-Blackwellizing the target over all trajectories consistent with the current state and endpoints.

\section{Instantiations of Trajectory Matching}
\label{sec:instantiations}
\suppressfloats[t]
\begin{figure}[t]
    \centering
    \setlength{\abovecaptionskip}{2pt}
    \includegraphics[width=1\textwidth]{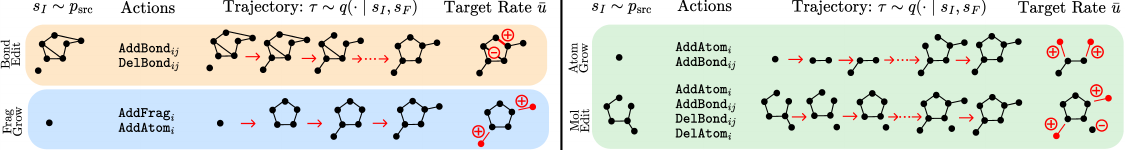}
    \caption{Four trajectory-matching instantiations on molecular graphs, \textsc{BondEdit} (factorized bond edits), \textsc{FragGrow} (fragment-based growth), \textsc{AtomGrow} (atom/bond growth from a seed), and \textsc{MolEdit} (noisy-initial editing), each shown with a sampled source $\initState$, action set, example trajectory $\trajectory$, and Rao-Blackwellized target rate $\rbrate$. Full definitions in \cref{app:instantiations}, more instantiations in \cref{fig:connections-overview}} 
    \label{fig:overview_mol}
    \vspace{-0.3cm}
\end{figure}
We now instantiate trajectory matching at increasing levels of generality. The two central design choices are the action space and the trajectory distribution \smath{\trajDist}. We illustrate these choices through a running example of molecular graph generation (\cref{fig:overview_mol}), moving from independent actions to settings outside the scope of factorized flows.
We begin with the most common case: for each endpoint pair, trajectories differ only in the \emph{order} of actions, while the action set itself is fixed.

\begin{restatable}[Fixed-action distribution]{definition}{fixedactiondistributiondef}
    \label{def:fixed-action-distribution}
    \smath{\trajDist} is a \emph{fixed-action distribution} if for every
    \smath{(\initState, \finalState)}, all
    trajectories
    \smath{\trajectory \in \supp\trajDist(\cdot \mid \initState, \finalState)}
    use the same set of actions, denoted
    \smath{\actionSet(\initState, \finalState)}.
\end{restatable}

When the same property holds at intermediate states, we write \smath{\actionSet(s, \finalState)} for the set of actions remaining after \smath{s}: that is, all trajectories passing through \smath{s} on their way to
\smath{\finalState} share the same remaining action set, independent of how \smath{s} was reached.

\hlOrange{\textbf{Independent fixed-action distribution}}

The simplest case is when the required actions are pairwise independent and the trajectory distribution is uniform over their orderings. For example, in \textsc{BondEdit}, which mirrors DeFoG \citep{defog}, atoms are fixed by the coupling \smath{\coupling} and only edges can be inserted or deleted. Any ordering of these independent edge edits yields a valid trajectory.
\begin{restatable}[Independent fixed-action distribution]{definition}{independentfixedactiondistributiondef}
    \label{def:independent-fixed-action-distribution}
    A fixed-action distribution \smath{\trajDist} is \emph{independent} if for every \smath{(\initState, \finalState)}, the actions in \smath{\actionSet(\initState, \finalState)} are pairwise independent, and \smath{\trajDist(\cdot \mid \initState, \finalState)} is uniform over their orderings.
\end{restatable}
Sampling from \smath{\trajDist} is efficient whenever computing \smath{\actionSet} is. The  rate $\rbrate$ also admits a closed form:
\begin{restatable}{proposition}{independentratesprop}
    \label{prop:independent_rates}
    Let \smath{\trajDist} be an independent fixed-action distribution. Then for a state \smath{s} on a trajectory from \smath{\initState} to \smath{\finalState} we have: \smath{\rbrate(s,a \mid \initState, \finalState) = \indicator{a \in \actionSet(s, \finalState)} \cdot 1/(1-t)}.
\end{restatable}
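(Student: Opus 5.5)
We compute the posterior $p_t(\trajectory \mid s, \initState, \finalState)$ directly, using \cref{prop:trajectory-path} and the uniformity of $\trajDist$. Let $n = |\actionSet(\initState,\finalState)|$. Every $\trajectory$ in the support of $\trajDist(\cdot \mid \initState,\finalState)$ is an ordering of the same $n$ actions. By \cref{prop:trajectory-path}, $p_t(s \mid \trajectory) = \sum_i \indicator{s = s_i}\,\prob[K_t = i]$ with $K_t \sim \text{Bin}(n,t)$, which does not depend on $\trajectory$ beyond the indicator.

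\textbf{Fixing the index.} We first argue that if $s$ lies on some trajectory $\trajectory$, then $s = s_k$ for a unique $k$. Moreover, $k$ is the same for every trajectory through $s$. Under the fixed-action assumption applied at $s$, the remaining set $\actionSet(s,\finalState)$ is shared by all such trajectories, so $k = n - |\actionSet(s,\finalState)|$. Consequently $p_t(s \mid \trajectory) = \prob[K_t = k]\cdot\indicator{s \in \trajectory}$. Since $\trajDist$ is uniform, the posterior is uniform over the set $T_s$ of orderings that pass through $s$.

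\textbf{Describing $T_s$ and averaging the rate.} A trajectory passes through $s$ exactly when its first $k$ actions are those in $\actionSet(\initState,\finalState)\setminus\actionSet(s,\finalState)$, in some order, and its remaining $n-k$ actions are those in $\actionSet(s,\finalState)$, in some order. Pairwise independence guarantees that every ordering of the prefix actions reaches the same state $s$. Pairwise independence also guarantees that every ordering of the suffix actions is applicable from $s$. Hence the suffix orderings range uniformly over all $(n-k)!$ permutations of $\actionSet(s,\finalState)$, independently of the prefix. By \cref{eq:trajectory-rate}, $\rate(s,a\mid\trajectory) = \indicator{a = a_{k+1}}\,(n-k)/(1-t)$. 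Under the uniform suffix distribution, the first suffix action is uniform on $\actionSet(s,\finalState)$. Therefore
\[
\rbrate(s,a\mid\initState,\finalState) = \frac{\indicator{a\in\actionSet(s,\finalState)}}{n-k}\cdot\frac{n-k}{1-t} = \indicator{a\in\actionSet(s,\finalState)}\cdot\frac{1}{1-t}.
\]
Actions outside $\actionSet(s,\finalState)$ receive zero rate.

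\textbf{Main obstacle.} The delicate step is the claim that independence makes the reachable orderings factor as (any prefix order)$\times$(any suffix order). It needs an induction showing that pairwise independence, defined as a local commutation condition, extends to arbitrary permutations. Swapping adjacent independent actions preserves applicability and the resulting state, and adjacent transpositions generate all permutations. This is the standard bubble-sort argument: at each swap, the local condition must hold at the current intermediate state, which requires both actions to be available there. We get this by propagating availability along the sequence inductively, using $a_1 \in A(a_2(s))$.
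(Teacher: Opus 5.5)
Your main line is the paper's proof. The position $k$ of $s$ is the same on every trajectory through it, so the Binomial factor from \cref{prop:trajectory-path} cancels and the posterior is uniform over those trajectories. The next action is then uniform on $\actionSet(s,\finalState)$, and $\frac{1}{n-k}\cdot\frac{n-k}{1-t}=\frac{1}{1-t}$. The paper compresses the uniformity of the suffix into ``by symmetry''; your index-fixing step is a useful expansion of it.

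The part to correct is your ``main obstacle'' paragraph, whose propagation argument does not work. The independence condition only applies at a state where \emph{both} actions are already available, and its clause $a_1\in A(a_2(s))$ pushes availability forward. To swap adjacent actions $b_i,b_{i+1}$ at the state $x$ preceding them, you need $b_{i+1}\in A(x)$, but you only know $b_{i+1}\in A(b_i(x))$, so the induction has no base. In fact, pairwise independence alone does not make all orderings applicable. Suppose $A(x)=\{a\}$ and $A(a(x))=\{b\}$, with $\initState=x$ and $\finalState=b(a(x))$. Then $a,b$ are vacuously independent, but the ordering $(b,a)$ is invalid. A distribution uniform over only the \emph{valid} orderings gives $\rbrate(x,a\mid\initState,\finalState)=2/(1-t)$ and $\rbrate(x,b\mid\initState,\finalState)=0$, contradicting the statement.

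What you should use instead is the hypothesis itself. By \cref{def:independent-fixed-action-distribution}, $\trajDist(\cdot\mid\initState,\finalState)$ is uniform over \emph{all} orderings of $\actionSet(\initState,\finalState)$, and $\trajDist$ is assumed to be supported on valid trajectories. Hence for any prefix $\pi$ reaching $s$ and any permutation $\sigma$ of $\actionSet(s,\finalState)$, the concatenation $\pi\sigma$ lies in the support and passes through $s$ at position $k$. The set of trajectories through $s$ is therefore exactly the product of the prefixes reaching $s$ with all permutations of $\actionSet(s,\finalState)$, and the uniform suffix follows immediately. With this, you need neither the bubble-sort argument nor your claim that every prefix ordering reaches $s$.
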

Thus, all remaining actions receive the same rate, reflecting the order-invariance. Importantly, factorized flow matching arises as the special case of trajectory matching with order-invariant, independent component-wise actions:
\begin{restatable}{theorem}{factorizedflowequivalenceprop}
    \label{prop:factorized-flow-equivalence}
    Consider a factorized flow as defined in \cref{eq:factorized-flow-rate} with single component actions \smath{a^i_v(s) = (s^1, \ldots, v, \ldots, s^d)}. Let \smath{\trajDist} be the corresponding independent fixed action distribution with \smath{\actionSet(\initState, \finalState) =\{a^{i}_v : v = \finalState^i \neq \initState^i\}}. Then:
    \begin{enumerate}
        \item The trajectory-matching loss (\cref{eq:trajectory_matching_objective}) and the factorized flow matching loss (\cref{eq:flow-matching-objective}) have the same gradient in \smath{\theta} and hence the same minimizers.
        \item The Rao-Blackwellized loss (\cref{prop:independent_rates}, \cref{prop:rao-blackwell-consistency}) is identical to the factorized loss (\cref{eq:flow-matching-objective}).
    \end{enumerate}
\end{restatable}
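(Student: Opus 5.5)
The plan is to reduce both claims to a comparison of the \emph{conditional path}, and then of the \emph{Rao--Blackwellized target}, between the two constructions. Everything is done for a fixed endpoint pair $(\initState,\finalState)$, with $n := \abs{\actionSet(\initState,\finalState)}$ equal to the number of components where $\initState^i \neq \finalState^i$.

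The first step is to check that the two conditional paths coincide after marginalizing over trajectories. A uniformly random ordering of the $n$ pairwise-independent single-component actions, combined with the uniform order-statistic timing of \cref{prop:trajectory-path}, is the same as assigning each differing component an i.i.d.\ $\mathrm{Unif}[0,1]$ flip time. This holds because uniform i.i.d.\ times sorted give uniform order statistics together with a uniformly random permutation, independent of each other. Hence
\[
\EX_{\trajectory\sim\trajDist(\cdot\mid\initState,\finalState)}\, p_t(s\mid\trajectory) \;=\; \prod_{i=1}^d p_t^i(s^i\mid\initState^i,\finalState^i).
\]
For components with $\initState^i=\finalState^i$, both sides are trivially constant. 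By \cref{prop:trajectory-path}, the left side equals $\binom{n}{k}^{-1}\prob[K_t=k]=t^k(1-t)^{n-k}$ for any $s$ with $k$ flipped components, which matches the right side directly. This direct comparison serves as a sanity check on the coupling argument.

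The second step, which proves claim 2, is to identify the Rao--Blackwellized target with the factorized rate. By \cref{prop:independent_rates}, $\rbrate(s,a\mid\initState,\finalState)=\indicator{a\in\actionSet(s,\finalState)}/(1-t)$. Here $\actionSet(s,\finalState)$ is exactly the set of $a^i_{\finalState^i}$ for components with $s^i=\initState^i\neq\finalState^i$. For such an $a$, we have $a^i(s^i)=\finalState^i$, and this holds precisely for those actions. So $\rbrate$ agrees with \cref{eq:factorized-flow-rate} on every $s$ in the support of the path. One caveat is that the factorized rate also assigns positive indicator to components with $s^i=\finalState^i$ only if the action changes the state, which is excluded since $a(s)\neq s$. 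Combining this with the first step, the Rao--Blackwellized objective becomes
\[
\EX_{\coupling}\EX_t\EX_{s\sim p_t(\cdot\mid\initState,\finalState)} D(\rbrate,\rate^\theta),
\]
which is literally \cref{eq:flow-matching-objective}.

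The third step proves claim 1 via a tower-property argument. The objective $\loss(\theta)$ differs from the Rao--Blackwellized loss only by replacing $\rbrate$ with $\rate(s,\cdot\mid\trajectory)$ inside the Bregman divergence. We write
\[
D(u,v)=\phi(u)-\phi(v)-\langle\nabla\phi(v),u-v\rangle .
\]
The $\theta$-dependent part is affine in $u$. Taking the conditional expectation over $\trajectory$ given $(s,t,\initState,\finalState)$, which is exactly the posterior defining $\rbrate$, therefore replaces $u$ by $\rbrate$ up to a $\theta$-independent term. This is the content of \cref{prop:rao-blackwell-consistency}, which I would invoke directly. Gradients then agree, and so do minimizers; with step 2, this establishes claim 1.

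The main obstacle is step 1, specifically verifying that the mixture of trajectory-conditioned paths, with the Binomial count and uniform ordering, equals the product path. The concern is the bookkeeping that the joint law of $(\trajectory,s)$ induces the correct posterior, rather than the marginal alone. I would handle it via the i.i.d.\ flip-time coupling, which yields the whole joint law at once.
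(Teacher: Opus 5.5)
Your proposal is correct and follows the paper's proof essentially step for step: you identify $\rbrate$ with the factorized rate via \cref{prop:independent_rates}, show that the trajectory-averaged conditional path equals the product path, and then obtain claim 1 from claim 2 together with \cref{prop:rao-blackwell-consistency}; your i.i.d.\ flip-time coupling and the check $\binom{n}{k}^{-1}\prob[K_t=k]=t^k(1-t)^{n-k}$ are the same uniform-subset/Binomial argument the paper uses for the path. The ``obstacle'' you flag in step 1 is milder than you suggest: $\rbrate$ depends only on $(s,t,\initState,\finalState)$, so that step needs only the marginal law of $s$ given the endpoints, not the joint law of trajectory and state.
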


\hlBlue{\textbf{Dependent fixed-action distribution}}

Having established that trajectory matching subsumes factorized flows, we now relax the independence assumption. Dependencies between actions arise naturally in structured domains: in hierarchy generation, adding a child requires its parent to exist; in molecular generation without a fixed atom set, inserting a bond requires both endpoint atoms to exist. A natural way to model such dependencies is through a directed acyclic graph (DAG) over actions.
\begin{restatable}{definition}{actiondependencydagdef}
\label{def:action-dependency-dag}
We equip the action set \smath{\actionSet(s, \finalState)} with a DAG whose vertices are the actions and whose edges \smath{a_i \to a_j} indicate that \smath{a_i} is a prerequisite of \smath{a_j}. We require two actions to be independent if neither is reachable from the other. A trajectory \smath{\trajectory} is consistent with \smath{\actionSet(\initState, \finalState)} if its action sequence is a topological ordering of this DAG.
\end{restatable}
Similarly to the independent fixed-action distribution, we define a dependent fixed-action distribution as uniform over all \emph{consistent} trajectories:
\begin{restatable}[Dependent fixed-action distribution]{definition}{dependentfixedactiondistributiondef}
    \label{def:dependent-fixed-action-distribution}
    We say \smath{\trajDist} is a \emph{dependent fixed-action distribution} if for all \smath{\initState, \finalState} every trajectory in the support of \smath{\trajDist(\cdot \mid \initState, \finalState)} is consistent with the dependency DAG \smath{\actionSet(\initState, \finalState)} and \smath{\trajDist(\cdot \mid \initState, \finalState)} is uniform over all such trajectories.
\end{restatable}
This family interpolates between two familiar regimes: an edgeless DAG recovers the independent fixed-action distribution, while a linear chain DAG recovers fixed-order autoregressive generation. General DAGs lie between these extremes and capture partially ordered action systems such as our \textsc{GrowHierarchy}
(a tree grown by attaching children to existing nodes) and
\textsc{FragGrow} (a molecule grown by attaching ring fragments to a
root atom); see \cref{fig:overview_mol} and \cref{app:instantiations}.

Unlike the independent case, the Rao--Blackwellized target \smath{\rbrate} can assign different rates to the remaining actions. Available actions on which many future actions depend receive higher rates, encouraging them to occur earlier. For forest-structured dependency DAGs, trajectories can be sampled and Rao--Blackwellized rates computed efficiently:
\begin{restatable}{proposition}{dependentfixedactionforestratesprop}
    \label{prop:dependent-fixed-action-forest-rates}
    Let \smath{\trajDist} be a dependent fixed-action distribution whose dependency DAG \smath{\actionSet(\initState, \finalState)} is a forest. Then for a state \smath{s} on a trajectory from \smath{\initState} to \smath{\finalState} we have:
    \begin{align*}
    \rbrate(s,a \mid \initState, \finalState) = \indicator{a \in \textup{roots}(\actionSet(s, \finalState))} \cdot \textup{desc}_{\actionSet(s, \finalState)}(a) \cdot 1/(1-t)
    \end{align*}
    where \smath{\textup{roots}} is the set of roots of the DAG and \smath{\textup{desc}(a)} is the size of the tree rooted at \smath{a} (\smath{a} included).
\end{restatable}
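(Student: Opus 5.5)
We would compute the posterior $p_t(\trajectory \mid s, \initState, \finalState)$ explicitly and then average the trajectory-conditioned rate \cref{eq:trajectory-rate} over it. Fix $s$ on some consistent trajectory and let $n = |\actionSet(\initState,\finalState)|$. By \cref{prop:trajectory-path}, $p_t(s \mid \trajectory) = \indicator{s = s_k(\trajectory)}\,\prob[K_t = k]$, where $s_k(\trajectory)$ is the $k$-th state of $\trajectory$. Because actions are fixed and the remaining action set $\actionSet(s,\finalState)$ is well defined, every trajectory through $s$ passes through it after exactly $k = n - |\actionSet(s,\finalState)|$ actions. Hence the binomial factor is the same for all such trajectories and cancels in the posterior. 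Since $\trajDist$ is uniform over consistent trajectories, the posterior is therefore uniform over consistent trajectories that pass through $s$.

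Next we would show that such trajectories are in bijection with pairs (prefix from $\initState$ to $s$, suffix from $s$ to $\finalState$). Here the suffix ranges over topological orderings of the sub-DAG $\actionSet(s,\finalState)$, and every such pair concatenates to a consistent trajectory. This uses the fixed-action property: the completed actions form a down-closed set of the DAG, and the remaining DAG is the induced sub-forest. Consequently the marginal posterior on the suffix is uniform over topological orderings of the forest $F := \actionSet(s,\finalState)$, independent of the prefix. By \cref{eq:trajectory-rate}, $\rate(s,a\mid\trajectory) = \indicator{a = \text{next action}}\cdot (n-k)/(1-t)$ with $n-k = |F|$, so
\[
\rbrate(s,a\mid\initState,\finalState) = \frac{|F|}{1-t}\,\prob_{\sigma \sim \mathrm{Unif}(\mathrm{topo}(F))}[\sigma_1 = a].
\]

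The main step is the combinatorial fact $\prob[\sigma_1 = a] = \indicator{a \in \mathrm{roots}(F)}\,\textup{desc}_F(a)/|F|$. Only roots can come first. For a root $a$, we would use the classical hook-length formula for forests: the number of linear extensions is $|F|!/\prod_{v\in F} h(v)$, where $h(v)$ is the size of the subtree at $v$. Removing root $a$ leaves the forest $F\setminus\{a\}$, whose subtree sizes are unchanged for every $v \neq a$. So
\[
\frac{\#\mathrm{topo}(F\setminus\{a\})}{\#\mathrm{topo}(F)} = \frac{(|F|-1)!}{|F|!}\,h(a) = \frac{\textup{desc}_F(a)}{|F|},
\]
which gives the stated rate.

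If we cannot cite the hook-length formula, we would prove it by induction on $|F|$. Splitting the forest into its trees $T_1,\dots,T_m$, the orderings are shuffles of orderings of the trees, counted by a multinomial coefficient. A single tree has its root first, followed by a shuffle of its child subtrees, and the product formula follows by induction. We expect the main obstacle to be the bijection/posterior step: making rigorous that the prefix and suffix decouple and that the binomial factor is the same for all trajectories through $s$. This rests on the assumption that $\actionSet(s,\finalState)$ is independent of the path to $s$, which we would take from the paper's standing convention.
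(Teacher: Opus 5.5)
Your proof is correct. Its first half follows the paper. The posterior is uniform over trajectories through $s$, the binomial factor cancels, and the suffix is uniform over topological orderings of $F=\actionSet(s,\finalState)$, giving $\rbrate=\frac{|F|}{1-t}\,L(F\setminus a)/L(F)$ for roots $a$. The paper packages this step as the general-DAG result \cref{prop:dependent-fixed-action-general-rates}. It relies on the same path-independence of $\actionSet(s,\finalState)$ that you flag.

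Where you differ is the combinatorial lemma. You evaluate $L(F\setminus a)/L(F)$ via the hook-length formula $L(F)=|F|!/\prod_v h(v)$, using that removing a root leaves every other subtree size unchanged. The paper never evaluates $L(F)$. It writes a uniform linear extension of $F$ as an independent pair: uniform linear extensions of each tree $T_j$, together with a uniform interleaving word containing $m_j$ copies of $j$. By exchangeability of positions, the first letter is $j$ with probability $m_j/|F|$, and the first element contributed by $T_j$ must be its root.

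The paper's argument is shorter and self-contained, needing only the shuffle bijection and a symmetry observation. Yours imports Knuth's formula, or proves it by the induction you sketch, which uses the same shuffle decomposition carried through to a full count. In exchange, you get the closed-form count $L(F)$ as a by-product. That makes explicit why the general linear-extension ratio becomes tractable for forests and depends only on $h(a)$.
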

The resulting target interpolates between the independent and fully dependent extremes. With no dependencies between actions, it reduces to the uniform rate from \cref{prop:independent_rates}, supervising every remaining action with equal weight. With a fully sequential chain of dependencies, all weight concentrates on the single next action, recovering fixed-order autoregressive supervision. Between these extremes, every currently available action receives a rate proportional to the size of its remaining subtree.
Rao--Blackwellized rates for general dependency DAGs are given in \cref{prop:dependent-fixed-action-general-rates}, but computing them exactly is intractable in general. For the forest-plus-sink dependencies arising in \textsc{AtomGrow} and \textsc{MolEdit}, we derive an alternative conditional path with exact, tractable rates (\cref{app:non-forest-approx}).

Dependent fixed action distributions extend trajectory matching to partially ordered action systems that factorized endpoint-conditioned flows cannot express. Conditioning on concrete trajectories yields targets that span the full spectrum from unordered to fully sequential actions in a unified way.

\hlGreen{\textbf{Beyond fixed action sets}}

\label{sec:beyond-fixed}
So far we assumed that all trajectories between a given endpoint pair use the same set of actions. In some settings, however, different trajectories between \smath{\initState} and \smath{\finalState} involve different action sets, for example because the target admits multiple decompositions. In \textsc{AtomGrow}, a given ring bond may be created when a new atom is attached by \texttt{AddAtom} or by a separate \texttt{AddBond} action after both endpoints exist. One way to obtain tractable trajectory distributions in such settings is to introduce a latent variable \smath{z} specifying the decomposition. We express the trajectory distribution as a mixture, which can be sampled efficiently whenever both the latent distribution and the conditional trajectory distribution can be sampled efficiently:
\begin{align}
\label{eq:beyond-fixed}
\trajDist(\trajectory \mid \initState, \finalState) = \EX_{z \sim p(z \mid \initState, \finalState)} \trajDist(\trajectory \mid z, \initState, \finalState).
\end{align}
For \textsc{AtomGrow}, \smath{z} specifies a spanning tree rooted at the seed atom: tree edges are created by \texttt{AddAtom} actions, while other edges require \texttt{AddBond} actions. Conditional on \smath{z}, the action set and its dependencies are fixed again, forming a forest-plus-sink structure for which we have a tractable conditional path with exact rates (\cref{app:non-forest-approx}). \textsc{MolEdit} uses a similar construction (\cref{app:instantiations}).

Importantly, we can also understand the existing insertion- and deletion-based text model EditFlows \citep{edit_flows} as an instantiation of trajectory matching over a variable action set; see \cref{app:editflows}.

In summary, trajectory matching provides a flexible and principled framework for designing discrete flows over domain-specific action spaces. It subsumes classical endpoint-conditioned factorized flows as a special case, while naturally extending to complex actions, partial dependencies, and structured generation processes for which direct endpoint-conditioned targets are difficult to specify.

\section{Empirical Results}
\label{sec:results}
\suppressfloats[t]
\begin{table*}[t]
\setlength{\abovecaptionskip}{0pt}
\centering
\newsavebox{\treecomparisonbox}
\newsavebox{\moleculecomparisonbox}
\providecommand{\std}[1]{$_{\pm #1}$}
\providecommand{\gc}{\cellcolor{gray!15}}
\sbox{\treecomparisonbox}{%
  \setlength{\tabcolsep}{3pt}%
  \renewcommand{\arraystretch}{1.07542}%
  \begin{tabular}{clcc}
    \toprule
    & \multicolumn{3}{c}{Tree} \\
    \cmidrule(lr){2-4}
    & Model & V.U.N. $\uparrow$ & Ratio $\downarrow$ \\
    \midrule
    \multirow{5}{*}{\rotatebox[origin=c]{90}{\footnotesize Fixed-size}} & DiGress                   & 90.0  & \textbf{1.6} \\
     & EDGE                        & 0.0   & 850.7 \\
     & BwR                & 0.0   & 11.4 \\
     & HSpectre (one-shot) & 82.5 & 2.1 \\
     & DeFoG                       & $96.5_{\pm 2.6}$ & $\mathbf{1.6}_{\pm 0.4}$ \\
    \midrule
    \multirow{6}{*}{\rotatebox[origin=c]{90}{\footnotesize Variable-size}} & GraphRNN & 0.0 & 607.0 \\
     & GRAN & 0.0 & 607.0 \\
     & BiGG & 75.0 & 5.2 \\
     & GraphGen & 95.0 & 33.2 \\
     & HSpectre & \textbf{100} & 4.0 \\
     & \cellcolor{gray!15} GrowHierarchy & \cellcolor{gray!15} $\textbf{100.0}_{\pm 0.0}$ & \cellcolor{gray!15} $2.5_{\pm 0.6}$ \\
    \bottomrule
\end{tabular}
}
\sbox{\moleculecomparisonbox}{%
  \setlength{\tabcolsep}{4pt}%
  \renewcommand{\arraystretch}{1.0}%
  \begin{tabular}{cl ccccc ccccccc}
\toprule
& & \multicolumn{5}{c}{ZINC} 
& \multicolumn{7}{c}{MOSES} \\
\cmidrule(lr){3-7}\cmidrule(lr){8-14}
& Model 
& Val.\,$\uparrow$ & Uniq.\,$\uparrow$ & FCD\,$\downarrow$ & NSPDK\,($\times 10^3$)\,$\downarrow$ & Scaf\,$\uparrow$
& Val.\,$\uparrow$ & Uniq.\,$\uparrow$ & Nov.\,$\uparrow$ & Filt.\,$\uparrow$ & FCD\,$\downarrow$ & SNN\,$\uparrow$ & Scaf\,$\uparrow$ \\
\midrule
& Training set 
& 100.0 & 100.0 & 0.19 & 0.1 & 0.48 
& 100.0 & 100.0 & 0.0 & 100.0 & 0.01 & 0.64 & 99.1 \\
\midrule
\multirow{6}{*}{\rotatebox[origin=c]{90}{\footnotesize Fixed-size}}
& DiGress~\citep{digress} 
& 95.0 & -- & 3.48 & 2.1 & 0.42 
& 85.7 & 100.0 & 95.0 & 97.1 & \underline{1.19} & 0.52 & 14.8 \\
& DisCo~\citep{disco} 
& -- & -- & -- & -- & -- 
& 88.3 & 100.0 & \underline{97.7} & 95.6 & 1.44 & 0.50 & \underline{15.1} \\
& Cometh~\citep{cometh} 
& -- & -- & -- & -- & -- 
& 90.5 & 99.9 & 92.6 & \underline{99.1} & 1.27 & \underline{0.54} & \textbf{16.0} \\
& GBD~\citep{gbd} 
& 97.9 & -- & 2.25 & 1.8 & 0.50 
& -- & -- & -- & -- & -- & -- & -- \\
& DeFoG~\citep{defog} 
& \textbf{99.2\std{0.1}} & \underline{99.99\std{0.01}} & 1.43\std{0.02} & 0.8\std{0.1} & \textbf{0.59\std{0.01}} 
& \underline{92.8} & 99.9 & 92.1 & 98.9 & 1.95 & \textbf{0.55} & 14.4 \\
& \gc BondEdit 
& \gc 98.8\std{0.1} & \gc 99.98\std{0.02} & \gc 1.41\std{0.02} & \gc \underline{0.5\std{0.0}} & \gc \underline{0.56\std{0.01}} 
& \gc 84.9 & \gc 100.0 & \gc 97.0 & \gc 96.6 & \gc 1.38 & \gc 0.50 & \gc 14.8 \\
\midrule
\multirow{4}{*}{\rotatebox[origin=c]{90}{\footnotesize Variable-size}}
& GrIDDD~\citep{griddd}
& 85.3 & 99.95 & 4.30 & 2.3 & 0.34 
& 79.6 & 99.9 & 98.5 & 84.7 & 1.67 & 0.48 & 14.9 \\
& \gc MolEdit 
& \gc 89.6\std{0.1} & \gc \underline{99.99\std{0.00}} & \gc 3.79\std{0.02} & \gc 4.9\std{0.0} & \gc 0.36\std{0.01} 
& \gc 76.7 & \gc 100.0 & \gc \textbf{99.3} & \gc 93.1 & \gc 2.09 & \gc 0.48 & \gc 7.7 \\
& \gc AtomGrow
& \gc 98.2\std{0.1} & \gc \textbf{100.00\std{0.00}} & \gc \underline{1.24\std{0.03}} & \gc 0.6\std{0.1} & \gc 0.52\std{0.01} 
& \gc 82.0 & \gc 100.0 & \gc 94.6 & \gc 98.1 & \gc \textbf{0.94} & \gc 0.53 & \gc 13.0 \\
& \gc FragGrow
& \gc \underline{98.9\std{0.0}} & \gc \textbf{100.00\std{0.00}} & \gc \textbf{1.22\std{0.03}} & \gc \textbf{0.4\std{0.0}} & \gc 0.51\std{0.01} 
& \gc \textbf{99.2} & \gc 99.9 & \gc 90.7 & \gc \textbf{99.3} & \gc 1.21 & \gc \textbf{0.55} & \gc 13.7 \\
\bottomrule
\end{tabular}
}
\pgfmathsetmacro{\comparisonScale}{(\the\textwidth-8pt)/(\the\wd\treecomparisonbox+\the\wd\moleculecomparisonbox)}
\pgfmathsetlengthmacro{\treecomparisonwidth}{\comparisonScale*\the\wd\treecomparisonbox}
\pgfmathsetlengthmacro{\moleculecomparisonwidth}{\comparisonScale*\the\wd\moleculecomparisonbox}
\begin{minipage}[t]{\treecomparisonwidth}
\caption{Graph generation on Tree. Full results in \cref{tab:tree_full}.}
\label{tab:tree-comparison}
\end{minipage}\hfill
\begin{minipage}[t]{\moleculecomparisonwidth}
\caption{Comparison of DFM and diffusion molecular graph generative models on ZINC~\citep{zinc} and MOSES~\citep{moses}. Best results in \textbf{bold}, second best \underline{underlined}.}
\label{tab:mol-comparison}
\end{minipage}
\par\vspace{2pt}\nointerlineskip
\scalebox{\comparisonScale}{\usebox{\treecomparisonbox}}\hfill
\scalebox{\comparisonScale}{\usebox{\moleculecomparisonbox}}
\vspace{-0.5cm}
\end{table*}

We evaluate trajectory matching on tree and molecular graph generation 
and show that: (i) domain-aligned action spaces can enforce \emph{structural 
validity by construction}, which factorized flows cannot; (ii) our 
framework supports \emph{diverse action spaces} over the same domain, where 
domain-aligned ones can outperform factorized baselines; and (iii) the 
resulting models \emph{transfer without retraining} to structured downstream 
tasks such as substructure extension and linking. All instantiations 
share the same training and sampling procedure and graph transformer 
backbone~\citep{digress}; only the action space, source distribution, 
and trajectory distribution differ. Setup details 
are in \cref{app:setup}.

\textbf{Hierarchy generation.}\label{sec:exp-hierarchy}
Our \textsc{GrowHierarchy} instantiation generates rooted trees using a
single action type that attaches a new child to an existing node.
Because the action space only contains tree-preserving operations, every
reachable state is a connected tree and \emph{validity is guaranteed by
construction}, without filtering or projection. On the Tree dataset
\citep{tree}, all evaluated samples are also unique and novel, yielding
100\% V.U.N. \textsc{GrowHierarchy} achieves a Ratio of 2.5, the best
distributional graph-statistics score among the evaluated variable-size
methods (\cref{tab:tree-comparison}). The advantage of our domain-aligned
actions is particularly clear at low sampling budgets: at just 10 Euler
steps, well below the number of actions per trajectory, \textsc{GrowHierarchy}
maintains a comparable Ratio and 100\% validity, while DeFoG's validity drops to 8\%
(\cref{fig:tree-validity-main,fig:tree-step-sweep}).

\textbf{Molecular graph generation.}\label{sec:exp-molecules}
We instantiate trajectory matching with four different action spaces 
over the same domain, illustrating the flexibility of the framework: 
\textsc{BondEdit} (bond edits on a fixed-size atom set, mirroring 
existing factorized approaches), \textsc{AtomGrow} (atom and bond 
insertions from a seed atom), \textsc{FragGrow} (ring fragment 
insertions from a dataset-specific vocabulary), and \textsc{MolEdit} 
(insertions and deletions on noisy initial molecules); see 
\cref{fig:overview_mol}.
\cref{tab:mol-comparison} reports results on ZINC and
MOSES against fixed-size and variable-size baselines.
QM9 results are in \cref{tab:qm9}.

Three findings stand out. First, \textsc{BondEdit}, our factorized 
instantiation, matches existing factorized DFM and discrete diffusion 
baselines (despite using no sampling-time optimizations), showing that 
the generality of trajectory matching comes at no cost in the 
factorized regime (\cref{prop:factorized-flow-equivalence}). Second, 
\textsc{AtomGrow} and \textsc{FragGrow} outperform the variable-size
diffusion baseline GrIDDD~\citep{griddd} on most metrics, suggesting that
directly modeling size-changing actions offers advantages over augmenting
factorized diffusion with auxiliary size prediction. Third,
\textsc{FragGrow}, which encodes the strongest inductive bias by 
treating ring fragments as atomic actions, achieves very strong 
DFM results on ZINC and MOSES, most notably lifting MOSES validity 
from 92.8\% to 99.2\%. This empirically supports the value of aligning 
the action space with the structure of the domain.

\suppressfloats[t]
\begin{table*}[t]
\setlength{\abovecaptionskip}{0pt}
\begin{minipage}[t]{0.29\textwidth}
\captionof{figure}{Tree validity by sampling steps.}
\label{fig:tree-validity-main}
\end{minipage}\hfill
\begin{minipage}[t]{0.69\textwidth}
\caption{Substructure extension and linking. Scores are valid-and-connected percentages ($\uparrow$). Full results in \cref{fig:extension-heatmap,fig:linking-heatmap}.}
\label{tab:downstream}
\label{tab:downstream-linking}
\end{minipage}
\par\nointerlineskip
\makebox[0.29\textwidth][c]{%
\raisebox{-2pt}[0pt][0pt]{\adjustbox{valign=b}{%
\includegraphics[width=0.275\textwidth]{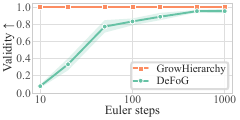}}}}\hfill
\adjustbox{valign=b}{%
\begin{minipage}{0.69\textwidth}
\centering
\newcommand{\downstreamimage}[2][]{%
  \tikz[baseline=(molecule.base)]{%
    \node[inner sep=0pt,outer sep=0pt,blend mode=multiply] (molecule)
      {\includegraphics[#1]{#2}};}}
\setlength{\tabcolsep}{1.5pt}
\renewcommand{\arraystretch}{1.0}
\small
\resizebox{\linewidth}{!}{%
\begin{tabular}{@{}cccc@{\hspace{9pt}}cccc@{}}
\toprule
\multicolumn{4}{c}{\textbf{Substructure extension}} & \multicolumn{4}{c}{\textbf{Substructure linking}} \\
\cmidrule(lr){1-4}\cmidrule(lr){5-8}
Seed & \textsc{BondEdit} & \textsc{AtomGrow} & Sample \textsc{AtomGrow} & Seed & \textsc{BondEdit} & \textsc{AtomGrow} & Sample \textsc{AtomGrow} \\
\midrule
\adjustbox{valign=m}{\downstreamimage[width=0.0861\linewidth]{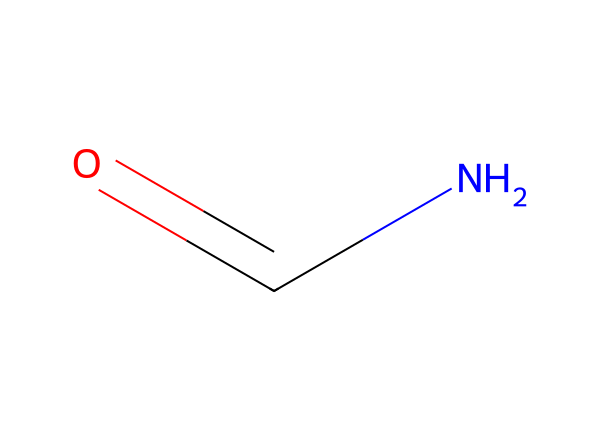}}
 & 50.4 & \textbf{98.4} & \adjustbox{valign=m}{\downstreamimage[width=0.20\linewidth,trim={0 25bp 0 25bp},clip]{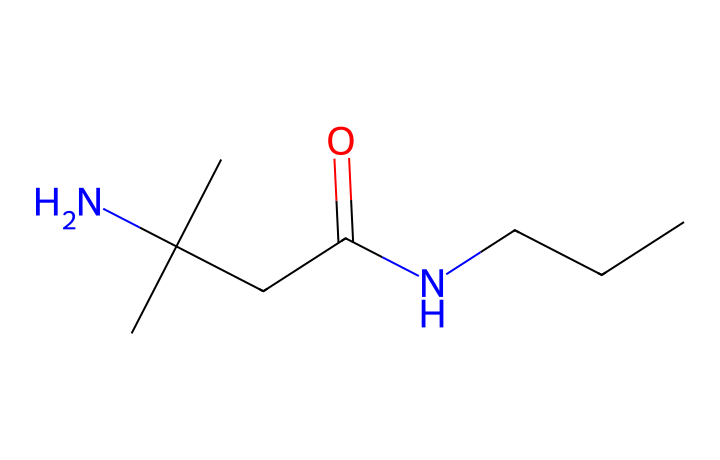}} & \makebox[0.0812\linewidth][c]{\adjustbox{valign=m}{\downstreamimage[width=0.0812\linewidth,viewport=12 646 220 750,clip]{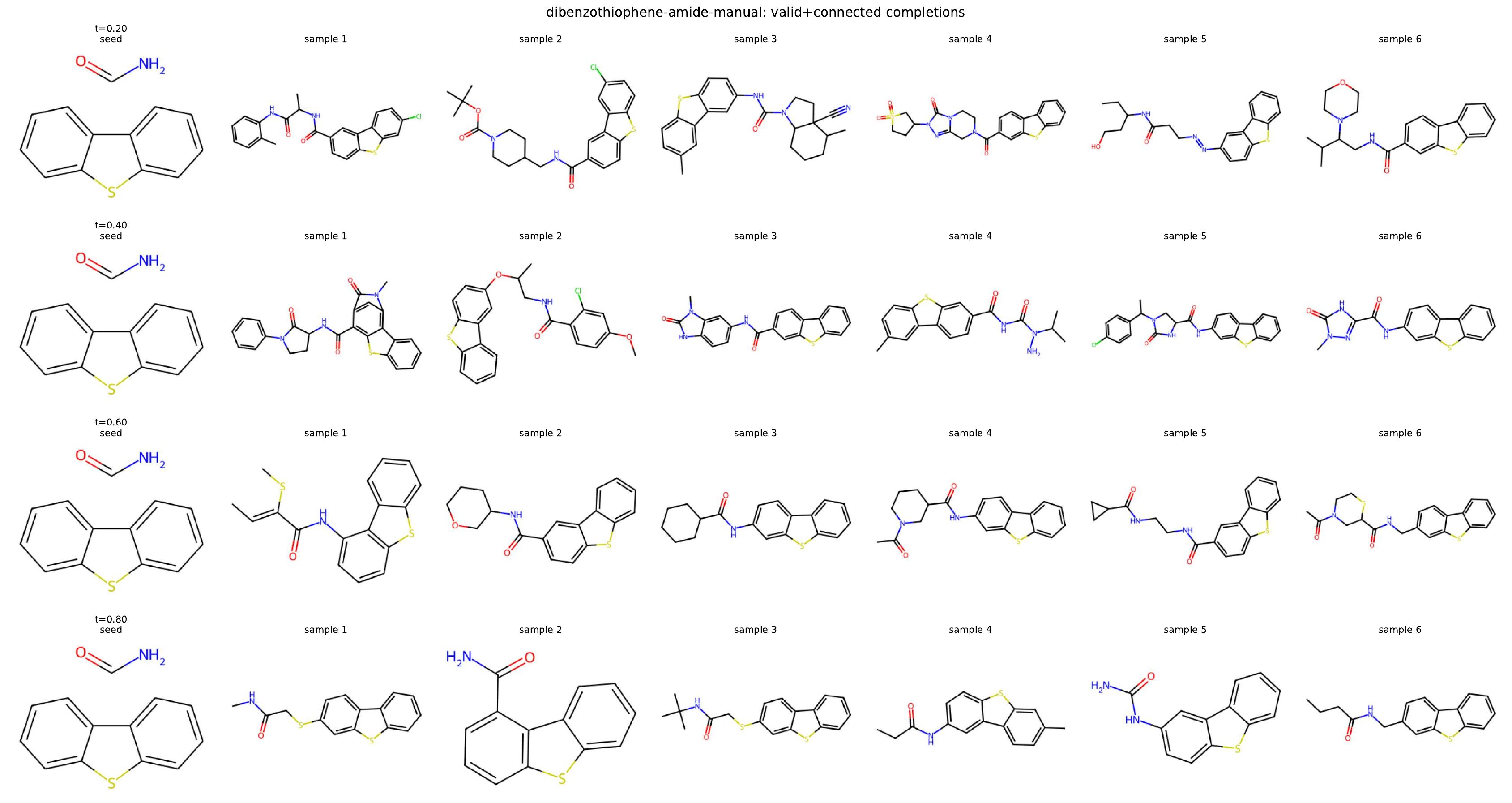}}}\,$+$\,\adjustbox{valign=m}{\downstreamimage[width=0.0431\linewidth,viewport=77 768 179 812,clip]{figures/substructure/dibenzothiophene-amide_linking_grid.pdf}}
 & 39.8 & \textbf{71.1} & \adjustbox{valign=m}{\downstreamimage[width=0.20\linewidth,viewport=240 672 456 768,clip]{figures/substructure/dibenzothiophene-amide_linking_grid.pdf}} \\[-8pt]
\adjustbox{valign=m}{\downstreamimage[width=0.0861\linewidth]{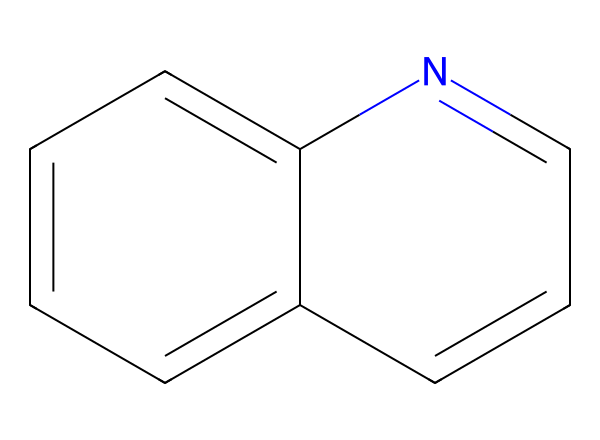}}
 & 31.3 & \textbf{91.8} & \adjustbox{valign=m}{\downstreamimage[width=0.20\linewidth,trim={0 25bp 0 25bp},clip]{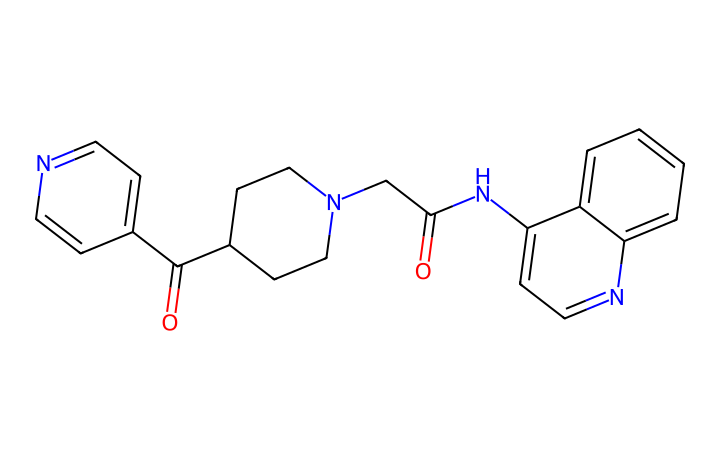}} & \makebox[0.0812\linewidth][c]{\adjustbox{valign=m}{\downstreamimage[width=0.0685\linewidth,viewport=48 728 186 812,clip]{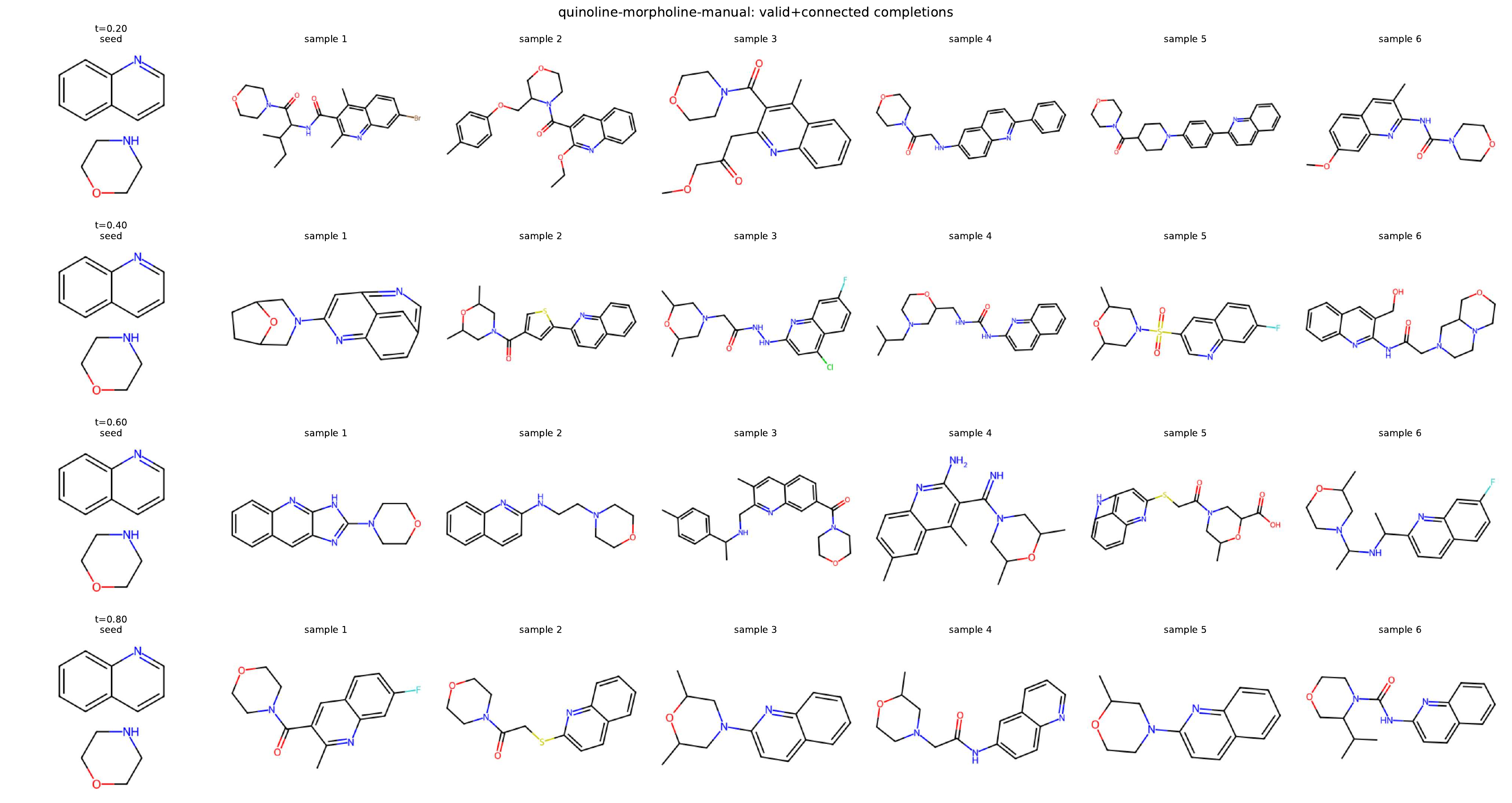}}}\,$+$\,\adjustbox{valign=m}{\downstreamimage[width=0.0431\linewidth,viewport=77 646 159 720,clip]{figures/substructure/quinoline-morpholine-linking_grid.pdf}}
 & 60.9 & \textbf{76.6} & \adjustbox{valign=m}{\downstreamimage[width=0.20\linewidth,viewport=934 672 1152 782,clip]{figures/substructure/quinoline-morpholine-linking_grid.pdf}} \\[-1pt]
\midrule
All 11 seeds & 33.8 & \textbf{94.6} & & All 6 pairs & 57.0 & \textbf{78.8} & \\
\bottomrule
\end{tabular}%
}

\end{minipage}}
\par\vspace{-8pt}
\end{table*}

\textbf{Downstream applications.}
Trajectory matching naturally supports zero-shot transfer to structured
downstream tasks by allowing generation to start from meaningful partial
states. We demonstrate this on two tasks: \textit{substructure extension}
(extend a partial substructure into a valid molecule containing it) and
\textit{substructure linking} (join two disjoint substructures into a
single valid molecule). A single \textsc{AtomGrow} model trained on ZINC
with a random number of root atoms accommodates
both directly: extension by setting \smath{s_I} to the given substructure,
and linking by setting \smath{s_I} to the union of two disconnected
substructures (see \cref{app:atomgrow,app:downstream}).
While \textsc{AtomGrow} can generate extensions and linkers without a fixed
target size, adapting the factorized \textsc{BondEdit} baseline requires
sampling a target size in advance and freezing the conditioning
substructure(s). \textsc{AtomGrow} substantially outperforms the factorized baseline
on both tasks, achieving a valid-and-connected fraction of 94.6\% vs.\ 33.8\%
on extension across 11 substructures, including some not present during
training, and 78.8\% vs.\ 57.0\% on linking across 6 fragment pairs
(see \cref{tab:downstream}).
This works because \textsc{AtomGrow}'s trajectories pass
through valid partial molecules, precisely the states on which these
downstream tasks condition. More generally, trajectory matching makes the
path between endpoints 
a modeling choice: training intermediates can be
chosen to align with downstream conditional tasks, rather than being fixed
by a factorized endpoint-conditioned construction.

\vspace{-0.2cm}
\section{Related Work}
\vspace{-3pt}
\label{sec:related_work}

\textbf{Action-based generation.}
Several lines of work generate structured objects through action sequences, using different training objectives and generation paradigms:
GFlowNets \citep{gflownets,gflownets2} learn distributions over action trajectories on a DAG, but target reward-proportional sampling and are trained via balance constraints rather than conditional matching.
Autoregressive models handle complex actions naturally by generating in a fixed order \citep{junction_tree,grammar_vae,graph_rnn}, but commit to that order and to sequential generation.
Non-autoregressive edit-based models such as DiffusER \citep{diffuser} predict a restricted set of single-token edit operations in parallel, but rely on heuristic reconstruction targets without a principled guarantee that the learned model samples from the data distribution.

\textbf{Discrete flow matching.}
Discrete diffusion \citep{discrete_diffusion} and DFM \citep{discrete_flows,discrete_flows_original}  provide a principled framework for non-autoregressive generation that provably matches the data distribution, and have been successfully applied to text \citep{discrete_diffusion}, graphs \citep{defog,digress}, and molecules \citep{molecule_flow}.  However, these methods share a common template restricting the action space: the conditional dynamics are determined by endpoints alone, factorize across components, and are restricted to single-component updates on a fixed-size state. 
ARDM \citep{ardm} is an exception to endpoint-conditioning: training over random unmasking orders 
implicitly conditions on a trajectory, but is restricted to fixed-size,  single-token unmasking operations.

A small body of work relaxes the fixed-size constraint: Jump diffusion \citep{jump_diffusion} combines continuous-state diffusion with a separate jump process and an auxiliary insertion head that predicts new component values and locations. GRIDDD \citep{griddd} similarly augments fixed-size graph diffusion with an auxiliary model that predicts how many nodes to insert at each step.
Edit Flows \citep{edit_flows}, the closest prior work, handles single-token insertions, substitutions, and deletions for text by lifting them into an augmented state space, where they reduce to factorized single-component updates. 
Variants of this construction have been applied to temporal point processes \citep{tpps}, 3D molecule generation \citep{branching_flows}, and protein generation \citep{evoflows}.
None of these approaches supports multi-component 
actions (e.g., fragment attachment) or general actions with explicit 
prerequisites (e.g., edge insertions requiring both endpoints).
Trajectory matching addresses this gap through full-trajectory
conditioning, providing, to our knowledge, the first general DFM
construction for such actions. It also recovers factorized DFM
and Edit Flow rates as special cases, yielding
a unified framework.

\vspace{-0.2cm}
\section{Limitations}
\vspace{-3pt}
\label{sec:limitations}
Trajectory matching requires specifying both an action space \smath{\mathcal{A}} and a trajectory distribution \smath{\trajDist}, making its performance dependent on domain-specific design choices. The framework is most useful when the actions encode meaningful structure, such as fragment insertions in molecular graphs.  In domains where independent component-wise edits already capture the relevant structure, complex actions may offer no advantage. In this case, trajectory matching can fall back to factorized DFM.

The tractability of trajectory matching depends on being able to sample from \smath{\trajDist} efficiently and, ideally, to compute the Rao-Blackwellized rates in closed form. For many natural choices---independent fixed-action distributions and forest-structured dependencies---both can be done in linear time. Outside these families, both may require domain-specific constructions (e.g., \cref{eq:beyond-fixed}), so applying trajectory matching to a new domain can itself require non-trivial methodological work.

Our empirical evaluation covers trees and molecular graphs across four datasets. While these settings exercise diverse action structures, evaluating the framework in further domains remains future work. %

\vspace{-0.2cm}
\section{Conclusion}
\vspace{-3pt}
We introduced trajectory matching, a discrete flow matching framework that operates over user-defined action spaces by conditioning on full action trajectories rather than on endpoints alone. The framework subsumes factorized DFM and extends it to actions that change state size, modify multiple components, or depend on one another. Empirically, domain-aligned action spaces translate into concrete gains across tree and molecular graph generation. By turning the action vocabulary into a modeling choice, trajectory matching opens up the design space of DFM.

\subsection*{AI use statement}

The idea of this work, the general conceptual framework and its specific instantiations were developed by the authors without any substantial use of generative AI.
The main text was polished with substantial use of generative AI, which provided feedback on clarity, suggested alternative formulations of sentences and paragraphs, and helped to find typos, inconsistencies and errors. Text in the appendix was for some sections (mainly the experimental details and some proofs) in parts drafted by generative AI based on provided implementation details or proof sketches by the authors. E.g., a  draft for the precise description of the experimental setup was generated by generative AI based on the provided implementation. Again, all text generated by generative AI was carefully checked, partly revised, and verified by the authors.

The code for the general framework was written by the authors. The implementation of specific instantiations and experiments was done with the help of AI Coding Assistants. Additionally, we used generative AI to provide feedback on the experiments, find bugs, and interpret some of the results. We did not use AI to generate datasets or clean them.
 We extensively tested the code and use the existing evaluation framework used by other baselines (e.g. DeFoG).

We take responsibility for the final content of this work,
including text, claims or artifacts produced with the aid of generative AI.

\subsection*{Ethics statement}

This work studies foundational aspects of generative modeling of structured discrete objects. Our experiments include molecular graph generation, which can support beneficial applications such as drug discovery but also carries risks of misuse, including the design of harmful compounds. We do not identify ethical concerns specific to our method beyond those associated with generative modeling of molecules in general.

\subsection*{Reproducibility statement}

The mathematical assumptions and proofs supporting our framework are provided in \cref{app:proofs}, with the consistency of the implemented decomposed objective established in \cref{app:setup}. Training and sampling procedures are described in \cref{app:training_sampling}, and the action spaces, source distributions, and trajectory constructions for each instantiation are specified in \cref{app:instantiations}. \Cref{app:setup} documents the datasets and splits, model architecture, training hyperparameters, hardware, sampling budgets, and evaluation protocols. Additional ablations and the downstream evaluation procedure are provided in \cref{app:experimental_details}.
We commit to releasing the code publicly upon publication.

\newpage
\bibliography{references/references}

\newpage
\ifincludeappendix
    \appendix
    \section*{Appendix Contents}
    \small
    \begin{itemize}
        \item \hyperref[app:extended_related_work]{Appendix \ref*{app:extended_related_work}: Extended Related Work}
        \item \hyperref[app:proofs]{Appendix \ref*{app:proofs}: Proofs}
        \item \hyperref[app:kappa-schedules]{Appendix \ref*{app:kappa-schedules}: General Time Schedules}
        \item \hyperref[app:training_sampling]{Appendix \ref*{app:training_sampling}: Training and Sampling}
        \item \hyperref[app:instantiations]{Appendix \ref*{app:instantiations}: Action Space and Trajectory Distribution Instantiations}
        \item \hyperref[app:connections]{Appendix \ref*{app:connections}: Connection to existing methods}
        \item \hyperref[app:editflows]{Appendix \ref*{app:editflows}: Edit Flows as Trajectory Matching}
        \item \hyperref[app:experimental_details]{Appendix \ref*{app:experimental_details}: Additional Experimental Results}
    \end{itemize}
    \normalsize
    \section{Extended Related Work}
\label{app:extended_related_work}

\paragraph{Flow and generator matching.}
Flow matching~\citep{flow1,flow2,flow3} learns marginal dynamics by regressing against tractable conditional dynamics. Generator Matching~\citep{generator_matching} generalizes this conditional-to-marginal principle to broad classes of Markov processes through their infinitesimal generators, encompassing deterministic flows, diffusions, and continuous-time Markov chains on continuous, discrete, and manifold-valued state spaces. Discrete Flow Matching (DFM)~\citep{discrete_flows_original,discrete_flows} can be viewed as an instance of this framework on discrete spaces and has been applied to text, proteins, graphs, and molecules~\citep{discrete_flows,discrete_flows_original,defog}. Widely used DFM constructions define endpoint-conditioned paths on fixed product spaces, factorize the conditional probability paths across components, and allow individual transitions to modify only one categorical component. Within this product-space setting, \citet{kinetic_flows} derive kinetic-optimal rates for general  categorical probability paths within each component.

Trajectory Matching lies within the Generator Matching framework \citep{generator_matching}, with the complete action trajectory serving as the conditioning variable. Our contribution is a general construction of tractable conditional generators over discrete action spaces without requiring a fixed product-space representation or componentwise actions, together with efficient Rao--Blackwellized targets for several important trajectory families.

\paragraph{Variable-size discrete flows.}
A few diffusion and flow models relax the fixed-size assumption using specialized size-changing processes. Jump Diffusion~\citep{jump_diffusion} combines continuous diffusion with a dimension-reducing deletion process and learns its time reversal, including the rate and transition kernel for inserting new components. GRIDDD~\citep{griddd} augments graph diffusion with insertions and deletions and an auxiliary model for predicting graph size changes. 
FlexMDM~\citep{flexmdm} augments masked diffusion for text with a dedicated insertion process: an additional head predicts how many mask tokens to insert between existing positions, while the standard unmasking posterior subsequently predicts their values. Generalized Poisson Flow~\citep{poisson_flows} similarly decomposes variable-size generation into a generalized-Poisson insertion process, which learns the insertion rate and the distribution of newly inserted components, and a within-length flow that refines the current state.

Edit Flows~\citep{edit_flows}, the closest predecessor to our construction, defines a CTMC on variable-length sequences with token insertions, substitutions, and deletions. To obtain tractable conditional dynamics, it lifts sequences to an augmented alignment space, where insertions and deletions are represented as transitions between tokens and a special empty symbol. The resulting conditional probability path factorizes over aligned positions, allowing the standard product-space DFM construction to be applied, with each transition modifying only one component of the augmented state.
Notably, trajectory matching recovers the Edit Flow probability
path and marginal rate by uniformly ordering the alignment-induced
edits for a fixed alignment distribution (\cref{prop:ef-equiv}).

Related size-changing constructions have been applied to temporal point processes \citep{tpps}, proteins \citep{evoflows}, and 3D molecules~\citep{branching_flows}. In particular, \citet{branching_flows} use a specialized atom split-and-deletion process conditioned on a latent forest that determines the ancestry of split atoms. This construction can be seen as a special case of our dependent fixed-action formulation, which allows general action sets and dependency structures.

In summary, these methods provide specialized constructions for domain-specific size-changing operations, rather than a general recipe for training flows with arbitrary multi-component actions and dependencies.

\paragraph{Generation order.}
Generation order is an important modeling choice in discrete generative models. Autoregressive models typically fix a single generation order, whereas Autoregressive Diffusion Models~\citep{ardm} train over multiple componentwise unmasking orders. Standard factorized DFM similarly permits components to be modified in arbitrary order; in our framework, it corresponds to independent actions ordered uniformly at random. Trajectory Matching makes this trajectory distribution explicit by specifying $\trajDist(\trajectory\mid \initState,\finalState)$ over complete domain-level action trajectories, allowing generation orders to encode domain knowledge and dependencies between actions. Jointly learning this trajectory distribution---as recently explored for token insertion orders by LoFlexMDM~\citep{loflexmdm}---is a promising direction for future work.

\paragraph{Fragment-based molecular generation.}
Fragment-based molecular generators construct molecules from chemically meaningful substructures rather than individual atoms. Early approaches include junction-tree and hierarchical variational autoencoders~\citep{junction_tree,junction_tree_similar} and autoregressive models that select and attach structural motifs~\citep{moler}. These methods demonstrate that fragment-level representations can improve chemical validity, scalability, and control over molecular structure, but rely on autoregressive or latent-variable constructions.

Fragments have also been incorporated into diffusion and flow-based models. Fragment-based molecular diffusion and flow matching approaches typically generate a fragment graph based on a fixed vocabulary~\citep{fragment_diffusion,coarse_to_fine} and reconstruct atom-level graphs through seperate coarse-to-fine post-processing. Similarly, \citet{molecule_flow} generate 3D molecular structures by generating a point cloud of rigid motifs.

 These approaches use fragments primarily to define a coarser state representation on which a conventional diffusion or flow process operates. In contrast, our \textsc{FragGrow} instantiation retains an atom-level molecular graph and treats attaching an entire fragment as a single structured action, directly producing the updated atom-level graph without a separate coarse-to-fine decoder.

\section{Proofs}
\label{app:proofs}

Throughout the proofs we use the Kolmogorov forward equations, which characterize when a rate $\rate$ generates a probability path $p_t$. In our action-based parametrization, they take the following form:

\begin{theorem}[Kolmogorov forward equations]
\label{thm:kolmogorov-forward}
A rate $\rate$ generates a probability path $p_t$ if and only if, for every state $s$ and every $t \in [0,1]$,
\[
\frac{d}{dt} p_t(s)
\;=\; \sum_{\substack{s',\,a:\\ a(s')=s}} p_t(s')\, \rate(s', a)
\;-\; p_t(s) \sum_{a \in A(s)} \rate(s, a).
\]
\end{theorem}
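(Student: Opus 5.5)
The plan is to reduce the action-based statement to the classical Kolmogorov forward equation for a CTMC parametrized by state-to-state transition rates, by aggregating actions that lead to the same target state. Concretely, for $s \neq s'$ I will define the transition-rate matrix
\[
Q_t(s, s') := \sum_{a \in A(s):\, a(s) = s'} \rate(s,a), \qquad Q_t(s,s) := -\sum_{a \in A(s)} \rate(s,a).
\]
This is well defined because each action satisfies $a(s) \neq s$, so no action contributes a self-loop. By the infinitesimal transition probabilities in \cref{sec:background}, the CTMC parametrized by the action rate $\rate$ has exactly the transition kernel $\prob[S_{t+h}=s' \mid S_t=s] = \indicator{s=s'} + h\,Q_t(s,s') + o(h)$. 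Hence ``$\rate$ generates $p_t$'' is the same statement as ``$Q_t$ generates $p_t$'' in the usual sense.

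For the forward direction, I will write $p_{t+h}(s) = \sum_{s'} p_t(s')\,\prob[S_{t+h}=s \mid S_t=s']$ by the Markov property. Inserting the expansion above, subtracting $p_t(s)$, dividing by $h$ and letting $h \to 0$ gives $\frac{d}{dt}p_t(s) = \sum_{s'} p_t(s') Q_t(s',s)$. Splitting off the diagonal term $s'=s$ and unfolding the definition of $Q_t$ then gives exactly the displayed equation: the off-diagonal part becomes $\sum_{s',a:\,a(s')=s} p_t(s')\rate(s',a)$, and the diagonal part becomes $-p_t(s)\sum_{a\in A(s)}\rate(s,a)$. Exchanging the limit with the sum over $s'$ needs justification. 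In the paper's settings, the states reachable from the source support are finite, or at least the total rate out of each state is locally bounded, so I will assume uniformity of the $o(h)$ terms, or dominated convergence, over the relevant states.

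For the converse, suppose $p_t$ solves the forward equation with $p_0$ equal to the law of $S_0$. The marginal law $\tilde p_t$ of the CTMC driven by $\rate$ also solves the same linear ODE with the same initial condition, by the forward direction. I will then invoke uniqueness of solutions of this linear system, via Picard--Lindelöf on a finite reachable state space or the minimal-solution theory for non-explosive chains on countable spaces, to conclude $p_t = \tilde p_t$.

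The main obstacle I expect is regularity near $t=1$. Rates such as \cref{eq:trajectory-rate} and \cref{eq:factorized-flow-rate} blow up like $1/(1-t)$, so the ODE is not Lipschitz on all of $[0,1]$. I would first prove the equivalence on $[0,1-\epsilon]$ for every $\epsilon>0$, where the rates are bounded. I would then extend to $t=1$ by continuity of $p_t$, reading the derivative at $t=1$ as a one-sided limit. This is consistent with \cref{prop:trajectory-path}, where $p_t(\cdot\mid\trajectory)$ is a polynomial in $t$ and therefore extends smoothly to $t=1$.
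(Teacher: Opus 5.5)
Your proposal is correct and takes the same route as the paper: the paper's entire proof is your reduction, aggregating the action rates into the state-to-state generator $Q_t(s',s)=\sum_{a:\,a(s')=s}\rate(s',a)$ for $s'\neq s$ and invoking the standard Kolmogorov forward equation. What you add (the Markov-property derivation, uniqueness for the converse, which tacitly needs $p_0$ to be the law of $S_0$, and the restriction to $[0,1-\epsilon]$ to handle the $1/(1-t)$ blow-up of the rates) fills in details the paper leaves implicit without changing the argument.
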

\begin{proof}
This is the standard Kolmogorov forward equation for CTMCs , specialized to our action parametrization via $Q_t(s', s) = \sum_{a:\, a(s') = s} \rate(s', a)$ for $s' \neq s$.
\end{proof}

\subsection{Proof of Proposition 3.1}

\trajectorypathprop*

\begin{proof}
Let $\beta_i(t) := \binom{n}{i} t^i (1-t)^{n-i} = \Pr[K_t = i]$. We show that
\[
p_t(s \mid \trajectory) \;=\; \sum_{i=0}^n \indicator{s = s_i}\, \beta_i(t)
\]
satisfies the Kolmogorov forward equations of \cref{thm:kolmogorov-forward} with the trajectory-conditioned rate \cref{eq:trajectory-rate} and initial condition $p_0(\cdot \mid \trajectory) = \indicator{\,\cdot = s_0}$; the claim then follows by uniqueness.

Since the states $s_0, \dots, s_n$ in $\trajectory$ are distinct, $p_t(s \mid \trajectory) = 0$ for $s \notin \{s_0, \dots, s_n\}$, and the forward equation holds trivially at such $s$. For $s = s_i$, the only action flowing in is $a_i$ from $s_{i-1}$ (when $i \geq 1$) and the only action with positive outflow rate is $a_{i+1}$ (when $i \leq n-1$). The forward equation thus reduces to
\[
\frac{d}{dt}\beta_i(t)
\;=\; \frac{n - i + 1}{1 - t}\,\beta_{i-1}(t)
\;-\; \frac{n - i}{1 - t}\,\beta_i(t),
\]
with the convention $\beta_{-1} \coloneq \beta_{n+1} \coloneq 0$. Using the identity $(n-i+1)\binom{n}{i-1} = i\binom{n}{i}$, both sides equal
\[
\binom{n}{i}\Bigl[\,i\, t^{i-1}(1-t)^{n-i} \;-\; (n-i)\, t^{i}(1-t)^{n-i-1}\Bigr],
\]
which verifies the equation. The initial condition $\beta_i(0) = \indicator{i = 0}$ also holds.
\end{proof}

\paragraph{Trajectories with repeated states.}
The assumption that a trajectory never revisits a state is only for notational convenience. For repeated states, we additionally condition on the progress index $K_t=i$, distinguishing visits through the augmented state $(s_i,i)$. The next action $a_{i+1}$ then has rate $(n-i)/(1-t)$, with zero outgoing rate at $i=n$. Averaging these rates over the conditional distribution of $K_t$ given $S_t=s$ and $\trajectory$ yields a rate on the original state space. The resulting rate defines a process on the original state space with the same probability of each state at every time. In particular, it still starts at $\initState$ and ends at $\finalState$.

\trajectorymarginalthm*

\begin{proof}
By \cref{thm:kolmogorov-forward}, since $\rate(\cdot \mid \trajectory)$ generates $p_t(\cdot \mid \trajectory)$ for each $\trajectory$,
\[
\frac{d}{dt} p_t(s \mid \trajectory)
\;=\; \sum_{\substack{s',\,a:\\ a(s')=s}} p_t(s' \mid \trajectory)\, \rate(s', a \mid \trajectory)
\;-\; p_t(s \mid \trajectory) \sum_{a \in A(s)} \rate(s, a \mid \trajectory).
\]
Taking $\EX_{\trajectory \sim \trajDist}$ on both sides,
\[
\frac{d}{dt} p_t(s)
\;=\; \sum_{\substack{s',\,a:\\ a(s')=s}} \EX_\trajectory\!\left[p_t(s' \mid \trajectory)\, \rate(s', a \mid \trajectory)\right]
\;-\; \sum_{a \in A(s)} \EX_\trajectory\!\left[p_t(s \mid \trajectory)\, \rate(s, a \mid \trajectory)\right].
\]
By the definition of $\rate(s, a)$ as a conditional expectation,
\[
\EX_\trajectory\!\left[p_t(s \mid \trajectory)\, \rate(s, a \mid \trajectory)\right] \;=\; p_t(s)\, \rate(s, a),
\]
where on $\{p_t(s) = 0\}$ both sides vanish and $\rate(s, a)$ may be defined arbitrarily. Substituting yields the Kolmogorov forward equation for $p_t$ with rate $\rate$, so by \cref{thm:kolmogorov-forward}, $\rate$ generates $p_t$.
\end{proof}

\subsection{Proof of \cref{prop:traj-gradients}}

\traingradientsprop*

\begin{proof}
Expanding $\loss(\theta)$ and the marginal matching loss, the claim is
\[
\nabla_\theta \EX_\trajectory\, \EX_t\, \EX_{s \sim p_t(\cdot \mid \trajectory)}\, D\!\left(\rate(s, \cdot \mid \trajectory),\, \rate^\theta(s, \cdot)\right)
\;=\; \nabla_\theta \EX_t\, \EX_{s \sim p_t}\, D\!\left(\rate(s, \cdot),\, \rate^\theta(s, \cdot)\right).
\]
Since $D$ is a Bregman divergence, $D(a, b) = \phi(a) - \phi(b) - \langle a - b,\, \nabla \phi(b)\rangle$ for some strictly convex $\phi$. The terms $\phi(a)$ on each side do not depend on $\theta$ and vanish under $\nabla_\theta$. The remaining $\theta$-dependent terms are
\[
- \phi(\rate^\theta(s, \cdot)) - \langle a - \rate^\theta(s, \cdot),\, \nabla \phi(\rate^\theta(s, \cdot))\rangle,
\]
whose $\theta$-gradient is affine in $a$ for fixed $(s,t)$.
By \cref{thm:trajectory-marginal},
\[
\EX[\rate(s,\cdot\mid\trajectory)\mid S_t=s,t]
=
\rate(s,\cdot).
\]
Consequently,
\[
\EX\!\left[
\nabla_\theta D\!\left(
\rate(s,\cdot\mid\trajectory),\,\rate^\theta(s,\cdot)
\right)
\,\middle|\, S_t=s,t
\right]
=
\nabla_\theta D\!\left(
\rate(s,\cdot),\,\rate^\theta(s,\cdot)
\right).
\]
Taking expectations over $t$ and $s\sim p_t$ proves the claim,
with differentiation and expectation interchanged under the usual
regularity conditions.
\end{proof}

\subsection{Proof of \cref{cor:traj-optima}}

\trainoptimacor*

\begin{proof}
By \cref{prop:traj-gradients}, $\loss(\theta)$ and the marginal
matching loss have the same gradient in $\theta$, hence the same
minimizers. The marginal matching loss is the expectation under
$\EX_t\,\EX_{s \sim p_t}$ of
$D(\rate(s,\cdot),\,\rate^\theta(s,\cdot))$.
Since $D$ is a Bregman divergence, $D(a,b)\geq 0$ with equality
iff $a=b$, so the integrand is pointwise minimized by
$\rate^\theta(s,\cdot)=\rate(s,\cdot)$ for almost every $(s,t)$
with $p_t(s)>0$.

If the family contains $\rate$, this pointwise minimum is
attained, so any minimizer $\theta^\star$ satisfies
$\rate^{\theta^\star}(s,\cdot)=\rate(s,\cdot)$
for almost every $(s,t)$ with $p_t(s)>0$.
Consequently, the probability fluxes
$p_t(s)\rate^{\theta^\star}(s,a)$ and $p_t(s)\rate(s,a)$
agree almost everywhere in time. Therefore, $p_t$ also
satisfies the Kolmogorov forward equations, in integral form,
for $\rate^{\theta^\star}$. Since both processes start from
$\noiseDist$, uniqueness implies that
$\rate^{\theta^\star}$ generates the same marginal path $p_t$.
\end{proof}

\subsection{Rao--Blackwellization: consistency and variance reduction}
\label{app:rao-blackwell}

We establish the guarantees for the Rao--Blackwellized training target introduced in \cref{sec:rao-blackwell}. Averaging over trajectories conditional on the current state, time, and endpoints preserves the expected gradient and the optimum of the trajectory-matching objective, while reducing gradient variance.

\begin{restatable}[Consistency of the Rao-Blackwellized loss]{proposition}{raoblackwellconsistencyprop}
\label{prop:rao-blackwell-consistency}
\cref{prop:traj-gradients} and \cref{cor:traj-optima} continue to hold when \smath{\rate(\cdot \mid \trajectory)} is replaced by \smath{\rbrate(\cdot \mid \initState, \finalState)} in \cref{eq:trajectory_matching_objective}.
\end{restatable}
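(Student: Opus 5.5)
The key fact is a tower-property identity. Conditional on $(S_t = s, t)$, the Rao--Blackwellized target $\rbrate(s,\cdot \mid \initState, \finalState)$ has the same expectation as $\rate(s,\cdot \mid \trajectory)$, namely the marginal rate $\rate(s,\cdot)$. Everything in \cref{prop:traj-gradients} and \cref{cor:traj-optima} used only this conditional-expectation identity, together with the affine dependence of $\nabla_\theta D(a, \rate^\theta)$ on its first argument $a$. So I would re-run those two proofs with the new target and check that each step survives.

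\textbf{Step 1: the joint law.} Consider the joint law of $(\initState,\finalState,\trajectory,t,S_t)$ under sampling: $(\initState,\finalState)\sim\coupling$, $\trajectory\sim\trajDist(\cdot\mid\initState,\finalState)$, $t$ from its time distribution, and $S_t\sim p_t(\cdot\mid\trajectory)$. By Bayes' rule, the conditional law of $\trajectory$ given $(S_t=s,t,\initState,\finalState)$ is exactly the posterior $p_t(\trajectory\mid s,\initState,\finalState)$ defined in \cref{sec:rao-blackwell}. Hence
\[
\rbrate(s,a\mid\initState,\finalState)=\EX[\rate(s,a\mid\trajectory)\mid S_t=s,t,\initState,\finalState].
\]

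\textbf{Step 2: tower property.} Applying the tower property with the coarser conditioning $\sigma(S_t,t)$ gives
\[
\EX[\rbrate(s,a\mid\initState,\finalState)\mid S_t=s,t]=\EX[\rate(s,a\mid\trajectory)\mid S_t=s,t]=\rate(s,a),
\]
where the last equality is the definition of the marginal rate in \cref{thm:trajectory-marginal}.

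\textbf{Step 3: rewrite the objective.} The Rao--Blackwellized objective is
\[
\EX_{(\initState,\finalState)\sim\coupling}\EX_t\EX_{s\sim p_t(\cdot\mid\initState,\finalState)} D(\rbrate(s,\cdot\mid\initState,\finalState),\rate^\theta(s,\cdot)),
\]
with $p_t(s\mid\initState,\finalState)=\EX_{\trajectory\sim\trajDist(\cdot\mid\initState,\finalState)}p_t(s\mid\trajectory)$. This is the same joint law marginalized over $\trajectory$, so the objective can be written as an expectation over the same $(s,t)$ distribution, namely $s\sim p_t$.

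\textbf{Step 4: gradients.} Write the Bregman divergence as $D(a,b)=\phi(a)-\phi(b)-\langle a-b,\nabla\phi(b)\rangle$. The $\theta$-gradient of $D(a,\rate^\theta(s,\cdot))$ is affine in $a$, and $\phi(a)$ does not depend on $\theta$. Conditioning on $(S_t,t)$ and using Step 2, the expected gradient therefore equals $\nabla_\theta D(\rate(s,\cdot),\rate^\theta(s,\cdot))$. This yields the analogue of \cref{prop:traj-gradients}, under the same regularity conditions for interchanging $\nabla_\theta$ and $\EX$.

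\textbf{Step 5: optima.} The analogue of \cref{cor:traj-optima} then follows verbatim from its proof, which used only the gradient equality and the fact that $D(a,b)\ge 0$ with equality iff $a=b$. Equal gradients mean the two objectives differ by a $\theta$-independent constant, so they have the same minimizers.

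\textbf{Main obstacle: Step 1.} The crux is verifying that the posterior in the definition of $\rbrate$ is genuinely the conditional law within the training-sampling joint distribution, including its normalization. The denominator must equal $p_t(s\mid\initState,\finalState)$, and the identity must hold on the set $\{p_t(s\mid\initState,\finalState)>0\}$; on null sets $\rbrate$ can be defined arbitrarily without affecting the expectations. With that established, the rest is a formal repetition of the earlier arguments.
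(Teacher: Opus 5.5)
Your proposal is correct and matches the paper's proof: the tower property shows that $\rbrate(s,\cdot\mid\initState,\finalState)$ and $\rate(s,\cdot\mid\trajectory)$ have the same conditional expectation $\rate(s,\cdot)$ given $S_t=s$, and the arguments of \cref{prop:traj-gradients} and \cref{cor:traj-optima} use only this identity plus the fact that $\nabla_\theta D$ is affine in its first argument. Your Step 1, which checks that the posterior is the true conditional law under the training distribution, is a detail the paper leaves implicit rather than a different route.
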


\begin{proof}
By the tower property, $\rate(\cdot \mid \trajectory)$ and $\rbrate(\cdot \mid \initState, \finalState)$ share the same conditional expectation given $S_t = s$, namely the marginal rate $\rate(s, \cdot)$. The proof of \cref{prop:traj-gradients} uses $\rate(\cdot \mid \trajectory)$ only through this conditional expectation, so the same gradient identity holds with $\rbrate(\cdot \mid \initState, \finalState)$ in place of $\rate(\cdot \mid \trajectory)$. \cref{cor:traj-optima} then follows by the same argument.
\end{proof}

\begin{restatable}[Variance reduction]{proposition}{raoblackwellvarianceprop}
\label{prop:rao-blackwell-variance}
Let \smath{\hat{g}_\text{TM}(\theta) = \nabla_\theta D(\rate(s, \cdot \mid \trajectory), \rate^\theta(s, \cdot))} and \smath{\hat{g}_\text{RB}(\theta) = \nabla_\theta D(\rbrate(s, \cdot \mid \initState, \finalState), \rate^\theta(s, \cdot))} be the gradient estimators under the trajectory-conditioned and Rao-Blackwellized targets, sampled from the distribution defining \smath{\loss(\theta)}. Then
\begin{align*}
\EX[\hat{g}_\text{RB}] = \EX[\hat{g}_\text{TM}] \quad \text{and} \quad \mathrm{Var}(\hat{g}_\text{RB}) \leq \mathrm{Var}(\hat{g}_\text{TM}) \text{ componentwise}.
\end{align*}
\end{restatable}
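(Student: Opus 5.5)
The proof rests on one fact: the gradient of a Bregman divergence is affine in its first argument, so Rao--Blackwellizing the target is the same as conditioning the gradient estimator. Write the sampling distribution behind $\loss(\theta)$ as a joint law over $(\initState,\finalState,\trajectory,t,s)$, with $(\initState,\finalState)\sim\coupling$, $\trajectory\sim\trajDist(\cdot\mid\initState,\finalState)$, $t$ uniform, and $s\sim p_t(\cdot\mid\trajectory)$. Let $\mathcal{G}=\sigma(\initState,\finalState,t,s)$. Under this joint law, the conditional distribution of $\trajectory$ given $\mathcal{G}$ is exactly the posterior $p_t(\trajectory\mid s,\initState,\finalState)$ from \cref{sec:rao-blackwell}; this is Bayes' rule applied to $\trajDist(\trajectory\mid\initState,\finalState)\,p_t(s\mid\trajectory)$. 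It follows that $\rbrate(s,\cdot\mid\initState,\finalState)=\EX[\rate(s,\cdot\mid\trajectory)\mid\mathcal{G}]$.

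\textbf{Step 1 (affinity).} As in the proof of \cref{prop:traj-gradients}, write $D(a,b)=\phi(a)-\phi(b)-\langle a-b,\nabla\phi(b)\rangle$. Then
\[
\nabla_\theta D(a,\rate^\theta(s,\cdot)) \;=\; -\nabla_\theta\phi(\rate^\theta)+J_\theta^\top\nabla\phi(\rate^\theta)-J_\theta^\top\nabla^2\phi(\rate^\theta)\,(a-\rate^\theta),
\]
where $J_\theta$ is the Jacobian of $\rate^\theta(s,\cdot)$ in $\theta$. Call this $g_\theta(s,t;a)$. It is affine in $a$, with coefficients depending only on $(s,t,\theta)$, and these are $\mathcal{G}$-measurable.

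\textbf{Step 2 (conditioning).} Since $\hat g_\text{TM}=g_\theta(s,t;\rate(s,\cdot\mid\trajectory))$ is affine in its last argument, conditional expectation passes inside:
\[
\EX[\hat g_\text{TM}\mid\mathcal{G}]=g_\theta\bigl(s,t;\EX[\rate(s,\cdot\mid\trajectory)\mid\mathcal{G}]\bigr)=g_\theta(s,t;\rbrate(s,\cdot\mid\initState,\finalState))=\hat g_\text{RB}.
\]
The tower property then gives $\EX[\hat g_\text{RB}]=\EX[\hat g_\text{TM}]$.

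\textbf{Step 3 (variance).} For each coordinate $j$, the law of total variance gives
\[
\mathrm{Var}(\hat g_{\text{TM},j})=\EX[\mathrm{Var}(\hat g_{\text{TM},j}\mid\mathcal{G})]+\mathrm{Var}(\EX[\hat g_{\text{TM},j}\mid\mathcal{G}])\ge\mathrm{Var}(\hat g_{\text{RB},j}).
\]
Equivalently, this is conditional Jensen applied to the convex map $x\mapsto x^2$. The same argument also yields the stronger PSD ordering of the covariance matrices.

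\textbf{Main obstacle.} The only delicate point is the identification in the first paragraph: checking that the posterior defined in \cref{sec:rao-blackwell} really is the conditional law of $\trajectory$ given $\mathcal{G}$ under the sampling scheme, including the fact that $t$ is sampled independently of $\trajectory$. A secondary technical point is interchanging expectation and gradient, which requires the integrability and regularity assumptions already invoked in \cref{prop:traj-gradients}. The step from the target to the gradient itself needs only the affinity in Step 1.
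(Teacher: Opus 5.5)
Your proposal is correct and follows the paper's proof essentially step for step. Both use the affinity of the Bregman gradient in its first argument, condition on $(s,t,\initState,\finalState)$ to obtain $\EX[\hat g_\text{TM}\mid s,t,\initState,\finalState]=\hat g_\text{RB}$, and then apply the tower property and the componentwise law of total variance; your explicit check that the posterior of \cref{sec:rao-blackwell} is the conditional law of $\trajectory$ under the sampling scheme, and your remark on the PSD covariance ordering, are useful elaborations that do not change the route.
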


\begin{proof}
Since $D$ is a Bregman divergence, $\nabla_\theta D(a, \rate^\theta(s, \cdot))$ is linear in $a$. Therefore, conditioning on $(s, t, \initState, \finalState)$ and using the definition of the Rao-Blackwellized target $\rbrate(s, a \mid \initState, \finalState) = \EX[\rate(s, a \mid \trajectory) \mid s, t, \initState, \finalState]$,
\[
\EX[\hat{g}_\text{TM} \mid s, t, \initState, \finalState] \;=\; \nabla_\theta D\!\left(\EX[\rate(s, \cdot \mid \trajectory) \mid s, t, \initState, \finalState],\, \rate^\theta(s, \cdot)\right) \;=\; \hat{g}_\text{RB}.
\]
Equality of expectations follows by the tower property: $\EX[\hat{g}_\text{TM}] = \EX[\EX[\hat{g}_\text{TM} \mid s, t, \initState, \finalState]] = \EX[\hat{g}_\text{RB}]$. The variance inequality follows from the law of total variance applied componentwise:
\[
\mathrm{Var}(\hat{g}_\text{TM}) \;=\; \EX\bigl[\mathrm{Var}(\hat{g}_\text{TM} \mid s, t, \initState, \finalState)\bigr] + \mathrm{Var}\bigl(\EX[\hat{g}_\text{TM} \mid s, t, \initState, \finalState]\bigr) \;\geq\; \mathrm{Var}(\hat{g}_\text{RB}). \qedhere
\]
\end{proof}

\subsection{Proof of \cref{prop:independent_rates}}

\independentratesprop*

\begin{proof}
Fix $\initState, \finalState$ and a state $s$ on some trajectory from $\initState$ to $\finalState$ in the support of $\trajDist$. Let $m := |\actionSet(s, \finalState)|$ be the number of actions remaining at $s$, and let $\ell := |\actionSet(\initState, \finalState)| - m$ be the number of actions already taken to reach $s$. Since $\trajDist$ is an independent fixed-action distribution, every trajectory passing through $s$ on its way to $\finalState$ takes the same set $\actionSet(s, \finalState)$ of remaining actions; only their ordering varies, and the distribution over orderings is uniform.

By symmetry, the posterior $p_t(\trajectory \mid s, \initState, \finalState)$ is uniform over orderings of $\actionSet(s, \finalState)$, so each $a \in \actionSet(s, \finalState)$ is equally likely to be the next action: $\Pr[a_{\ell+1} = a \mid s, \initState, \finalState] = 1/m$.

The trajectory-conditioned rate at $s$ (position $\ell$ on a trajectory of length $\ell + m$) is, by \cref{eq:trajectory-rate},
\[
\rate(s, a \mid \trajectory) = \indicator{a = a_{\ell+1}} \cdot \frac{m}{1-t}.
\]
Taking the posterior expectation,
\begin{align*}
\rbrate(s, a \mid \initState, \finalState) &\;=\; \EX_{\trajectory \sim p_t(\cdot \mid s, \initState, \finalState)}\, \rate(s, a \mid \trajectory) \\
&\;=\; \indicator{a \in \actionSet(s, \finalState)} \cdot \frac{1}{m} \cdot \frac{m}{1-t} \\
&\;=\; \indicator{a \in \actionSet(s, \finalState)} \cdot \frac{1}{1-t}. \qedhere
\end{align*}
\end{proof}

\subsection{Proof of \cref{prop:factorized-flow-equivalence}}

\factorizedflowequivalenceprop*

\begin{proof}
We first prove Part 2; Part 1 then follows from \cref{prop:rao-blackwell-consistency}.
\textbf{Part 2.} By \cref{prop:independent_rates}, the Rao-Blackwellized rate is
\[
    \rbrate(s, a^i_v \mid \initState, \finalState)
    = \indicator{a^i_v \in \actionSet(s, \finalState)} \cdot \frac{1}{1-t}
    = \indicator{s^i \neq v = \finalState^i} \cdot \frac{1}{1-t},
\]
which is exactly the factorized flow rate from \cref{eq:factorized-flow-rate}.
It remains to check that the sampling distribution over $(s, t, \initState, \finalState)$
agrees in the two losses. Both draw $(\initState, \finalState) \sim \coupling$ and
$t \sim U[0,1]$; it suffices to show the conditional paths agree, i.e.,
\[
\EX_{\trajectory \sim \trajDist(\cdot \mid \initState, \finalState)} p_t(s \mid \trajectory)
= \prod_{i=1}^d p_t^i(s^i \mid \initState^i, \finalState^i).
\]

Fix $(\initState, \finalState)$ and let $F = \{i : \initState^i \neq \finalState^i\}$
with $n_F = \abs{F}$. Every $\trajectory$ in the support of
$\trajDist(\cdot \mid \initState, \finalState)$ is an ordering of
$\{a^i_{\finalState^i} : i \in F\}$. By \cref{prop:trajectory-path},
conditional on $\trajectory$ and $K_t = k$, the current state has flipped exactly the
components corresponding to the first $k$ actions in $\trajectory$. Averaging over
$\trajectory$ uniformly over orderings, the identity of the $k$ flipped components is
uniform over size-$k$ subsets of $F$. Since
$K_t \sim \text{Bin}(n_F, t)$, this is equivalent to flipping each $i \in F$
independently with probability $t$, which yields the factorized path
$\prod_{i=1}^d p_t^i(s^i \mid \initState^i, \finalState^i)$.

\textbf{Part 1.} Follows from Part 2 and \cref{prop:rao-blackwell-consistency}.
\end{proof}

\subsection{Rao-Blackwellized rates for general dependent fixed-action distributions}
\label{app:general-dag-rates}

We will first prove a more general version of \cref{prop:dependent-fixed-action-forest-rates}, which applies to any dependent fixed-action distribution, not just those with forest-structured dependency DAGs. 

Let $L(G)$ denote the number of linear extensions (topological orderings) of a DAG $G$, and let $G \setminus a$ denote the DAG obtained by removing vertex $a$
together with its incident edges.

\begin{restatable}{proposition}{dependentfixedactiongeneralratesprop}
    \label{prop:dependent-fixed-action-general-rates}
    Let $\trajDist$ be a dependent fixed-action distribution with dependency DAG
    on $\actionSet(\initState, \finalState)$. For a state $s$ on a trajectory
    from $\initState$ to $\finalState$,
    \begin{align*}
        \rbrate(s, a \mid \initState, \finalState)
        = \indicator{a \in \textup{roots}(\actionSet(s, \finalState))}
          \cdot \frac{L(\actionSet(s, \finalState) \setminus a)}{L(\actionSet(s, \finalState))}
          \cdot \frac{m}{1-t},
    \end{align*}
    where $m = |\actionSet(s, \finalState)|$ is the number of remaining actions.
\end{restatable}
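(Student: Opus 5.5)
The plan mirrors the proof of \cref{prop:independent_rates}: first compute the posterior over trajectories given $(s,t,\initState,\finalState)$, then read off the probability that a given root of the remaining DAG is the next action.

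\textbf{Step 1: the posterior over trajectories.} Fix $(\initState,\finalState)$ and a state $s$ on a consistent trajectory. Let $\ell := |\actionSet(\initState,\finalState)| - m$ be the number of actions already taken. By \cref{prop:trajectory-path}, $p_t(s\mid\trajectory) = \indicator{s = s_\ell(\trajectory)}\,\prob[K_t=\ell]$ with $K_t\sim\text{Bin}(\ell+m,t)$. Since all consistent trajectories have the same length, this factor is the same for every trajectory passing through $s$. Because the states along a trajectory are distinct, $s$ can only appear at position $\ell$. Together with the uniform prior $\trajDist(\cdot\mid\initState,\finalState)$, this shows that the posterior $p_t(\trajectory\mid s,\initState,\finalState)$ is uniform over consistent trajectories passing through $s$.

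\textbf{Step 2: factor the trajectories through $s$.} Each such trajectory is a prefix (a linear extension of the already-taken actions that leads to $s$) concatenated with a suffix. By the defining property of $\actionSet(s,\finalState)$, the suffix is a topological ordering of the restricted DAG on $\actionSet(s,\finalState)$. Conversely, any such prefix combined with any such suffix is a consistent trajectory. A suffix uses only remaining actions, and since every prerequisite of a remaining action is either remaining or already taken, the concatenation is a topological order of the full DAG. The set of trajectories through $s$ is therefore a product of prefixes and suffixes, so the marginal posterior on the suffix is uniform over the $L(\actionSet(s,\finalState))$ linear extensions. This step is the main obstacle. It needs care to check that ``consistent through $s$'' really decouples into independent prefix and suffix choices, and it relies on the paper's assumption that the remaining action set depends only on $s$, not on the path taken to reach it.

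\textbf{Step 3: probability of each next action.} The next action $a_{\ell+1}$ is the first element of the suffix. A linear extension can begin with $a$ only if $a$ is a root of $\actionSet(s,\finalState)$. Linear extensions beginning with a root $a$ are in bijection with linear extensions of $\actionSet(s,\finalState)\setminus a$, so
\[
\prob[a_{\ell+1}=a\mid s,\initState,\finalState] = \indicator{a\in\textup{roots}(\actionSet(s,\finalState))}\,\frac{L(\actionSet(s,\finalState)\setminus a)}{L(\actionSet(s,\finalState))}.
\]
By \cref{eq:trajectory-rate}, the conditional rate at position $\ell$ is $\rate(s,a\mid\trajectory)=\indicator{a=a_{\ell+1}}\cdot m/(1-t)$, since there are $(\ell+m)-\ell=m$ remaining actions. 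Taking the posterior expectation of this rate and substituting the probability above gives the stated formula.
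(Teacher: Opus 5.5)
Your proposal is correct and follows the paper's proof essentially step for step. Both arguments use the uniform posterior over orderings that reach $s$, the prefix/suffix decomposition that makes the suffix uniform over linear extensions of $\actionSet(s,\finalState)$, the bijection between root-first extensions and $L(\actionSet(s,\finalState)\setminus a)$, and the next-action rate $m/(1-t)$. You also justify the uniform posterior (via the common binomial factor) and the product structure of prefixes and suffixes, which the paper asserts without argument.
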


\begin{proof}
Fix $\initState, \finalState$ and a state $s$ on a trajectory in the support of
$\trajDist(\cdot \mid \initState, \finalState)$. Let
$\ell = |\actionSet(\initState, \finalState)| - m$ be the number of actions
already taken to reach $s$, so every trajectory through $s$ has length
$\ell + m$ and reaches $s$ at position $\ell$.

Since $\trajDist$ is uniform over topological orderings of
$\actionSet(\initState, \finalState)$, the posterior
$p_t(\trajectory \mid s, \initState, \finalState)$ is uniform over those orderings
whose first $\ell$ actions reach $s$. Each such ordering decomposes into a
prefix of length $\ell$ (an ordering of the actions already taken) and a
suffix of length $m$ (a topological ordering of $\actionSet(s, \finalState)$).
Marginalizing out the prefix, the suffix is uniform over linear extensions of
$\actionSet(s, \finalState)$. Hence for any root $a$ of $\actionSet(s, \finalState)$,
\[
    \Pr[a_{\ell+1} = a \mid s, \initState, \finalState]
    = \frac{L(\actionSet(s, \finalState) \setminus a)}{L(\actionSet(s, \finalState))},
\]
since orderings with $a$ first are in bijection with linear extensions of
$\actionSet(s, \finalState) \setminus a$. For non-roots this probability is
zero.

The trajectory-conditioned rate at $s$ is, by \cref{eq:trajectory-rate},
$\rate(s, a \mid \trajectory) = \indicator{a = a_{\ell+1}} \cdot m / (1-t)$.
Taking the posterior expectation,
\[
    \rbrate(s, a \mid \initState, \finalState)
    = \indicator{a \in \textup{roots}(\actionSet(s, \finalState))}
      \cdot \frac{L(\actionSet(s, \finalState) \setminus a)}{L(\actionSet(s, \finalState))}
      \cdot \frac{m}{1-t}. \qedhere
\]
\end{proof}

We note that counting linear extensions of a general DAG is computationally intractable in general, so
\cref{prop:dependent-fixed-action-general-rates} does not yield an efficient
training target in general. However, if the dependency DAG is a forest, the linear-extension ratio admits a
closed form, recovering:

\dependentfixedactionforestratesprop*

\begin{proof}
By \cref{prop:dependent-fixed-action-general-rates}, it suffices to show that
for any root $a$ of a forest $G$ on $m$ vertices, the probability that $a$
comes first in a uniform linear extension of $G$ equals $\textup{desc}(a)/m$.

Let $T_1, \ldots, T_r$ be the trees of $G$ with roots $\rho_1, \ldots, \rho_r$
and sizes $m_1, \ldots, m_r$. Any linear extension of $G$ is uniquely
determined by a linear extension of each $T_j$ together with an interleaving
specifying, for each of the $m$ output positions, which tree contributes that
position. The interleaving is a sequence in $[r]^m$ containing $m_j$ copies of
$j$, and conversely every such interleaving combined with linear extensions of
the $T_j$ yields a valid linear extension of $G$. Under a uniform linear
extension of $G$, both the per-tree extensions and the interleaving are
independently uniform.

By symmetry over the $m$ positions, the interleaving assigns position $1$ to
tree $T_j$ with probability $m_j / m$. Conditional on position $1$ coming
from $T_j$, the first vertex is the minimum of $T_j$'s linear extension,
which is the root $\rho_j$ (the unique minimum of any rooted tree's partial
order). Hence
\[
    \Pr[\rho_j \text{ first}] = \frac{m_j}{m} = \frac{\textup{desc}(\rho_j)}{m}.
\]
Since $a$ is the root of some $T_j$, substituting into
\cref{prop:dependent-fixed-action-general-rates} gives the claim.
\end{proof}

For the forest-plus-sink dependencies in \textsc{AtomGrow} and \textsc{MolEdit}, \cref{app:non-forest-approx} gives an alternative conditional path with exact tractable rates.

    \section{General Time Schedules}
\label{app:kappa-schedules}

Factorized flows are commonly used with a general time schedule $\kappa : [0,1] \to [0,1]$ satisfying $\kappa_0 = 0$, $\kappa_1 = 1$, and $\dot\kappa_t > 0$. The conditional rate of \cref{eq:factorized-flow-rate} then takes the form
\begin{equation}
\rate(s, a \mid \initState, \finalState) \;=\; \indicator{a^i(s^i) = s_F^i} \cdot \frac{\dot\kappa_t}{1 - \kappa_t},
\label{eq:factorized-kappa}
\end{equation}
which generates the factorized path with marginals $\kappa_t \indicator{s^i = s_F^i} + (1 - \kappa_t) \indicator{s^i = s_I^i}$. This general schedule is used, for example, by Edit Flows \citep{edit_flows} and schedules like the polydec schedule have been shown to outperform the linear schdedule for some tasks \citep{defog}.

\paragraph{Reinterpretation as a time reparametrization.} The factor $\dot\kappa_t / (1 - \kappa_t)$ is precisely what one obtains by substituting $t \mapsto \kappa_t$ in the linear-schedule rate $1/(1-t)$ and applying the chain rule. More generally, if $\rate$ generates a probability path $p_t$ on $[0,1]$, then by the chain rule applied to the Kolmogorov forward equation (\cref{thm:kolmogorov-forward}),
\begin{equation*}
\tilde u_t(s, a) := \dot\kappa_t \cdot u_{\kappa_t}(s, a) \quad\text{generates}\quad \tilde p_t(s) := p_{\kappa_t}(s),
\end{equation*}
and endpoints are preserved since $\kappa_0 = 0$ and $\kappa_1 = 1$. Applying this transformation to the linear-schedule trajectory rate of \cref{eq:trajectory-rate} yields the natural $\kappa$-scheduled trajectory rate.

\paragraph{$\kappa$-scheduled trajectory rate.} Probabilistically, this corresponds to drawing the action execution times $0 < t_1 < \cdots < t_n < 1$ as the order statistics of $n$ i.i.d.\ random variables with CDF $\kappa_t$ (rather than $\mathrm{U}[0,1]$). The resulting trajectory-conditioned rate is
\begin{equation}
\rate(s, a \mid \trajectory)
\;=\; \sum_{i=0}^{n-1} \indicator{s = s_i \,\wedge\, a = a_{i+1}} \cdot (n - i) \cdot \frac{\dot\kappa_t}{1 - \kappa_t},
\label{eq:traj-rate-kappa}
\end{equation}
which generates the conditional path
\begin{equation}
p_t(s \mid \trajectory) \;=\; \sum_{i=0}^{n} \indicator{s = s_i} \cdot \mathbb{P}[K_t = i],
\qquad K_t \sim \mathrm{Bin}(n, \kappa_t).
\label{eq:traj-path-kappa}
\end{equation}
The linear schedule $\kappa_t = t$ recovers \cref{eq:trajectory-rate,prop:trajectory-path}.

\paragraph{All results extend.} Because the schedule enters \cref{eq:traj-rate-kappa} only through the scalar prefactor $\dot\kappa_t / (1 - \kappa_t)$, every result in \cref{sec:trajectory-matching,sec:instantiations} carries over by replacing $1/(1-t)$ with $\dot\kappa_t / (1 - \kappa_t)$ throughout.
The $\kappa$ schedule slots into trajectory matching exactly as it does into factorized flows, without changing the structure of any proof.

    \section{Training and Sampling}
\label{app:training_sampling}

\begin{figure}[tbp]
    \centering
    \begin{minipage}[t]{0.41\textwidth}
        \refstepcounter{algorithm}
        \label{alg:trajectory_matching_training}
        \small
        \textbf{Algorithm \thealgorithm.} Training procedure for trajectory matching

        {\algtext*{EndWhile}\input{algorithms/training}}
    \end{minipage}
    \hfill
    \begin{minipage}[t]{0.55\textwidth}
        \refstepcounter{algorithm}
        \label{alg:trajectory_matching_sampling}
        \small
        \textbf{Algorithm \thealgorithm.} Sampling procedure for trajectory matching

        \input{algorithms/sampling}
    \end{minipage}
    \caption{Pseudocode for training and sampling in trajectory matching.}
    \label{fig:trajectory_matching_algorithms}
\end{figure}

\cref{alg:trajectory_matching_training,alg:trajectory_matching_sampling} give the
training and sampling procedures for trajectory matching. We note a few
practical considerations.

\paragraph{Training.}
The training loop is written in terms of the trajectory $\trajectory$
to make the connection to the theory explicit, but the trajectory itself
does not necessarily need to be materialized. In practice, we use a single
sampler that takes $(\initState, \finalState) \sim \coupling$ and
directly returns a tuple $(t, s, \rate(s, \cdot \mid \trajectory))$ —
or $(t, s, \rbrate(s, \cdot \mid \initState, \finalState))$ when using
the Rao-Blackwellized target.
Depending on the chosen trajectory distribution, the sampler can often be efficiently implemented without explicitly constructing the full trajectory $\trajectory$.
For most instantiations of interest, the
target rate at $s$ depends only on quantities such as the remaining
action set $\actionSet(s, \finalState)$ or the residual dependency DAG,
which can be computed directly from the endpoints and the sampled state.

Because the sampling routine is independent of the model, it is well
suited to run in CPU dataloader workers in parallel with the forward and
backward passes on the GPU. In our QM9 experiments, the observed loader
wait is below 1 ms per training step, so sampling and target construction
add very little exposed wall-clock time (\cref{app:runtime}).

\paragraph{Sampling.}
At each Euler step, the inner loop over $\mathcal{A}(s)$ in
\cref{alg:trajectory_matching_sampling} can be executed in parallel: each available
action is independently retained with probability
$\min\{\rate^\theta(s, a) \cdot h, 1\}$, and these draws are independent
Bernoullis. In most of our settings the sampled actions are also
\emph{compatible} — they can be applied simultaneously to $s$ without
conflict — either by design of the action space or as the typical case.
For \textsc{BondEdit}, \textsc{AtomGrow}, \textsc{FragmentGrow}, and \textsc{GrowHierarchy}, every action in $A(s)$ is independent of every other,
so any subset can be applied in parallel and the result does not depend on the application order.

For \textsc{MolEdit}, the action space mixes deletions and additions and a
sampled subset can occasionally be incompatible — for example, an
\texttt{AddBond} action targeting an atom that another sampled
\texttt{DelAtom} action removes. In practice this is rare, and we
resolve it deterministically by applying all deletions before all
additions, and skipping any action that is no longer in $A(s)$
by the time it would be applied. This keeps the parallel structure of
the inner loop while guaranteeing a well-defined update.

    \section{Action Space and Trajectory Distribution Instantiations}
\label{app:instantiations}

In this appendix we give precise definitions of the action spaces, source
distributions, couplings, and trajectory distributions for the four settings
visualized in \cref{fig:training}: \textsc{BondEdit}, \textsc{FragmentGrow},
\textsc{AtomGrow}, and \textsc{MolEdit}. We deliberately abstract away from atom
and bond features in this beginning of this section to simplify the expostion and treat all graphs as unlabeled. The
extension to feature-labeled graphs is discussed subsequently.

\paragraph{Atom-id correspondence.}
Throughout, we identify the correspondence between atoms in $\initState$ and $\finalState$ by index.
Atoms with the same index are assumed to transform into each other along a training trajectory.  Thus, the atom ids given by the coupling $\coupling$ decide what
 unambiguously what atom is ``kept'',
``deleted'', or ``added'', and what bond is ``already present in $\initState$'' or
``required in $\finalState$''.
This is the standard setting in discrete graph diffusion and flow matching approaches \citep{defog,digress}.

We represent a (partial) molecular graph as $s = (V(s), E(s))$ with $V(s)$ a set of
atom ids and $E(s)$ a set of unordered pairs $\{i,j\} \subseteq V(s)$.

\paragraph{Two recurring source distributions.}
Two source distributions appear repeatedly across the four settings:
\begin{itemize}
\item \emph{Single-atom source} ($\noiseDist^{\text{atom}}$): samples a graph
$(\{i_0\}, \emptyset)$ consisting of a single atom. Used by \textsc{FragmentGrow}
and \textsc{AtomGrow}.
\item \emph{Marginal-graph source} ($\noiseDist^{\text{marg}}$): first samples
the number of atoms $n$ from the empirical distribution of $|V(\cdot)|$ in the
training set, then samples each potential edge $\{i,j\}$ independently with
probability $\bar p_{\mathrm{edge}}$, the average edge density in the training
set. Used by \textsc{BondEdit} and \textsc{MolEdit}.
\end{itemize}

\subsection{BondEdit}
\label{app:bondedit}

\textsc{BondEdit} is the simplest setting: the atom set is fixed by the coupling and
only edges are inserted or deleted. It mirrors the action space of DeFoG~\citep{defog}.

\paragraph{States.} For every $(\initState, \finalState)$ in the support of $\coupling$
we have $V(\initState) = V(\finalState)$. Every intermediate state $s$ on a trajectory
satisfies $V(s) = V(\initState) = V(\finalState)$.

\paragraph{Source distribution.}
We use the marginal-graph source $\noiseDist = \noiseDist^{\text{marg}}$.

\paragraph{Coupling.}
$\coupling$ is the size-matched coupling: $\initState \sim \noiseDist^{\text{marg}}$
and $\finalState$ is sampled uniformly from training molecules with
$|V(\finalState)| = |V(\initState)|$, with atom ids matched arbitrarily (e.g.\ in
the order they appear in each graph).

\paragraph{Action space.} At a state $s$, the available actions are
\[
\mathcal{A}(s) \;=\;
\bigl\{\,\texttt{AddBond}_{ij} \;:\; \{i,j\} \notin E(s)\,\bigr\}
\;\cup\;
\bigl\{\,\texttt{DelBond}_{ij} \;:\; \{i,j\} \in E(s)\,\bigr\},
\]
with
$\texttt{AddBond}_{ij}(s) = (V(s), E(s) \cup \{\{i,j\}\})$
and
$\texttt{DelBond}_{ij}(s) = (V(s), E(s) \setminus \{\{i,j\}\})$.

\paragraph{Required actions.} The set of actions transforming $s$ into $\finalState$ is
\[
\actionSet(s, \finalState) \;=\;
\bigl\{\,\texttt{AddBond}_{ij} : \{i,j\} \in E(\finalState) \setminus E(s)\,\bigr\}
\,\cup\,
\bigl\{\,\texttt{DelBond}_{ij} : \{i,j\} \in E(s) \setminus E(\finalState)\,\bigr\}.
\]

So the adding all bonds that are still missing and deleting all bonds that are not present in $\finalState$.

\paragraph{Trajectory distribution.}
Any two actions in $\actionSet(\initState, \finalState)$ act on disjoint edges, so
they are pairwise independent in the sense of \cref{sec:background}. We take
$\trajDist(\cdot \mid \initState, \finalState)$ to be the \emph{independent fixed-action
distribution} (\cref{def:independent-fixed-action-distribution}) over
$\actionSet(\initState, \finalState)$ and use the Rao-Blackwellized rates of \cref{prop:independent_rates}.

\subsection{FragmentGrow}
\label{app:fragmentgrow}

\textsc{FragmentGrow} produces a molecule from a single seed atom by attaching
fragments which capture all ring substructures and individual atoms. We assume a fixed fragment library $\mathcal{F}$
(rings and ring systems extracted from the training set; in practice
$|\mathcal{F}| < 300$). Molecules whose ring system cannot be expressed using
$\mathcal{F}$ are skipped during training.
The fragment vocabulary covers 94\% of QM9, 99\% of ZINC, and 90\% of MOSES
training molecules; the remaining 6\%, 1\%, and 10\%, respectively, are skipped
during FragGrow training. Validation and test sets are not filtered: all
evaluations use the complete standard reference splits, including molecules
not representable by the fragment vocabulary.

\paragraph{States.} The source $\initState$ is a single-atom graph
$(\{i_0\}, \emptyset)$ for some seed atom $i_0$ that is also present in $\finalState$.
Every intermediate state is a connected subgraph of $\finalState$ containing $i_0$.

\paragraph{Source distribution.}
We use the single-atom source $\noiseDist = \noiseDist^{\text{atom}}$.

\paragraph{Coupling.}
$\finalState$ is sampled from the training distribution and the seed atom $i_0$
is sampled uniformly from $V(\finalState)$.

\paragraph{Action space.} At a state $s$, the available actions are
\begin{itemize}
\item $\texttt{AddAtom}_i$ for each $i \in V(s)$, which adds a new atom $j \notin V(s)$
together with a single bond $\{i, j\}$ to the parent atom $i$.
\item $\texttt{AddFragment}^F_i$ for each $i \in V(s)$ and each fragment $F \in \mathcal{F}$
that admits an atom-attachment, which inserts a copy of $F$ joined to $s$ at atom $i$.
\item $\texttt{AddFragment}^F_{ij}$ for each $\{i,j\} \in E(s)$ and each fragment
$F \in \mathcal{F}$ that admits a bond-attachment, which inserts a copy of $F$ joined
to $s$ along the bond $\{i,j\}$.
\end{itemize}
Each of these actions strictly grows the state. 

\paragraph{Decomposition.}
\begin{figure}[tb]
\centering
\subfigure[Decomposition]{\includegraphics[width=0.19\textwidth]{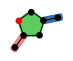}}
\hfill
\subfigure[Dependency tree]{\includegraphics[width=0.19\textwidth]{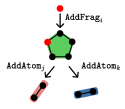}}
\caption{FragmentGrow decompostion and dependency tree: Given the decomposition of $\finalState$ into fragments and atom/bonds, there is one unique set of actions to generate $\finalState$ from $\initState$ (marked in red), and the dependencies between these actions form a tree.}
\label{fig:fragmentgrow-structure}
\end{figure}
Given $\finalState$ and a chosen root atom $i_0 \in V(\finalState)$, there is a
\emph{unique hierarchical decomposition} of $\finalState$ into a structural action tree,
computed as follows:
\begin{enumerate}
\item Initialize the ring units as the SSSR (smallest set of smallest rings) rings of $\finalState$.
\item Repeatedly merge ring units whose overlap is more than a single shared atom
or a single shared bond, and merge any remaining cycle in the ring-overlap graph
into a single larger ring fragment. After this step, distinct ring fragments share
at most one atom or one bond.
\item Build the structural action tree by a BFS-like traversal from $i_0$:
non-ring bonds become \texttt{AddAtom} children, and ring fragments incident to
the current frontier become \texttt{AddFragment} children attached either at an
atom (if the ring shares one atom with the parent) or at a bond
(if the ring shares one bond with the parent).
\end{enumerate}

\paragraph{Required actions.}
The structural action tree above determines
$\actionSet(\initState, \finalState)$ (one action per non-root unit of the
decomposition). The structural action tree also indicated the dependency relationships.
This procedure is the key to obtaining a tree  of actions:
because each ring system is absorbed into a single atomic action, no action has
two prerequisites and there is one unique set of actions to generate $\finalState$ from $\initState$.

\paragraph{Trajectory distribution.}
We take $\trajDist(\cdot \mid \initState, \finalState)$ to be the \emph{dependent
fixed-action distribution} (\cref{def:dependent-fixed-action-distribution}) over
$\actionSet(\initState, \finalState)$ with the dependency tree described above.
Then, we can derive the Rao-Blackwellized rates efficiently and use for training

\subsection{AtomGrow}
\label{app:atomgrow}

\textsc{AtomGrow} also grows from a single seed atom, but uses only single-atom
and single-bond actions; rings are produced by \texttt{AddBond} actions that close
cycles between previously inserted atoms.

\paragraph{States.} As in \textsc{FragmentGrow}, $\initState = (\{i_0\}, \emptyset)$
for a seed atom $i_0 \in V(\finalState)$, and every intermediate state is a
connected subgraph of $\finalState$ containing $i_0$.

\paragraph{Source distribution.}
We use the single-atom source $\noiseDist = \noiseDist^{\text{atom}}$.

\paragraph{Coupling.}
As in \textsc{FragmentGrow}: $\finalState$ is sampled from the training
distribution, $i_0$ uniformly from $V(\finalState)$, and
$\initState = (\{i_0\}, \emptyset)$.

\paragraph{Action space.} At state $s$,
\[
\mathcal{A}(s) \;=\;
\{\texttt{AddAtom}_i : i \in V(s)\}
\;\cup\;
\{\texttt{AddBond}_{ij} : i, j \in V(s),\; \{i,j\} \notin E(s)\}.
\]
$\texttt{AddAtom}_i$ adds a new atom $j$ together with the bond $\{i,j\}$;
$\texttt{AddBond}_{ij}$ inserts the bond $\{i,j\}$ between two existing atoms.

\paragraph{Latent decomposition.}
Different orderings of \texttt{AddAtom} / \texttt{AddBond} actions can yield the
same final molecule via genuinely different action sets — for example, the same
edge of a ring may be inserted either by an \texttt{AddAtom} that introduces its
second endpoint or by an \texttt{AddBond} that closes the cycle. To resolve this,
we introduce a latent
\[
z \;=\; T,
\]
a rooted spanning tree of $\finalState$ rooted at the seed atom $i_0$. We use the
factored form of \cref{sec:beyond-fixed} with $p(z \mid \initState, \finalState)$
a distribution over rooted spanning trees of $\finalState$ with root $i_0$. In practice we use for $z$ a random BFS tree of $\finalState$ rooted at $i_0$, which can be sampled efficiently and yields good results.

\paragraph{Required actions given $z$.}
Given $z = T$, the action set decomposes into two groups:
\begin{itemize}
\item For each non-root atom $v \in V(\finalState) \setminus \{i_0\}$, an action
$\texttt{AddAtom}_{\mathrm{parent}_T(v)}$ that produces $v$.
\item For each non-tree edge $\{i,j\} \in E(\finalState) \setminus E(T)$, an action
$\texttt{AddBond}_{ij}$.
\end{itemize}

\paragraph{Dependency structure.}
The \texttt{AddAtom} actions form a tree mirroring $T$:
$\texttt{AddAtom}_{\mathrm{parent}_T(v)}$ is a prerequisite of
$\texttt{AddAtom}_{\mathrm{parent}_T(w)}$ whenever $\mathrm{parent}_T(w) = v$.
Each $\texttt{AddBond}_{ij}$ has \emph{two} prerequisites — the
\texttt{AddAtom} actions producing $i$ and $j$ — and has no successors. 

\paragraph{Trajectory distribution.}
Conditional on $z$, we use the forest-plus-sink construction of
\cref{app:non-forest-approx} for the DAG above. It specifies the trajectory
distribution $\trajDist(\cdot \mid z, \initState, \finalState)$ together with
the conditional timing law and exact rates.

\paragraph{Multi-root variant.}
The construction extends naturally to sources $\initState$ consisting 
of $k$ disjoint seed atoms (used in our substructure-linking 
experiments, \cref{app:downstream}). The coupling samples $\finalState$ 
from the training distribution, runs BFS from $k$ random seeds 
$i_1, \dots, i_k \in V(\finalState)$ to partition $V(\finalState)$ 
into $k$ components, and sets $\initState = (\{i_1, \dots, i_k\}, 
\emptyset)$. The latent $z$ becomes a rooted spanning forest of 
$\finalState$ with one tree per seed; everything else is unchanged. 
The choice of $k$ (or a distribution over $k$) can be set flexibly. 
Empirically, we found that this matches single-root performance on 
standard generation while enabling zero-shot transfer to downstream 
tasks like substructure linking.

\subsection{MolEdit}
\label{app:moledit}

\textsc{MolEdit} is the most general setting: the source $\initState$ is itself a
(noisy) molecule, and trajectories edit it into $\finalState$ using both deletions
and additions of edges \emph{and} nodes.

\paragraph{States.} $\initState$ and $\finalState$ are molecular graphs related by
the atom-id correspondence: an atom $i \in V(\initState)$ is \emph{kept} if
$i \in V(\finalState)$ and \emph{deleted} otherwise. We write
\[
V_{\mathrm{kept}} = V(\initState) \cap V(\finalState), \quad
V_{\mathrm{del}} = V(\initState) \setminus V(\finalState), \quad
V_{\mathrm{new}} = V(\finalState) \setminus V(\initState),
\]
and assume $V_{\mathrm{kept}} \neq \emptyset$.

\paragraph{Source distribution.}
We use the marginal-graph source $\noiseDist = \noiseDist^{\text{marg}}$.

\paragraph{Coupling.}
The base coupling $\coupling$ is independent:
$\initState \sim \noiseDist^{\text{marg}}$ and
$\finalState \sim \dataDist$ are drawn independently. In practice, however, we
found it beneficial to maximize the edge overlap between $\initState$ and $\finalState$ to reduce the number of required edits.
Concretely, after sampling $(\initState, \finalState) \sim \noiseDist^{\text{marg}}
\otimes \dataDist$, we relabel the atoms of $\initState$ by greedily aligning
them to the atoms of $\finalState$ so as to maximize the size of the edge overlap
$|E(\initState) \cap E(\finalState)|$ (under the resulting atom-id
correspondence). This relabeling does not change $\noiseDist$ or $\dataDist$, but it concentrates $\coupling$ on
endpoint pairs with shorter trajectories.

\paragraph{Action space.} At state $s$,
\begin{align*}
\mathcal{A}(s) \;=\;\;
& \{\texttt{AddAtom}_i : i \in V(s)\}
\;\cup\; \{\texttt{AddBond}_{ij} : i,j \in V(s),\, \{i,j\} \notin E(s)\} \\
& \;\cup\; \{\texttt{DelBond}_{ij} : \{i,j\} \in E(s)\}
\;\cup\; \{\texttt{DelAtom}_i : i \in V(s),\, \deg_s(i) = 0\}.
\end{align*}
$\texttt{DelAtom}_i$ is only available once atom $i$ has no incident bonds.

\paragraph{Latent decomposition.}
We introduce a latent
\[
z \;=\; T,
\]
a rooted spanning forest of $\finalState$ whose roots are exactly
$V_{\mathrm{kept}}$. Each component contains one kept atom, and every new
atom $v \in V_{\mathrm{new}}$ has a unique parent $\mathrm{parent}_T(v)$
to which it attaches when inserted. Thus, the forest edges $E(T)$ are
exactly the bonds created by atom additions. Such a forest can be
constructed by a multi-source BFS initialized with all kept atoms.

\paragraph{Required actions given $z$.}
Given $z = T$, $\actionSet(\initState, \finalState \mid z)$ consists of:
\begin{itemize}
\item \textbf{Bond deletions.} $\texttt{DelBond}_{ij}$ for every
$\{i,j\} \in E(\initState)$ that is either incident to a deleted atom
($\{i,j\} \cap V_{\mathrm{del}} \neq \emptyset$) or is a kept-kept bond not
in $\finalState$ ($\{i,j\} \subseteq V_{\mathrm{kept}}$ and
$\{i,j\} \notin E(\finalState)$).
\item \textbf{Atom deletions.} $\texttt{DelAtom}_i$ for every $i \in V_{\mathrm{del}}$.
\item \textbf{Atom additions.} $\texttt{AddAtom}_{\mathrm{parent}_T(v)}$ for every
$v \in V_{\mathrm{new}}$.
\item \textbf{Bond additions.} $\texttt{AddBond}_{ij}$ for every
\[
\{i,j\} \in
\bigl(E(\finalState) \setminus E(\initState)\bigr)
\setminus E(T).
\]
These are all missing target bonds except those created by atom
additions, including missing bonds between kept atoms and bonds
connecting different forest components.
\end{itemize}

\paragraph{Dependency structure.}
The dependencies are:
\begin{itemize}
\item For each $i \in V_{\mathrm{del}}$, every $\texttt{DelBond}_{ij}$ incident to
$i$ is a prerequisite of $\texttt{DelAtom}_i$.
\item For each $v \in V_{\mathrm{new}}$, the action inserting $v$
depends on the action inserting $\mathrm{parent}_T(v)$ if that
parent is also new; otherwise, it has no prerequisites.
\item Each required $\texttt{AddBond}_{ij}$ depends on the
atom-insertion action for each endpoint in $V_{\mathrm{new}}$.
Endpoints in $V_{\mathrm{kept}}$ introduce no prerequisites.
\end{itemize}
The atom-insertion dependencies form a forest rooted at insertions
whose parents are kept atoms. Bond additions and atom deletions
have no successors, so the resulting action DAG retains the
forest-plus-sink structure.

\paragraph{Trajectory distribution.}
Conditional on $z$, we use the forest-plus-sink construction of
\cref{app:non-forest-approx} for the DAG above, which specifies the
trajectory distribution, conditional timing law, and exact rates.

\subsection{GrowHierarchy}
\label{app:growhierarchy}
 
To illustrate that trajectory matching applies beyond molecular graph generation, we
include a non-molecular setting: generating rooted trees by repeatedly attaching
children to existing nodes. This is the simplest non-trivial instance of a
forest-structured dependent fixed-action distribution --- a single action type,
no latent, and a closed-form Rao-Blackwellized rate from \cref{prop:dependent-fixed-action-forest-rates}.
 
\paragraph{States.} A state is a rooted tree $s = (V(s), E(s), r)$ with $V(s)$ a set of
node ids, $E(s) \subseteq V(s) \times V(s)$ a set of directed parent-to-child
edges, and $r \in V(s)$ the root. The intermediate states along a trajectory
are subtrees of $\finalState$ containing its root.
 
\paragraph{Source distribution.}
$\noiseDist$ samples a single-node tree $(\{r\}, \emptyset, r)$. If the training
distribution is over rooted trees, $r$ is the designated root of the sampled
$\finalState$; otherwise $r$ is sampled uniformly from $V(\finalState)$.
 
\paragraph{Coupling.}
$\finalState$ is sampled from the training distribution and $\initState$ is
the single-node tree at its root.
 
\paragraph{Action space.} At a state $s$,
\[
\mathcal{A}(s) \;=\; \{\texttt{AddChild}_i \;:\; i \in V(s)\},
\]
where $\texttt{AddChild}_i$ adds a new node $j \notin V(s)$ together with the
directed edge $(i, j)$.
 
\paragraph{Required actions.}
The set of actions transforming $\initState$ into $\finalState$ is
\[
\actionSet(\initState, \finalState) \;=\;
\bigl\{\,\texttt{AddChild}_{\mathrm{parent}_{\finalState}(v)} \;:\;
v \in V(\finalState) \setminus \{r\}\,\bigr\},
\]
i.e.\ one action per non-root node of $\finalState$.
 
\paragraph{Dependency structure.}
The dependency DAG is exactly $\finalState$: the action producing $v$ is a
prerequisite of the action producing any child of $v$. This DAG is a tree (in
particular a forest), so \cref{prop:dependent-fixed-action-forest-rates} applies.
 
\paragraph{Trajectory distribution.}
$\trajDist(\cdot \mid \initState, \finalState)$ is the dependent fixed-action
distribution over $\actionSet(\initState, \finalState)$, uniform over the
topological orderings of the tree.

\subsection{Handling Node and Edge Features}
\label{app:features}
 
The instantiations above abstract away from node and edge features (atom types,
bond orders, charges, etc.). Two natural strategies extend them to feature-labeled
graphs.
 
\paragraph{Strategy A: joint prediction at insertion.}
Insertion actions are parameterized by features: $\texttt{AddAtom}_i(v)$ inserts
a new atom with feature $v$ in a single action, and analogously for
$\texttt{AddBond}_{ij}(b)$ and $\texttt{AddFragment}$. The action vocabulary
grows by a factor of the feature alphabet, but no separate feature-edit actions
are needed.
 
\paragraph{Strategy B: separate feature-edit actions.}
Insertion actions produce nodes and bonds with a random feature drawn from a
fixed distribution; features are then modified by dedicated
$\texttt{ChangeAtomFeature}_i(v)$ and $\texttt{ChangeBondFeature}_{ij}(b)$ actions.
For settings such as \textsc{BondEdit} that already work with a fixed atom set,
$\texttt{ChangeAtomFeature}$ and $\texttt{ChangeBondFeature}$ become additional
edits on the existing structure.
 
\paragraph{Tradeoffs and our default.}
Strategy~A reduce the number of actions needed, but lock features in at insertion time. Strategy~B keeps features editable throughout the trajectory, which lets the model correct earlier feature choices during sampling, and it composes cleanly with all instantiations above without enlarging fragment libraries by feature combinations. 
The best choice depend on the task and setting, but we found strategy~B to perform best in most experiments.

The dependency structures can be easily adapted to accomodate these additional feature-wise edit actions.

\subsection{Exact conditional paths for forest-plus-sink dependencies}
\label{app:non-forest-approx}

The dependency structures of \textsc{AtomGrow} and \textsc{MolEdit} motivate a tractable construction beyond forests. Required \texttt{AddBond} actions may require both endpoint atoms to be inserted, and \textsc{MolEdit}'s \texttt{DelAtom} actions require their incident bonds to be deleted. These multi-parent actions are sinks: they have prerequisites but no successors. We construct a conditional path by evolving the forest independently and scheduling each sink after its prerequisites.

\paragraph{Dependency structure.}
Fix endpoints $(\initState,\finalState)$ and a latent decomposition $z$, and write $G:=\actionSet(\initState,\finalState\mid z)$ for the resulting action dependency DAG. Every topological ordering of $G$ defines a valid action sequence between these endpoints. We consider DAGs consisting of a forest $F$ and a set $Z$ of multi-parent sinks whose prerequisites lie in $F$. This partition is fixed throughout the construction.

\paragraph{Sampling the conditional path.}
We use the linear time schedule and sample firing times as follows:
\begin{enumerate}
    \item Sample a uniform topological ordering of $F$ and assign its actions the ordered values of $|F|$ independent $\mathrm{Uniform}(0,1)$ draws. This is the existing forest construction underlying \cref{prop:dependent-fixed-action-forest-rates}; denote the resulting firing times by $t_a$.
    \item For each sink $b\in Z$, let $t_b^{\min}=\max_{a\in\operatorname{prereq}(b)}t_a$ and sample
    \[
        t_b=t_b^{\min}+(1-t_b^{\min})U_b, \qquad U_b\sim\mathrm{Uniform}(0,1),
    \]
    using draws $U_b$ that are independent across sinks and independent of the forest draws.
    \item Obtain $S_t$ by applying all actions with firing time at most $t$ to $\initState$, in increasing firing-time order.
\end{enumerate}
This procedure enables efficient sampling of intermediate states. When $Z=\varnothing$, it recovers exactly the forest construction of \cref{prop:dependent-fixed-action-forest-rates}.

\paragraph{Exact conditional rates.}
We now show that the constructed path admits tractable, closed-form conditional rates. Fixing the endpoints and latent decomposition $z$, let $G_s:=\actionSet(s,\finalState\mid z)$ denote the remaining action dependency DAG at state $s$. As in the preceding fixed-action constructions, we assume that $s$ determines the remaining actions under this conditioning. Let $F_s$ be the forest induced by the remaining forest actions, and let $\operatorname{desc}_{F_s}(a)$ denote the size of the subtree rooted at $a$, including $a$. The rates are given by the following proposition.

\begin{proposition}[Exact forest-plus-sink path]
\label{prop:forest-sink-path}
The sampling construction defines a conditional probability path starting from $\initState$ and ending at $\finalState$, generated, for $0\leq t<1$, by
\begin{equation}
\label{eq:forest-sink-rate}
u_t^{\mathrm{alt}}(s,a\mid\initState,\finalState,z)
=\frac{1}{1-t}
\begin{cases}
    \operatorname{desc}_{F_s}(a), & a\in\operatorname{roots}(F_s),\\
    1, & a\in Z\cap\operatorname{roots}(G_s),\\
    0, & \text{otherwise}.
\end{cases}
\end{equation}
\end{proposition}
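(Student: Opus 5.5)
The plan is to work directly with the firing-time construction rather than with the Kolmogorov forward equations of \cref{thm:kolmogorov-forward}. First we identify the conditional law of all future firing times given the history $\mathcal{H}_t$ of the process up to time $t$. Then we show that this law depends on $\mathcal{H}_t$ only through $(t, S_t)$. Finally we read off the instantaneous hazards of each action, which gives \cref{eq:forest-sink-rate}.

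The endpoint claims are immediate. All firing times lie in $(0,1)$ almost surely, so $S_0=\initState$ and $S_1=\finalState$. Every sink fires strictly after all its prerequisites, and the forest times respect a topological ordering of $F$. Hence the firing order is a topological ordering of $G$ and every intermediate state is a valid state on a trajectory. Ties occur with probability zero.

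\textbf{Step 1 (forest part).} Let $k=|F_s|$ be the number of forest actions not yet fired by time $t$. The forest times are the order statistics of $|F|$ i.i.d.\ $\mathrm{Uniform}(0,1)$ variables, assigned along an independent uniform topological ordering of $F$. We will use two standard facts. First, given the history, the $k$ remaining times are distributed as the order statistics of $k$ i.i.d.\ $\mathrm{Uniform}(t,1)$ variables; this is the memorylessness of uniform order statistics. Second, the remaining suffix of the ordering is uniform over linear extensions of $F_s$, as in the proof of \cref{prop:dependent-fixed-action-general-rates}. The total hazard of the next forest firing is then $k/(1-t)$. By the interleaving argument in the proof of \cref{prop:dependent-fixed-action-forest-rates}, the probability that a given root $a$ of $F_s$ is next equals $\operatorname{desc}_{F_s}(a)/k$. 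Multiplying, root $a$ has hazard $\operatorname{desc}_{F_s}(a)/(1-t)$, and non-root forest actions have hazard $0$.

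\textbf{Step 2 (sinks).} Consider an unfired sink $b$. If some prerequisite of $b$ is unfired at time $t$, then $t_b>t_b^{\min}>t$ forces hazard $0$ at $t$; equivalently, $b\notin\operatorname{roots}(G_s)$. If all prerequisites have fired, then $t_b^{\min}=m\le t$ is already determined by the history. On the event $t_b>t$, the variable $t_b=m+(1-m)U_b$ is uniform on $(t,1)$ whatever the value of $m$, which gives hazard $1/(1-t)$. Moreover, the residual time of $b$ is conditionally independent of everything else given the history, because the $U_b$ are independent across sinks and of the forest draws.

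\textbf{Step 3 (Markov property and assembly).} Steps 1--2 show that the conditional law of the future given $\mathcal{H}_t$ is determined by three things: $t$, the set of remaining forest actions together with their induced forest $F_s$, and which sinks remain and whether each is enabled. By the standing assumption that $s$ determines $G_s$ under the fixed conditioning on $(\initState,\finalState,z)$, all of this is a function of $(t,S_t)$. So $(S_t)$ is a (time-inhomogeneous) CTMC. Its action rates are the sum of the independent hazards from the forest and sink components, which is exactly \cref{eq:forest-sink-rate}. Superposition is valid because simultaneous firings have probability zero.

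I expect the main obstacle to be the rigorous form of the Markov claim in Step 3. Specifically, one must verify that conditioning on the full past does not bias the remaining forest ordering or the sink residuals, for instance through the realized values of $t_b^{\min}$. The key observation that handles the sinks is that the conditional uniformity of $t_b$ on $(t,1)$ is independent of $m$. If the direct argument becomes messy, a fallback is to check the Kolmogorov forward equation for the explicit path law. That law is a product over forest and sink components given the forest times, and the check reduces to the same hazard computation.
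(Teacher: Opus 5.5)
Your proposal is correct and follows essentially the same route as the paper's proof. Both arguments use the same three ingredients: forest actions inherit the hazards $\operatorname{desc}_{F_s}(a)/(1-t)$ from the forest construction; an enabled, unfired sink has hazard $1/(1-t)$ whatever the value of $t_b^{\min}$, and the sink draws do not affect the forest's future; and the endpoint claim holds because the firing order is a topological ordering of $G$ completed before time $1$. Your Steps 1 and 3 add one refinement: using memorylessness of uniform order statistics and uniformity of the residual linear extension, they spell out the Markov property that the paper only asserts briefly.
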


\begin{proof}
The forest process has the rates of \cref{prop:dependent-fixed-action-forest-rates}: every available forest action $a$ fires at rate $\operatorname{desc}_{F_s}(a)/(1-t)$. Sinks do not affect this process. Their firing histories depend only on past forest events and independent sink draws, so observing them does not change the forest's future rates given its current completed actions.

A sink $b$ cannot fire before $t_b^{\min}$. Conditional on its prerequisites' firing times, $t_b$ is uniform on $[t_b^{\min},1]$. Once enabled and while unexecuted, its hazard is
\[
    \frac{1/(1-t_b^{\min})}{(1-t)/(1-t_b^{\min})}=\frac{1}{1-t}.
\]
This expression is independent of $t_b^{\min}$, so the same rate holds after averaging over past prerequisite times. Thus all transition intensities depend only on the completed action set and $t$, yielding the stated CTMC and rates.

Forest actions respect their topological ordering and fire before $1$ almost surely. Each sink fires strictly after its latest prerequisite and before $1$ almost surely. The full firing order is therefore a topological ordering of $G$ and executes every required action. By assumption it transforms $\initState$ into $\finalState$.
\end{proof}

\paragraph{Connection to trajectory matching.}
Trajectory matching permits any valid conditional path between the prescribed endpoints. Since \cref{eq:forest-sink-rate} gives exact rates for such a path, the conditional-to-marginal argument of \cref{thm:trajectory-marginal} applies with the endpoints and $z$ as conditioning variables. Under realizability, matching these rates recovers the corresponding marginal rate and transports the source distribution to the target distribution.

\paragraph{Comparison with the uniform-topological path.}
The difference between the two paths lies in the back-coupling of sinks to their prerequisites. In the uniform-topological construction, all actions share a joint timing law: a sink must occur after its prerequisites, so its presence can accelerate those prerequisites, leaving time for the sink to execute. Our construction removes this back-coupling by sampling the forest independently and then scheduling sinks after their prerequisites. Once a sink is enabled, its rate is $1/(1-t)$ in both constructions.

For an available forest action, the relative rate difference is controlled by the number of still-disabled sinks with prerequisites in its residual subtree, relative to the size of that subtree. If there are no such sinks, the rates agree exactly. Both constructions respect the same dependencies and reach the same endpoints. Their difference concerns the intermediate probability path, which is a modeling choice in trajectory matching.

    \section{Connection to existing methods}
\label{app:connections}

\begin{figure}[tbp]
    \centering
    \includegraphics[width=1\textwidth]{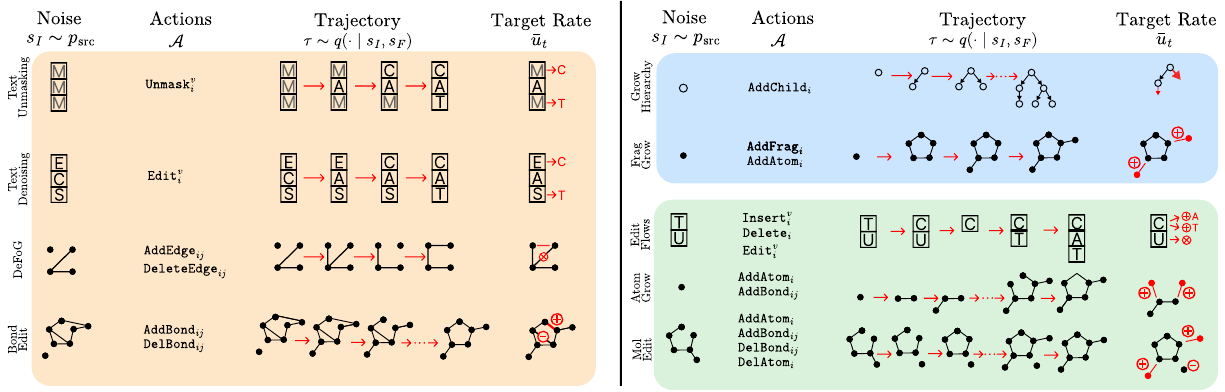}
    \caption{Examples of different existing methods that can be seen as instances of Trajectory Matching (text unmasking, text denoising, DeFog GraphGeneration \citealp{defog} and EditFlows \citealp{edit_flows}) and our new instantiations of Trajectory Matching (AtomGrow, FragGrow, MolEdit, BondEdit, and GrowHierarchy). for each instantiation, we show an initial state $\initState \sim \noiseDist$, the action space $\mathcal{A}$,a sample trajectory $\tau$ from $\initState$ to a final state $\finalState \sim \dataDist$ and the Rao-Blackwellized target rate $\rbrate(s, a \mid \initState, \finalState)$ of an intermediate state $s$.}
    \label{fig:connections-overview}
\end{figure}

We know show that Trajectory Matching is a general framework that encompasses many existing methods as special cases. In \cref{fig:connections-overview}, we give an overview of how different existing methods can be seen as instances of Trajectory Matching, and how our new instantiations of Trajectory Matching fit into this picture. We discuss these connections in more detail in the following sections.

Any factorized flow matching method, including text unmasking \citep{discrete_diffusion}, text denoising and fixed size graph generation \citep{defog} can be directly seen as instance of Trajectory Matching with an independent fixed-action distribution.   

We will next show that we can also see Edit Flows \citep{edit_flows} as an instance of Trajectory Matching.

    \subsection{Edit Flows as Trajectory Matching}
\label{app:editflows}

We finally relate our framework to Edit Flows. Edit Flows define a CTMC directly
on variable-length sequences using edit operations: insertions, deletions, and
substitutions. The main difficulty is that, for two sequences, there is in
general no unique sequence of edits transforming one into the other. Edit Flows
handle this by introducing an auxiliary alignment. Given a source sequence and a
target sequence, an alignment produces two padded sequences $(z_0,z_1)$ of the
same length, where the blank symbol $\epsilon$ records insertions and deletions.
Each aligned coordinate $j$ with $z_0^j\neq z_1^j$ then specifies one elementary
edit: if $z_0^j=\epsilon$ the edit inserts $z_1^j$, if $z_1^j=\epsilon$ the edit
deletes $z_0^j$, and otherwise the edit substitutes $z_0^j$ by $z_1^j$.

This fits naturally into trajectory matching by viewing the alignment as an
additional latent variable. In the notation of Sec.~4, we take
\[
z := (z_0,z_1),
\]
and first sample $z$ from the alignment distribution conditional on the endpoint
pair. Conditional on $z$, define the set of indices that need to change as
\[
E(z) := \{j : z_0^j \neq z_1^j\}.
\]
A trajectory is then a permutation of this index set,
\[
\trajectory = (j_1,\dots,j_m),
\qquad
m := |E(z)|,
\]
and represents the order in which the alignment-induced edits are performed.
For a subset $S\subseteq E(z)$ of edits that have already been applied, let
$s_S$ be the current observable sequence obtained by applying the edits in $S$
and removing blanks. For each remaining index $j\in E(z)\setminus S$, let
\[
A_z(j,S)
\]
denote the edit action on the current sequence $s_S$ corresponding to applying
the aligned edit at coordinate $j$. We define
\[
\trajDist(\trajectory\mid z)
\]
to be uniform over all permutations of $E(z)$. Equivalently, conditional on
$(z,S)$, the next index in the trajectory is uniform over the remaining set
$E(z)\setminus S$.

It is important that this is not, in general, a fixed-action distribution in the
sense of independent component-wise edits. Although the latent index set
$E(z)$ is fixed once the alignment is fixed, the concrete edit action induced by
an index can depend on the set $S$ of edits that have already been applied,
because insertions and deletions change positions in the observable sequence.
For example, starting from the aligned source $\epsilon\,\epsilon$ and
target $A\,B$, the ordering that inserts $A$ before $B$ corresponds to
\[
\mathrm{insert}_0(A) \;\to\; \mathrm{insert}_1(B),
\]
whereas the other ordering is
\[
\mathrm{insert}_0(B) \;\to\; \mathrm{insert}_0(A).
\]
Thus the same latent coordinate system defines the remaining edits, but the
actual observable edit action $A_z(j,S)$ is state-dependent. We next
show that, despite this dependence, marginalizing over the uniform ordering
recovers exactly the Edit Flow probability path and marginal rate.

\begin{proposition}[Edit Flows are uniform-order trajectory matching]
\label{prop:ef-equiv}
For a fixed alignment distribution over $(z_0, z_1)$, let $\rate^{\mathrm{EF}}$
denote the Edit Flow rate with a linear time schedule, and let $\rate^{\mathrm{TM}}$ denote the marginal
trajectory-matching rate obtained by sampling, conditional on $(z_0, z_1)$, a
uniform ordering of all alignment-compatible edits. Then Edit Flows and this
trajectory-matching construction induce the same marginal probability path
and the same marginal rate:
\[
    p_t^{\mathrm{EF}}(s) = p_t^{\mathrm{TM}}(s),
    \qquad
    \rate^{\mathrm{EF}}(s, a) = \rate^{\mathrm{TM}}(s, a).
\]
\end{proposition}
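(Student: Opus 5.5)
The plan is to condition on a fixed alignment $z=(z_0,z_1)$ throughout and to compare the two constructions at the level of the \emph{aligned} (padded) sequences. Afterwards we marginalize over $z$ and over the endpoint coupling, exactly as in \cref{thm:trajectory-marginal}.

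\textbf{Step 1: the conditional paths agree.} Fix $z$ and write $m=|E(z)|$. By \cref{prop:trajectory-path}, under the trajectory-conditioned rate the number of executed edits at time $t$ is $K_t\sim\mathrm{Bin}(m,t)$. Given $K_t=k$ and a uniformly random ordering of $E(z)$, the set $S$ of applied coordinates is uniform over size-$k$ subsets of $E(z)$. Therefore each $j\in E(z)$ is applied independently with probability $t$. This is the same argument as in Part 2 of the proof of \cref{prop:factorized-flow-equivalence}. The key point is that the map $S\mapsto s_S$ (apply the aligned edits in $S$ to $z_0$ and delete blanks) depends only on the \emph{set} $S$, not on the order. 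The state-dependence of the observable action $A_z(j,S)$ therefore does not affect which state is reached. Consequently the law of $S_t$ given $z$ is the pushforward, under removal of blanks, of the factorized aligned path in which each coordinate equals $z_1^j$ with probability $t$ and $z_0^j$ otherwise. This is precisely the Edit Flow conditional path with linear schedule. Averaging over $z$ and over the endpoints yields $p_t^{\mathrm{EF}}=p_t^{\mathrm{TM}}$.

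\textbf{Step 2: the conditional rates agree after Rao--Blackwellizing the ordering.} Fix $z$ and an applied set $S$, working on the augmented state $(z,S)$. The posterior over orderings given $(z,S)$ and $t$ is uniform over orderings of $E(z)\setminus S$, by the same symmetry used in \cref{prop:independent_rates}. Hence each remaining coordinate $j$ is the next edit with probability $1/(m-|S|)$, and it carries the trajectory rate $(m-|S|)/(1-t)$. The Rao--Blackwellized rate of the observable action $A_z(j,S)$ at $s_S$ is therefore $1/(1-t)$ for every $j\in E(z)\setminus S$. This is exactly the Edit Flow conditional rate: each aligned coordinate with $z_t^j\neq z_1^j$ flips at rate $1/(1-t)$, which is read as the corresponding insert, delete or substitute action on the de-blanked sequence. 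Several coordinates can induce the same observable action; for example, inserting identical tokens at adjacent blank slots gives the same action. In that case both sides sum over the coordinates in the same way, since both are summing rates of aligned transitions that map to the same $(s,a)$.

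\textbf{Step 3: marginalize.} Both marginal rates are conditional expectations of these conditional rates given $S_t=s$. The joint law of $(z,S_t)$ is the same under both constructions (Step 1), and the conditional rates agree as functions of $(z,S_t,t)$ (Step 2). Therefore $\EX[\,\cdot\mid S_t=s]$ gives $\rate^{\mathrm{EF}}(s,a)=\rate^{\mathrm{TM}}(s,a)$ by the tower property. We only need to observe that Rao--Blackwellizing over orderings given $(z,S)$, as an intermediate step, does not change the marginal rate.

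\textbf{Main obstacle.} I expect the hardest part to be the bookkeeping that makes Step 2 rigorous. The aligned state $(z,S)$ is richer than the observable sequence $s$, trajectories can revisit observable states, and distinct coordinates can produce the same observable action. To handle this, I would formulate everything on the latent space $(z,S)$ first, following the repeated-states remark after \cref{prop:trajectory-path}, where we condition on extra progress information and then average. I would then project to observable $(s,a)$ only in the final conditional expectation, where both sides are defined by the identical projection.
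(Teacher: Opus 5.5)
Your proposal is correct and follows the paper's proof essentially step for step. The paper uses the same three moves: the binomial count with a uniform prefix set gives the same law of $(z_0,z_1,S_t)$ under both constructions, with the observable state as a pushforward; averaging the next-edit rate $|R|/(1-t)$ over the uniform suffix yields $\sum_{j\in R}\frac{1}{1-t}\indicator{a=A(j,s_S)}$ pointwise in the latent triple; and the tower property, using that $\{X_t=s\}$ depends only on $(z_0,z_1,S)$, finishes the argument. Your explicit handling of revisited observable states and coinciding observable actions on the augmented state $(z,S)$ is the same bookkeeping the paper performs implicitly by conditioning on $S$.
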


\begin{proof}
Fix an alignment $(z_0, z_1)$ and abbreviate
\[
    E := \{j : z_0^j \neq z_1^j\},
    \qquad
    m := |E|.
\]
For $S \subseteq E$, let $s_S$ be the observable state obtained by applying
the edits in $S$ and removing blanks. We do not assume that the map
$(z_0, z_1, S) \mapsto s_S$ is injective.

\emph{Probability paths.} In trajectory matching, a trajectory is a uniform
ordering $\trajectory = (j_1, \dots, j_m)$ of the edits in $E$, and under the
linear schedule the firing times are i.i.d.\ uniform on $[0, 1]$ and
independent of $\trajectory$. In particular, the number of edits completed by
time $t$ satisfies $K_t \sim \mathrm{Binomial}(m, t)$ independently of
$\trajectory$. Hence, for any fixed $S \subseteq E$ with $|S| = k$,
\[
    \begin{aligned}
        \mathbb{P}^{\mathrm{TM}}(S_t = S \mid z_0, z_1)
        &=
        \mathbb{P}(K_t = k)\,
        \mathbb{P}(\{j_1, \dots, j_k\} = S)
        \\
        &=
        \binom{m}{k} t^k (1-t)^{m-k}
        \cdot
        \frac{1}{\binom{m}{k}}
        \\
        &=
        t^k (1-t)^{m-k}.
    \end{aligned}
\]
In Edit Flows, each aligned edit is independently applied with probability
$t$, so
\[
    \mathbb{P}^{\mathrm{EF}}(S_t = S \mid z_0, z_1)
    =
    t^{|S|} (1-t)^{m-|S|}.
\]
Therefore the two constructions induce the same distribution over
$(z_0, z_1, S_t)$. Since the observable state is the deterministic pushforward
$X_t = s_{S_t}$ in both,
\[
    p_t^{\mathrm{TM}}(s \mid z_0, z_1)
    =
    p_t^{\mathrm{EF}}(s \mid z_0, z_1),
    \qquad
    p_t^{\mathrm{TM}}(s)
    =
    p_t^{\mathrm{EF}}(s).
\]

\emph{Marginal rates.} Fix $(z_0, z_1, S)$ and write $R := E \setminus S$.
For $j \in R$, let $A(j, s_S)$ denote the $\states$-edit action at $s_S$
corresponding to applying the aligned edit $j$. The Edit Flow conditional
rate is (in our action-based formulation)
\[
    \rate^{\mathrm{EF}}(s_S, a \mid S, z_0, z_1)
    =
    \sum_{j \in R}
    \frac{1}{1-t}\,
    \indicator{a = A(j, s_S)}.
\]
The trajectory-matching rate, conditioned on an ordering $\trajectory$
compatible with $S$ at $s_S$ (i.e.\ $\{j_1, \dots, j_k\} = S$ where $k = |S|$),
places all rate on the next edit:
\[
    \rate^{\mathrm{TM}}(s_S, a \mid \trajectory, z_0, z_1)
    =
    \frac{|R|}{1-t}\,
    \indicator{a = A(j_{k+1}, s_S)}.
\]
Conditioning a uniform ordering of $E$ on the event that the prefix-set is
$S$ leaves the suffix uniform over orderings of $R$, so
$\mathbb{P}(j_{k+1} = j \mid S, z_0, z_1) = 1/|R|$ for every $j \in R$.
Averaging over the trajectory posterior gives
\[
    \begin{aligned}
        \mathbb{E}_{\trajectory \mid S, z_0, z_1}
        \left[
            \rate^{\mathrm{TM}}(s_S, a \mid \trajectory, z_0, z_1)
        \right]
        &=
        \frac{|R|}{1-t}
        \sum_{j \in R}
        \frac{1}{|R|}\,
        \indicator{a = A(j, s_S)}
        \\
        &=
        \rate^{\mathrm{EF}}(s_S, a \mid S, z_0, z_1).
    \end{aligned}
\]
This equality holds pointwise for every latent triple $(z_0, z_1, S)$.

To obtain the marginal rate at an observable state $s$, we average over the
posterior of latent triples compatible with $X_t = s$. By definition,
\[
    \rate^{\mathrm{EF}}(s, a)
    =
    \mathbb{E}^{\mathrm{EF}}_{z_0, z_1, S \mid X_t = s}
    \left[
        \rate^{\mathrm{EF}}(s, a \mid S, z_0, z_1)
    \right],
\]
\[
    \rate^{\mathrm{TM}}(s, a)
    =
    \mathbb{E}^{\mathrm{TM}}_{z_0, z_1, S, \trajectory \mid X_t = s}
    \left[
        \rate^{\mathrm{TM}}(s, a \mid \trajectory, z_0, z_1)
    \right].
\]
Applying the tower property,
\[
    \rate^{\mathrm{TM}}(s, a)
    =
    \mathbb{E}^{\mathrm{TM}}_{z_0, z_1, S \mid X_t = s}
    \left[
        \mathbb{E}^{\mathrm{TM}}_{\trajectory \mid S, z_0, z_1, X_t = s}
        \left[
            \rate^{\mathrm{TM}}(s, a \mid \trajectory, z_0, z_1)
        \right]
    \right].
\]
Given $(z_0, z_1, S)$, the suffix ordering $\sigma_{k+1:m}$ is independent of
both $K_t$ and the prefix-set under uniform $\sigma$ and the linear schedule;
the event $X_t = s$ is a function of $(z_0, z_1, S)$ alone (it just selects
latent triples with $s_S = s$), so conditioning further on it does not change
the suffix law. Hence
\[
    \mathbb{E}^{\mathrm{TM}}_{\trajectory \mid S, z_0, z_1, X_t = s}
    \left[
        \rate^{\mathrm{TM}}(s, a \mid \trajectory, z_0, z_1)
    \right]
    =
    \rate^{\mathrm{EF}}(s, a \mid S, z_0, z_1).
\]
By the path equivalence already shown, the posterior of $(z_0, z_1, S)$ given
$X_t = s$ is the same under TM and EF, so
\[
    \rate^{\mathrm{TM}}(s, a)
    =
    \mathbb{E}^{\mathrm{EF}}_{z_0, z_1, S \mid X_t = s}
    \left[
        \rate^{\mathrm{EF}}(s, a \mid S, z_0, z_1)
    \right]
    =
    \rate^{\mathrm{EF}}(s, a).
    \qedhere
\]
\end{proof}

The equivalence extends to any shared time schedule $\kappa$ satisfying the
conditions in \cref{app:kappa-schedules}, by applying the same time
reparametrization to both constructions: $\tilde p_t = p_{\kappa_t}$ and
$\tilde u_t = \dot\kappa_t u_{\kappa_t}$.

Importantly, this shows that the Edit Flow loss and our trajectory matching loss share the same optimum and gradients. 

    \section{Additional Experimental Results}
\label{app:experimental_details}
 
This appendix first describes the empirical setup. Additionally, it complements the empirical results of
\cref{sec:results} with: (i) extended baseline comparisons on
the Tree (\cref{app:tree-extended}) and molecular graph
benchmarks (\cref{app:mol-extended}); (ii) experiments on the
validity--compute trade-off as a function of the Euler-integration
budget; (iii) an ablation of the trajectory-conditioned versus
Rao-Blackwellized training target predicted by
\cref{prop:rao-blackwell-variance} (\cref{app:rb-ablation}); and
(iv) an ablation of the trajectory distribution (\cref{app:q-ablation}).
Throughout, we use the same training hyperparameters, model backbone,
and sampling procedure as described in \cref{app:setup}; only the
quantity being varied changes.
 
\subsection{Experimental Setup}
\label{app:setup}
 
All instantiations of trajectory matching presented in this work
share a common training and sampling pipeline. The differences
between instantiations are isolated to the action space, source
distribution, and trajectory distribution as described in \cref{app:instantiations};
the architecture, loss function, datasets, and hyperparameters
described in this section are held fixed across instantiations.
 
\paragraph{Model architecture.}
We use a graph-transformer backbone proposed by DiGress \citep{digress} to predict a rate for every
node and every node pair in the current state $s$, processed
jointly by a stack of attention layers with node and pair
representations updated in tandem. We use the additional positional encoding by
DeFoG \citep{defog}. 
We adopt the graph transformer unchanged in structure, with two practical adaptations:
(i) we additionally include the current node count $|V(s)|$ as an
input feature, encoded with a radial-basis-function (RBF) embedding,
because our variable-size instantiations operate over states of
varying size and the appropriate rate depends on how much of the
generation has already been completed; and (ii) for the Tree
benchmark we use a substantially smaller backbone than for the
molecular datasets, while all molecular datasets share the same
backbone size as in DeFoG. The architecture is
permutation-equivariant on nodes and node pairs by construction, so
the predicted rates are also permutation-equivariant.
 
\paragraph{Loss decomposition.}
The trajectory-conditioned target $\rate(\cdot \mid \trajectory)$ is sparse:
only the next action in $\trajectory$ has positive rate at the current
state $s$, while the Rao-Blackwellized target
$\rbrate(\cdot \mid \initState, \finalState)$ has positive rate only on the small
set of remaining actions $\actionSet(s, \finalState)$. To predict such a sparse rate vector, we follow OneFlow \citep{oneflow} and decompose the
parametric rate into a positivity probability and a (log-)rate
magnitude:
\begin{equation}
  \rate^{\theta}(s, a) \;=\; h^{\theta}_t(s, a) \cdot \exp\bigl(g^{\theta}_t(s, a)\bigr),
  \label{eq:rate-decomposition}
\end{equation}
where $h^{\theta}_t(s, a) := \sigma(f^{\theta}_t(s, a)) \in [0, 1]$
is produced by a sigmoid head and represents the probability that
$a$ has positive rate at $(s, t)$, and $g^{\theta}_t(s, a) \in
\mathbb{R}$ is a raw log-rate head. Both heads share the same
graph-transformer backbone.
 
We decompose the loss correspondingly. Fix $(t,s,a)$, write
$m^\theta=\exp(g_t^\theta(s,a))$, and let $R$ denote the training rate
target, either trajectory-conditioned or Rao--Blackwellized. With
$Y=\indicator{R>0}$, the per-action objective is
\begin{equation}
\ell_{\mathrm{dec}}(\theta)
=\mathbb{E}\!\left[\operatorname{BCE}(Y,h^\theta)+Y D(R,m^\theta)
\mid t,s,a\right].
\label{eq:decomposed-loss}
\end{equation}
Importantly, this practical implementation choice retains all central consistency guarantees of trajectory matching. In particular the population optimum of \cref{eq:decomposed-loss} recovers the same marginal probability path as the original Bregman rate-matching objective, as formalized below:

\begin{proposition}[Consistency of the decomposed objective]
\label{prop:decomposed-consistency}
Define $h_*=\Pr(R>0\mid t,s,a)$ and, when $h_*>0$,
$m_*=\mathbb{E}[R\mid t,s,a,R>0]$. Then
\[
\nabla_\theta\ell_{\mathrm{dec}}(\theta)
=\nabla_\theta\!\left[\operatorname{BCE}(h_*,h^\theta)
+h_*D(m_*,m^\theta)\right].
\]
Under head-wise realizability, the population optimum recovers
$\rate^\theta(s,a)=h_*m_*=\mathbb{E}[R\mid t,s,a]=\rate(s,a)$
on the support of the training distribution, and hence generates the
same marginal probability path. When $h_*=0$, the magnitude target
may be chosen arbitrarily and the optimal rate is zero.
\end{proposition}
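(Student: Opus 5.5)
The plan is to work pointwise in $(t,s,a)$ and treat the decomposed loss as two separate heads. First I will show the gradient identity, then read off the optimum, and finally pass from the optimal rate to the marginal path via \cref{cor:traj-optima}.

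\textbf{Step 1: the BCE term.} Fix $(t,s,a)$. The binary cross-entropy is
\[
\operatorname{BCE}(Y,h)=-Y\log h-(1-Y)\log(1-h),
\]
which is affine in the label $Y$. Taking the conditional expectation therefore just replaces $Y$ by $\EX[Y\mid t,s,a]=\Pr(R>0\mid t,s,a)=h_*$. This gives
\[
\EX[\operatorname{BCE}(Y,h^\theta)\mid t,s,a]=\operatorname{BCE}(h_*,h^\theta).
\]

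\textbf{Step 2: the magnitude term.} As in the proof of \cref{prop:traj-gradients}, write $D(R,m)=\phi(R)-\phi(m)-\langle R-m,\nabla\phi(m)\rangle$. The $\theta$-dependent part is affine in $R$, and $\phi(R)$ drops out under $\nabla_\theta$. Multiplying by $Y$ and conditioning gives
\[
\EX[Y\,\nabla_\theta D(R,m^\theta)\mid t,s,a]=h_*\,\nabla_\theta D\bigl(\EX[R\mid t,s,a,R>0],\,m^\theta\bigr)=h_*\,\nabla_\theta D(m_*,m^\theta),
\]
using $\Pr(Y=1)=h_*$. When $h_*=0$ this term vanishes identically, so $m^\theta$ is unconstrained. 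Summing Steps 1 and 2 and exchanging gradient and expectation (under the same regularity as before) yields the stated gradient identity.

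\textbf{Step 3: the optimum.} Under head-wise realizability, the two summands can be minimized independently at each point. $\operatorname{BCE}(h_*,h)$ is minimized uniquely at $h=h_*$ by Gibbs' inequality. $h_*D(m_*,m)$ is minimized at $m=m_*$ whenever $h_*>0$, since a Bregman divergence is nonnegative and vanishes only on the diagonal. Hence
\[
\rate^\theta=h_*m_*=\EX[R\,\indicator{R>0}\mid\cdot]=\EX[R\mid t,s,a].
\]
If $h_*=0$, then $h^\theta=0$ and the rate is $0$, which again equals $\EX[R\mid\cdot]=0$.

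It remains to identify $\EX[R\mid t,s,a]$ with $\rate(s,a)$. For the trajectory-conditioned target this holds by the definition of the marginal rate in \cref{thm:trajectory-marginal}. For the Rao--Blackwellized target it follows by the tower property, as in \cref{prop:rao-blackwell-consistency}. The generation of $p_t$ then follows exactly as in \cref{cor:traj-optima}: equal fluxes $p_t(s)\rate^\theta(s,a)$ almost everywhere, combined with Kolmogorov uniqueness.

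\textbf{Main obstacle.} The delicate step is Step 2, specifically the conditioning on $R>0$. It requires that $D$ be affine in its first argument only through the $\theta$-dependent part, and it requires care with the measure-zero set where $h_*=0$. There I will define $m_*$ arbitrarily and note that the $\theta$-gradient of the magnitude term is zero, so the argument is unaffected.
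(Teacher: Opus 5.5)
Your proposal is correct and follows the paper's own argument: affinity of BCE in its label and of the Bregman gradient in its first argument gives the gradient identity, pointwise minimization gives $h_*m_*=\EX[R\mid t,s,a]$, and the definition of the marginal rate (or the tower property for the Rao--Blackwellized target) identifies this with $\rate(s,a)$. One small correction: the set where $h_*=0$ is generally not measure-zero, since most actions at a given state never receive a positive target rate, but your pointwise treatment of that case does not depend on its measure and remains valid.
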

\begin{proof}
BCE is affine in its target, and the model gradient of a Bregman
divergence is affine in its first argument. Taking conditional
expectations therefore gives the gradient identity. The corresponding
marginal objective is minimized at $h^\theta=h_*$ and
$m^\theta=m_*$ on positive support. Their product equals the marginal
rate by conditional expectation, for either choice of $R$; the path
claim follows from \cref{thm:trajectory-marginal}.
\end{proof}

\Cref{tab:loss-decomposition-ablation} compares the decomposed objective
with direct Bregman rate matching on \textsc{GrowHierarchy}. Both
objectives achieve similar performance in this experiment.
\begin{table}[htbp]
\centering
\small
\caption{Objective ablation on Tree with \textsc{GrowHierarchy}.
Entries report means and standard deviations over sampling seeds
from a single trained model per objective.}
\label{tab:loss-decomposition-ablation}
\begin{tabular}{lcccc}
\toprule
Objective & Degree $\downarrow$ & Spectral $\downarrow$ & Wavelet $\downarrow$ & Validity (\%) $\uparrow$ \\
\midrule
Decomposed & $0.0006\pm0.0002$ & $0.0072\pm0.0006$ & $0.0033\pm0.0002$ & $100.0\pm0.0$ \\
Direct Bregman & $0.0008\pm0.0002$ & $0.0085\pm0.0009$ & $0.0034\pm0.0004$ & $100.0\pm0.0$ \\
\bottomrule
\end{tabular}
\end{table}
 
\paragraph{Datasets.}
We use the standard dataset splits as in DeFoG \citep{defog} to
ensure direct comparability with that line of work; we summarize
each dataset briefly here and refer to DeFoG App. F.2 for full
preprocessing details.
FragGrow uses the representable subset of each training split, as detailed in
\cref{app:fragmentgrow}. We explicitly do not filter the validation or test
splits, so evaluation uses the same complete standard reference splits as
the other models.
For the molecular benchmarks, ZINC is distributed under a custom
license and is free to use for everyone, while MOSES is released
under the MIT license.
 
\textit{Tree.} The synthetic random-tree dataset introduced in \citet{tree}, consisting of trees with $64$
nodes. Used in \cref{sec:exp-hierarchy,app:tree-extended} to
evaluate hierarchy generation.
Validity here means that the generated graph is a connected acyclic
graph; the Ratio metric captures distributional fit through normalized distribution-style graph statistics. We note that the Ratio metric is very unstable as it divides through a very small number.
 
\textit{QM9 (without hydrogens).} QM9~\citep{qm9} contains small organic
molecules with up to $9$ heavy atoms (C, N, O, F), with hydrogens
removed. 
 
\textit{ZINC250K.} A subset of approximately $250$K drug-like
molecules from the ZINC database. Tests scaling to larger and more
chemically diverse molecules and is the substrate for our
downstream substructure extension and linking experiments
(\cref{app:downstream}).
 
\textit{MOSES.} A standardized benchmark for de novo molecule
generation comprising approximately $1.9$M drug-like molecules,
with the standard MOSES train/test split as used by DeFoG. Includes explicit
metrics for distributional fit such as FCD, SNN similarity, and
scaffold similarity, and is the most challenging of our molecular
benchmarks for both quality and coverage.
 
\paragraph{Training.}
We implement all experiments in PyTorch \citep{pytorch}.
All experiments use a single NVIDIA A100 GPU per training run.
We train with Adam at learning rate $2 \cdot 10^{-4}$, no weight
decay, and a fixed (no schedule) learning rate. Batch sizes are
$64$ for QM9 and ZINC and $32$ for MOSES (smaller due to memory).
Training durations are $5\,000$ steps for Tree, $200\,000$ steps for
QM9 and ZINC, and $500\,000$ steps for MOSES. We use exponential
moving average (EMA) on model weights with decay $0.9999$, started
after step $1\,000$ and updated every step. Gradients are clipped
at $L_2$-norm $1.0$. The training-time distribution over $t$ is
uniform on $[0, 1]$; the conditional-flow time schedule
$\markovDist$ (\cref{app:kappa-schedules}) is either linear or the polydec schedule of
DeFoG, with polydec yielding better results on most datasets.
Checkpoints are selected by lowest validation FCD for the molecular
datasets and lowest ratio on tree.

\paragraph{Computational cost.}
\label{app:runtime}
We compare the computational cost of trajectory matching with the
factorized DFM baseline DeFoG on QM9, with measurements averaged after warmup.
\Cref{tab:runtime-preprocessing} reports both raw CPU time for trajectory
sampling and target construction and the loader wait observed by the
training loop. These preprocessing operations run in parallel CPU
dataloader workers, overlapping with GPU training
(\cref{app:training_sampling}). Despite differences in raw construction
time, the observed loader wait remains below 1 ms per training step,
making preprocessing essentially free in exposed wall-clock cost in
this setup. \textsc{FragGrow}'s CPU preprocessing cost could be reduced
further by precomputing and caching the training molecules' fragment
decompositions (\cref{app:fragmentgrow}), although this need not yield
a noticeable end-to-end speedup when preprocessing is already overlapped.

\begin{table}[htbp]
\centering
\small
\caption{CPU preprocessing time and observed loader wait on QM9.
Preprocessing overlaps with GPU training; loader wait is measured per
training step and is not the total overhead relative to DeFoG.}
\label{tab:runtime-preprocessing}
\begin{tabular}{lrr}
\toprule
Method & \shortstack{Trajectory sampling +\\target construction (ms)} & \shortstack{\textbf{Observed loader}\\\textbf{wait (ms)}} \\
\midrule
\textsc{AtomGrow} & 4.56 & 0.75 \\
\textsc{FragGrow} & 188.83 & 0.94 \\
\textsc{MolEdit} & 30.55 & 0.91 \\
\bottomrule
\end{tabular}
\end{table}

\Cref{tab:runtime-throughput} compares end-to-end training and sampling
throughput. Relative to DeFoG, training throughput is approximately
10--14\% lower and sampling throughput is 22--25\% lower at the same
500-step sampling budget.

\begin{table}[htbp]
\centering
\small
\caption{End-to-end throughput on QM9 using one A100 GPU, averaged
after warmup. Sampling uses 500 integration steps for every method.}
\label{tab:runtime-throughput}
\begin{tabular}{lrr}
\toprule
Method & \shortstack{Training throughput\\(steps/s)} & \shortstack{Sampling throughput\\(samples/s)} \\
\midrule
DeFoG & 13.79 & 10.22 \\
\textsc{AtomGrow} & 11.89 & 7.96 \\
\textsc{FragGrow} & 12.01 & 7.63 \\
\textsc{MolEdit} & 12.45 & 7.69 \\
\bottomrule
\end{tabular}
\end{table}
 
The inductive bias of structured actions can offset these higher per-step
training and sampling costs by enabling smaller models, fewer training
steps, or fewer sampling steps. On Tree, training takes less than 1 h,
compared with approximately 8 h reported for DeFoG \citep{defog}.
We attribute this advantage in part to the tree-preserving inductive
bias, which allows us to use a substantially smaller model and train
for fewer steps (5\,000 in our setup). Separately, our sampling-budget
experiments (\cref{fig:tree-step-sweep,fig:zinc-step-sweep}) show that
structured actions can preserve generation quality with fewer integration
steps, potentially reducing total sampling time.
On the molecular datasets, approximate per-instantiation training times
are $\sim\!8$ h on QM9, $\sim\!24$ h on ZINC, and $\sim\!48$ h on MOSES.

\paragraph{Sampling.}
We use the Euler integration scheme of \cref{alg:trajectory_matching_sampling} with
$N = 1000$ steps for the Tree benchmark and $N = 500$ steps for
QM9, ZINC, and MOSES, matching DeFoG's defaults. We do not employ
any of DeFoG's sampling-time optimizations (target guidance,
stochasticity, time distortion). For evaluation we generate $40$
samples per run on Tree, $10\,000$ samples on QM9 and ZINC, and
$25\,000$ samples on MOSES. Step-budget sweeps in
\cref{app:tree-extended,app:mol-extended} vary
$N$ around these defaults to study the quality--compute trade-off.
 
\paragraph{Reported metrics and variability.}
Unless otherwise stated, results are means and standard deviations over
$5$ independent sampling seeds from a single trained model; for MOSES,
we follow the evaluation setup of DeFoG and DiGress
\citep{defog,digress} and report a single run of $25\,000$ samples.
To assess sampling variability, we additionally evaluated \textsc{AtomGrow}
and \textsc{FragGrow} using three independent sampling seeds per fixed
trained model, obtaining validity standard deviations of $0.1$ and $0.02$
percentage points and FCD standard deviations of $0.008$ and $0.006$,
respectively.
These are small relative to \textsc{FragGrow}'s validity improvement and
\textsc{AtomGrow}'s FCD improvement over DeFoG, supporting that these gains
are unlikely to be explained by sampling randomness.

\subsection{Hierarchy Generation: Extended Results}
\label{app:tree-extended}
 
\paragraph{Full baseline comparison.}
The expanded tree table
extends \cref{tab:tree-comparison} from the main
text with additional baselines. The picture is consistent
with the main-text findings: \textsc{GrowHierarchy} achieves $100\%$
V.U.N., with validity guaranteed by construction, while methods that do not encode tree
structure in their action space either underperform on V.U.N. or
collapse entirely. The only exception is HSpectre, which combines a diffusion and an autoregressive approach but cannot guarantee validity by construction.

While our main comparisons focus on diffusion and flow-matching methods
to isolate the effect of structured actions, Table~\ref{tab:tree_full}
also includes four purely autoregressive baselines: GraphRNN, GRAN,
BiGG, and GraphGen. \textsc{GrowHierarchy} achieves $100\%$ V.U.N.,
compared with the best autoregressive result of $95\%$, and a Ratio
of $2.5$, compared with $5.2$.

\newcommand{\pmval}[1]{{\scriptsize\,$\pm$#1}}

\begin{table}[ht]
\centering
\setlength{\tabcolsep}{3pt}
\renewcommand{\arraystretch}{1.15}
\caption{Graph generation results on the Tree dataset.}
\resizebox{\textwidth}{!}{%
\begin{tabular}{llcccccccccc}
\toprule
 & & \multicolumn{10}{c}{Tree Dataset} \\
\cmidrule(lr){3-12}
Model & Class & Deg.\,$\downarrow$ & Clus.\,$\downarrow$ & Orbit\,$\downarrow$ & Spec.\,$\downarrow$ & Wavelet\,$\downarrow$ & Ratio\,$\downarrow$ & Valid\,$\uparrow$ & Unique\,$\uparrow$ & Novel\,$\uparrow$ & V.U.N.\,$\uparrow$ \\
\midrule
\textit{Train set} & --- & 0.0001 & 0.0000 & 0.0000 & 0.0075 & 0.0030 & 1.0 & 100 & 100 & 0.0 & 0.0 \\
\midrule
GraphRNN \citep{graph_rnn}                          & Autoregressive & --- & --- & --- & --- & --- & 607.0 & --- & --- & --- & 0.0 \\
GRAN \citep{gran}                                     & Autoregressive & 0.1884 & 0.0080 & 0.0199 & 0.2751 & 0.3274 & 607.0 & 0.0   & 100  & 100  & 0.0 \\
DiGress \citep{digress}                             & Diffusion      & 0.0002 & 0.0000 & 0.0000 & 0.0113 & 0.0043 & \textbf{1.6} & 90.0 & 100 & 100 & 90.0 \\
EDGE \citep{edge}                                   & Diffusion      & 0.2678 & 0.0000 & 0.7357 & 0.2247 & 0.4230 & 850.7 & 0.0   & 7.5  & 100  & 0.0 \\
BwR \citep{bwr}                                     & Diffusion      & 0.0016 & 0.1239 & 0.0003 & 0.0480 & 0.0388 & 11.4  & 0.0   & 100  & 100  & 0.0 \\
BiGG \citep{bigg}                                     & Autoregressive & 0.0014 & 0.0000 & 0.0000 & 0.0119 & 0.0058 & 5.2   & 100   & 87.5 & 50.0 & 75.0 \\
GraphGen \citep{graphgen}                             & Autoregressive & 0.0105 & 0.0000 & 0.0000 & 0.0153 & 0.0122 & 33.2  & 95.0  & 100  & 100  & 95.0 \\
HSpectre (one-shot) \citep{tree}                  & Diffusion      & 0.0004 & 0.0000 & 0.0000 & 0.0080 & 0.0055 & 2.1   & 82.5  & 100  & 100  & 82.5 \\
HSpectre \citep{tree}                             & Autoregressive + Diffusion      & 0.0001 & 0.0000 & 0.0000 & 0.0117 & 0.0047 & 4.0   & 100   & 100  & 100  & 100 \\
DeFoG \citep{defog}                                 & Flow & 0.0002\pmval{0.0001} & 0.0000\pmval{0.0000} & 0.0000\pmval{0.0000} & 0.0108\pmval{0.0028} & 0.0046\pmval{0.0004} & \textbf{1.6}\pmval{0.4} & 96.5\pmval{2.6}  & 100.0\pmval{0.0} & 100.0\pmval{0.0} & 96.5\pmval{2.6} \\
\midrule
GrowHierarchy     & Flow & 0.0006\pmval{0.0002} & 0.0000\pmval{0.0000} & 0.0000\pmval{0.0000} & \textbf{0.0072}\pmval{0.0006} & \textbf{0.0033}\pmval{0.0002} & 2.5\pmval{0.6} & \textbf{100.0}\pmval{0.0} & 100.0\pmval{0.0} & 100.0\pmval{0.0} & \textbf{100.0}\pmval{0.0} \\
\bottomrule
\end{tabular}%
}
\label{tab:tree_full}
\end{table}

\paragraph{Validity--compute trade-off.}
A single-step-count comparison can mask large differences in how a
model degrades when the sampling budget is reduced.
\cref{fig:tree-step-sweep} sweeps the number of Euler steps from
$10$ to $10^{3}$ for both \textsc{GrowHierarchy} and DeFoG, reporting
validity (left) and the Ratio metric (right). The validity panel (left) shows the central effect of a domain-aligned
action space. \textsc{GrowHierarchy} maintains $100\%$ validity across
the entire range, including regimes where the step budget is far
smaller than the number of actions per trajectory. The Ratio metric shows similar results for both methods across the step sizes.
However, DeFoG's validity
falls steeply as the budget shrinks: from
$\sim\!96\%$ at $10^{3}$ steps to $\sim\!8\%$ at $10$ steps. This underlines the advantage that domain-aligned actions can bring in low-sampling regions.
 
\begin{figure}[t]
  \centering
  \includegraphics[width=\linewidth]{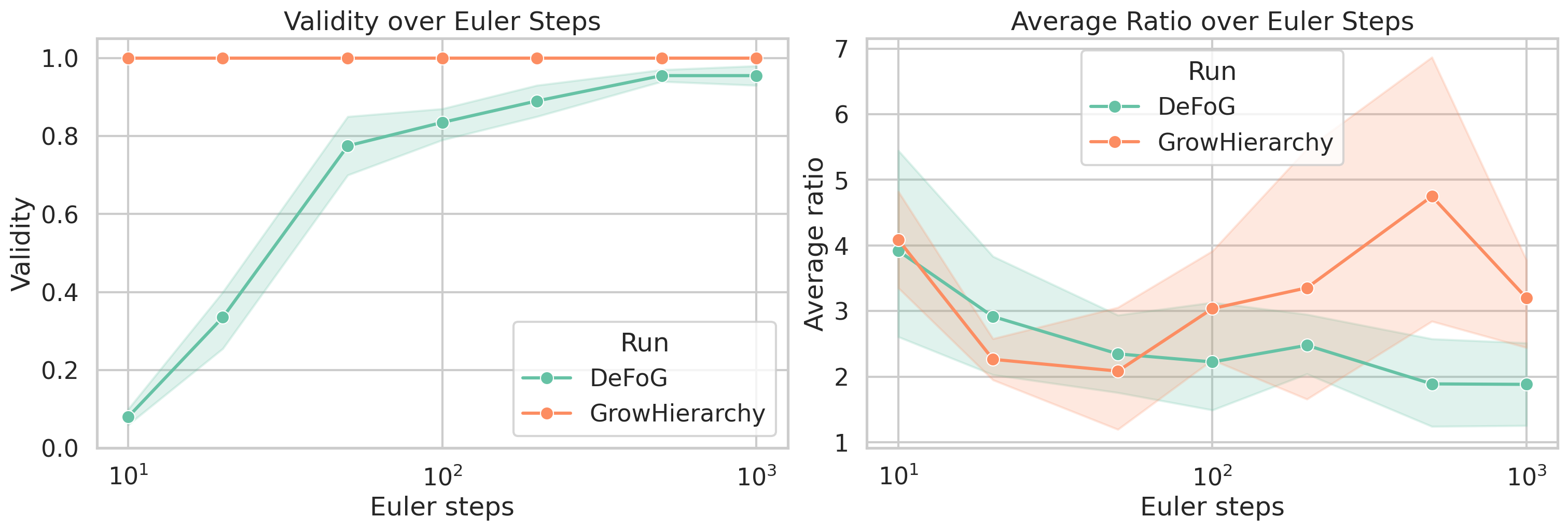}
  \caption{Validity (left) and Ratio (right) on the Tree benchmark
    as a function of the number of Euler integration steps.
    \textsc{GrowHierarchy} preserves $100\%$ validity across three
    orders of magnitude of step budgets, while DeFoG's validity
    collapses below $\sim\!10\%$ at $10$ steps. Average Ratio metric is comparable for both.}
  \label{fig:tree-step-sweep}
\end{figure}
 
\subsection{Molecular Graph Generation: Extended Results}
\label{app:mol-extended}
 
\paragraph{Qm9 datset.}
We include additional results on the QM9 dataset in \cref{tab:qm9} which shows similar performance trends as seen for MOSES and ZINC in the main text.
 
\newcommand{\std}[1]{$_{\pm #1}$}
\newcommand{\gc}{\cellcolor{gray!15}}

\begin{table}[ht]
\centering
\caption{Comparison of DFM and diffusion molecular graph generative models on QM9 (without hydrogens). Best results in \textbf{bold}, second best \underline{underlined}.}
\label{tab:qm9}
\setlength{\tabcolsep}{4pt}
\renewcommand{\arraystretch}{1.1}
\begin{tabular}{cl cccc}
\toprule
& Model 
& Val.\,$\uparrow$ & R.\,Val.\,$\uparrow$ & Uniq.\,$\uparrow$ & FCD\,$\downarrow$ \\
\midrule
& Training set 
& 99.3 & 99.5 & 99.2 & 0.03 \\
\midrule
\multirow{6}{*}{\rotatebox[origin=c]{90}{Fixed-size}}
& DiGress~\citep{digress} 
& 99.0\std{0.0} & -- & 96.2\std{0.1} & -- \\
& DisCo~\citep{disco} 
& 99.3\std{0.6} & -- & -- & -- \\
& Cometh~\citep{cometh} 
& \underline{99.6\std{0.1}} & -- & \underline{96.8\std{0.2}} & 0.25\std{0.01} \\
& GBD~\citep{gbd} 
& \textbf{99.8} & -- & -- & \underline{0.093} \\
& DeFoG~\citep{defog} 
& 99.3\std{0.0} & \underline{99.4\std{0.1}} & 96.3\std{0.3} & 0.12\std{0.00} \\
& \gc BondEdit 
& \gc 99.0\std{0.0} & \gc 99.2\std{0.0} & \gc 96.6\std{0.0} & \gc \textbf{0.085\std{0.00}} \\
\midrule
\multirow{4}{*}{\rotatebox[origin=c]{90}{Variable-size}}
& GrIDDD~\citep{griddd}
& 97.0 & -- & 96.2 & 0.11 \\
& \gc MolEdit 
& \gc 98.8\std{0.1} & \gc 99.2\std{0.1} & \gc 96.0\std{0.1} & \gc 0.17\std{0.01} \\
& \gc AtomGrow
& \gc 98.8\std{0.0} & \gc 99.2\std{0.0} & \gc \textbf{97.0\std{0.0}} & \gc 0.096\std{0.00} \\
& \gc FragGrow
& \gc 99.3\std{0.0} & \gc \textbf{99.7\std{0.0}} & \gc 96.3\std{0.0} & \gc 0.14\std{0.01} \\
\bottomrule
\end{tabular}
\end{table}

\paragraph{Autoregressive comparison on MOSES.}
We additionally compare with GraphINVENT~\citep{graphinvent}, an
autoregressive molecular graph generator. As shown in
\cref{tab:moses-autoregressive}, \textsc{FragGrow} achieves higher
validity ($99.2\%$ versus $96.4\%$) and passes molecular filters more
frequently ($99.3\%$ versus $95.0\%$), while achieving comparable
FCD ($1.21$ versus $1.22$).

\begin{table}[htbp]
\centering
\small
\setlength{\tabcolsep}{5pt}
\caption{Autoregressive comparison on MOSES. GraphINVENT results are
taken from DiGress~\citep{digress}.
Results for all flow-matching/diffusion methods in
\cref{tab:mol-comparison}. }
\label{tab:moses-autoregressive}
\begin{tabular}{lrrrrrr}
\toprule
Model & Validity $\uparrow$ & Uniqueness $\uparrow$ & Filters $\uparrow$
& FCD $\downarrow$ & SNN $\uparrow$ & Scaffold $\uparrow$ \\
\midrule
GraphINVENT & 96.4 & 99.8 & 95.0 & 1.22 & 0.54 & 12.7 \\
\textsc{FragGrow} & 99.2 & 99.9 & 99.3 & 1.21 & 0.55 & 13.7 \\
\bottomrule
\end{tabular}
\end{table}

\paragraph{Sampling-budget efficiency on ZINC.}
Unlike the Tree setting, the molecular action space does not yield
validity by construction: even \textsc{FragGrow} can produce invalid
molecules (e.g.\ valence violations). Nevertheless,
\cref{fig:zinc-step-sweep} shows that a domain-aligned action
space still pays off in the low-budget regime. At $5$ Euler steps,
\textsc{FragGrow} retains $\sim\!78\%$ validity against $\sim\!48\%$
for the factorized baseline \textsc{BondEdit}; at $10$ steps the gap narrows
to $89\%$ vs.\ $84\%$, and both saturate at $\geq\!98\%$ from $25$
steps onward. Each fragment action commits an entire ring system in
a single step, so the same molecule is assembled in far fewer
actions making \textsc{FragGrow} especially attractive whenever sampling
cost matters, e.g.\ as a subroutine in downstream search or
screening loops.

\begin{figure}[t]
  \centering
  \includegraphics[width=0.65\linewidth]{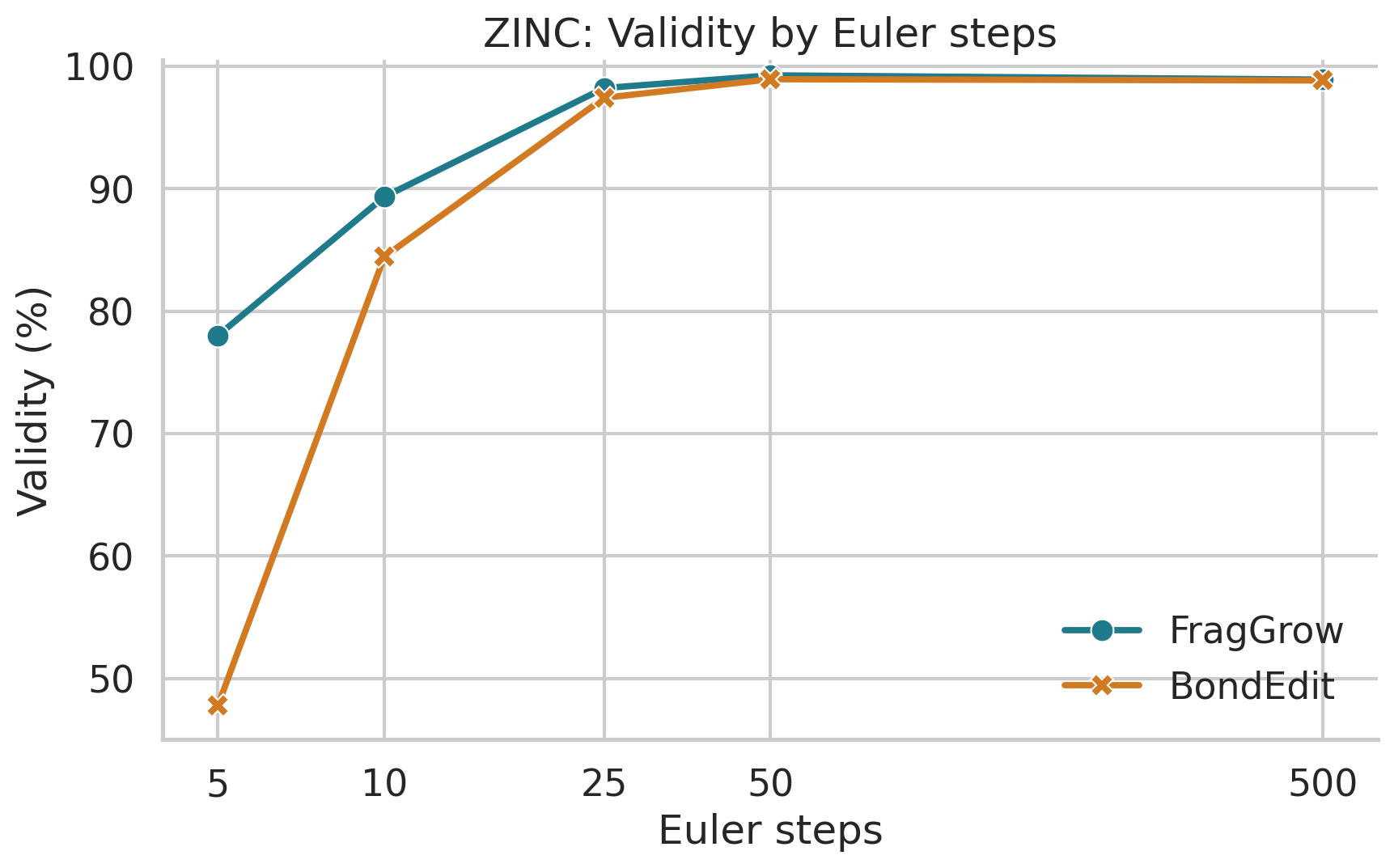}
  \caption{ZINC validity as a function of the number of Euler steps
    for the fragment-based \textsc{FragGrow} and the factorized
    \textsc{BondEdit}. \textsc{FragGrow} maintains substantially
    higher validity in the low-step regime (e.g., $78\%$ vs.\ $48\%$
    at $5$ steps); both methods saturate at $\geq\!98\%$ for
    $\geq\!25$ steps.}
  \label{fig:zinc-step-sweep}
\end{figure}
 
\subsection{Trajectory-Conditioned vs.\ Rao-Blackwellized Targets}
\label{app:rb-ablation}
 
Trajectory matching admits two training targets: the
trajectory-conditioned rate $\rate(\cdot \mid \trajectory)$ defined in
\cref{eq:trajectory-rate}, and its Rao-Blackwellized counterpart
$\rbrate(\cdot \mid \initState, \finalState)$ obtained by marginalizing over
trajectories consistent with the current state and endpoints. \cref{prop:rao-blackwell-consistency} guarantees that
the two targets share the same optimum and expected gradient, while
\cref{prop:rao-blackwell-variance} predicts that the
Rao-Blackwellized estimator has component-wise lower or equal
variance. We test the empirical impact of this variance reduction on
training dynamics.
 
\cref{fig:rb-comparison} compares the two targets on ZINC under
\textsc{AtomGrow}, plotting validity over the first $40$ training
epochs while keeping all other hyperparameters fixed. The Rao-Blackwellized estimator reaches $\geq\!95\%$
validity within the first few epochs and exhibits less
epoch-to-epoch noise; the trajectory-conditioned estimator requires
roughly twice as many epochs to reach the same threshold and remains
 noisier throughout. Both targets converge to comparable final
validity ($\sim\!97$--$99\%$), consistent with
\cref{prop:rao-blackwell-consistency}: the optimum is unchanged, only the
trajectory toward it is smoother. 
 
\begin{figure}[t]
  \centering
  \includegraphics[width=0.65\linewidth]{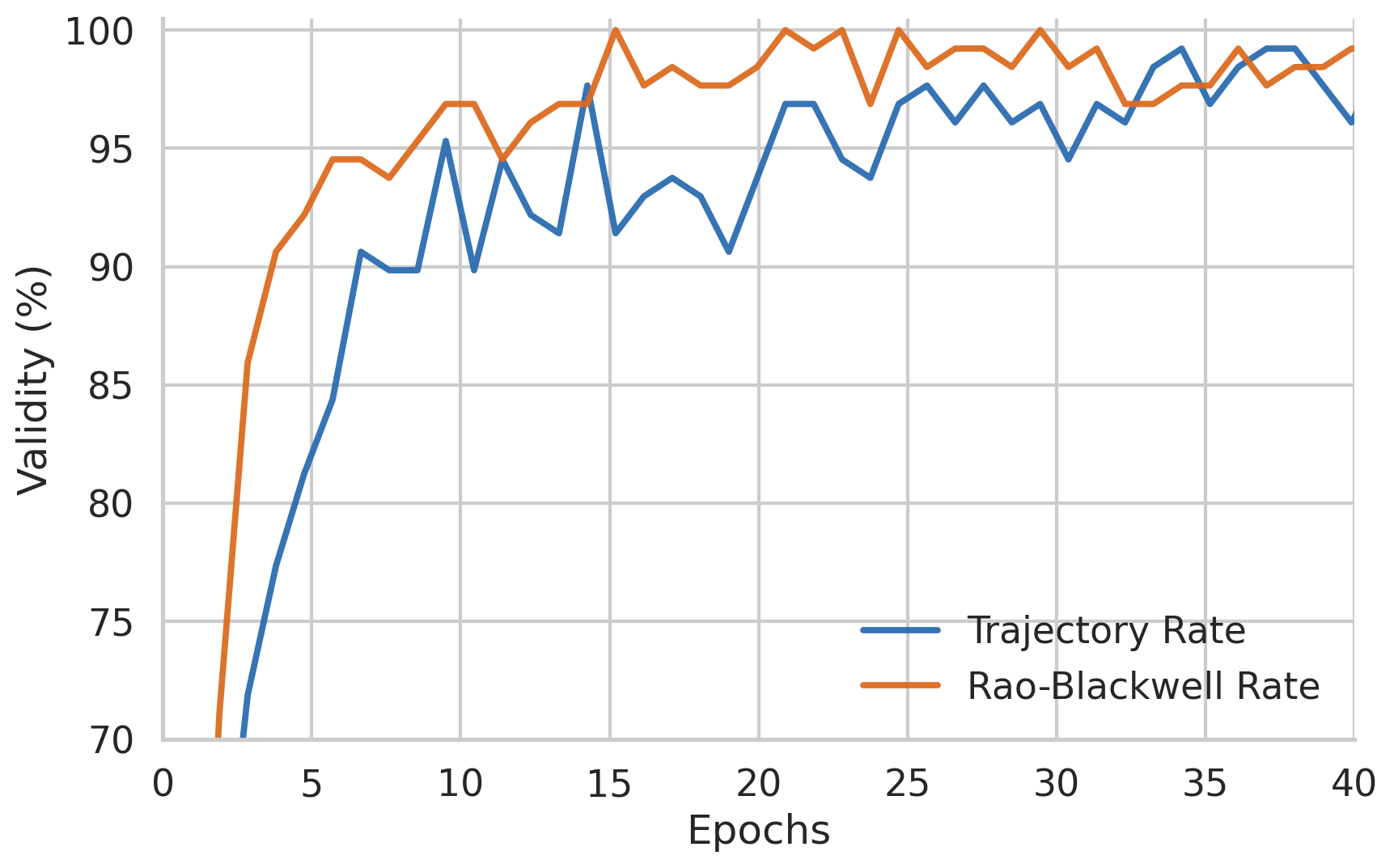}
  \caption{Training dynamics on ZINC under \textsc{AtomGrow}: the
    Rao-Blackwellized target $\bar u_t$ converges
    faster and with lower epoch-to-epoch variance than the
    trajectory-conditioned target $u_t$, while reaching the same
    final validity. The behavior matches the variance-reduction
    guarantee of \cref{prop:rao-blackwell-variance}.}
  \label{fig:rb-comparison}
\end{figure}

\subsection{Effect of the Trajectory Distribution}
\label{app:q-ablation}

We study the effect of the training trajectory distribution $\trajDist$ using
\textsc{AtomGrow} on QM9. We compare three policies for the order of node
insertions: random, breadth-first (BFS), and depth-first (DFS). These policies
jointly determine the insertion order and the  spanning tree, so the
experiment varies their combined effect through $\trajDist$. All three
variants use single-trajectory training targets. The action space, source
distribution, architecture, optimization settings, and sampling procedure
are held fixed; non-tree edges are treated in the same way across variants.

\begin{table}[htbp]
  \centering
  \small
  \setlength{\tabcolsep}{6pt}
  \caption{Effect of the trajectory distribution on \textsc{AtomGrow} on
    QM9, using single-trajectory training targets for all three policies.
    Results are means and standard deviations over five independent
    sampling runs from one trained model per policy.}
  \label{tab:q-ablation}
  \begin{tabular}{lcccc}
    \toprule
    Policy & Validity (\%) $\uparrow$ & Relaxed validity (\%) $\uparrow$
      & Uniqueness (\%) $\uparrow$ & FCD $\downarrow$ \\
    \midrule
    Random & $98.8 \pm 0.0$ & $99.2 \pm 0.0$ & $97.0 \pm 0.0$ & $0.096 \pm 0.00$ \\
    BFS & $98.9 \pm 0.1$ & $99.2 \pm 0.1$ & $96.6 \pm 0.2$ & $0.092 \pm 0.00$ \\
    DFS & $98.8 \pm 0.1$ & $99.2 \pm 0.1$ & $96.8 \pm 0.2$ & $0.098 \pm 0.00$ \\
    \bottomrule
  \end{tabular}
\end{table}

\Cref{tab:q-ablation} shows similar performance across the three choices
of $\trajDist$, suggesting that \textsc{AtomGrow} is robust to these tested
trajectory policies on QM9.

\subsection{Substructure Extension and Linking}
\label{app:downstream}
 
A practical advantage of variable-size, order-invariant trajectory
matching is that the same trained model can address structured
downstream tasks without retraining or task-specific finetuning. We
illustrate this on two such tasks, substructure extension and
substructure linking, and show that \textsc{AtomGrow} accommodates
them out of the box, while the factorized \textsc{BondEdit} baseline
requires task-specific patches and underperforms substantially.
 
\paragraph{Setup.}
For all downstream evaluations and qualitative examples, we use a single \textsc{AtomGrow} model trained
on ZINC with one root atom with probability $0.5$ and two randomly chosen
root atoms with probability $0.5$ (\cref{app:atomgrow}). The same frozen
checkpoint is used for extension and linking, without downstream supervision
or fine-tuning. In unconditional generation on ZINC, this randomized-root model achieves $98.7\%$ validity, $0.53$ scaffold similarity, and $1.46$ FCD. \textsc{BondEdit} also uses a fixed ZINC checkpoint.
For \textsc{AtomGrow}, both tasks reduce to a choice of
$s_I$: extension corresponds to setting $s_I$ to the given
substructure, and linking corresponds to setting $s_I$ to the union
of two disconnected substructures, both treated as roots of the
action DAG.
 
The factorized \textsc{BondEdit} operates on a fixed atom set. We sample
a number of additional atoms, with their features and the bonds between
them drawn from $\noiseDist$, and combine them with the supplied
substructure(s) to form $\initState$. During generation, each substructure's
internal structure is frozen.

 We sweep the Euler start time
$t \in \{0.2, 0.4, 0.6, 0.8\}$ for both methods, which controls how
much of the trajectory remains and hence the typical extent of
generation. Each evaluation uses one sampling run on the original 500-step
Euler time grid, starting at the selected $t$. We sample $N = 64$ completions
per (substructure, $t$) cell and
report the fraction that are simultaneously chemically valid and
yield a connected molecule. Aggregate scores pool all samples across cells,
equivalently averaging the unrounded cell fractions with equal weight.
 
\paragraph{Substructure extension.}
We evaluate on eleven chemically meaningful seed substructures,
deliberately including substructures that do not appear in any ZINC
training molecule to
test out-of-distribution generalization. \cref{fig:extension-heatmap}
reports valid-and-connected fractions across all seeds and start times.
 
\textsc{AtomGrow} achieves valid-and-connected fractions in the
$0.86$--$1.00$ range across (seed, $t$) cells, including the
out-of-distribution substructures. \textsc{BondEdit}, despite the
same backbone and training procedure, plateaus at $0.14$--$0.70$ ---
well below \textsc{AtomGrow} for every setting. 
 
\begin{figure}[t]
  \centering
  \includegraphics[width=0.85\linewidth]{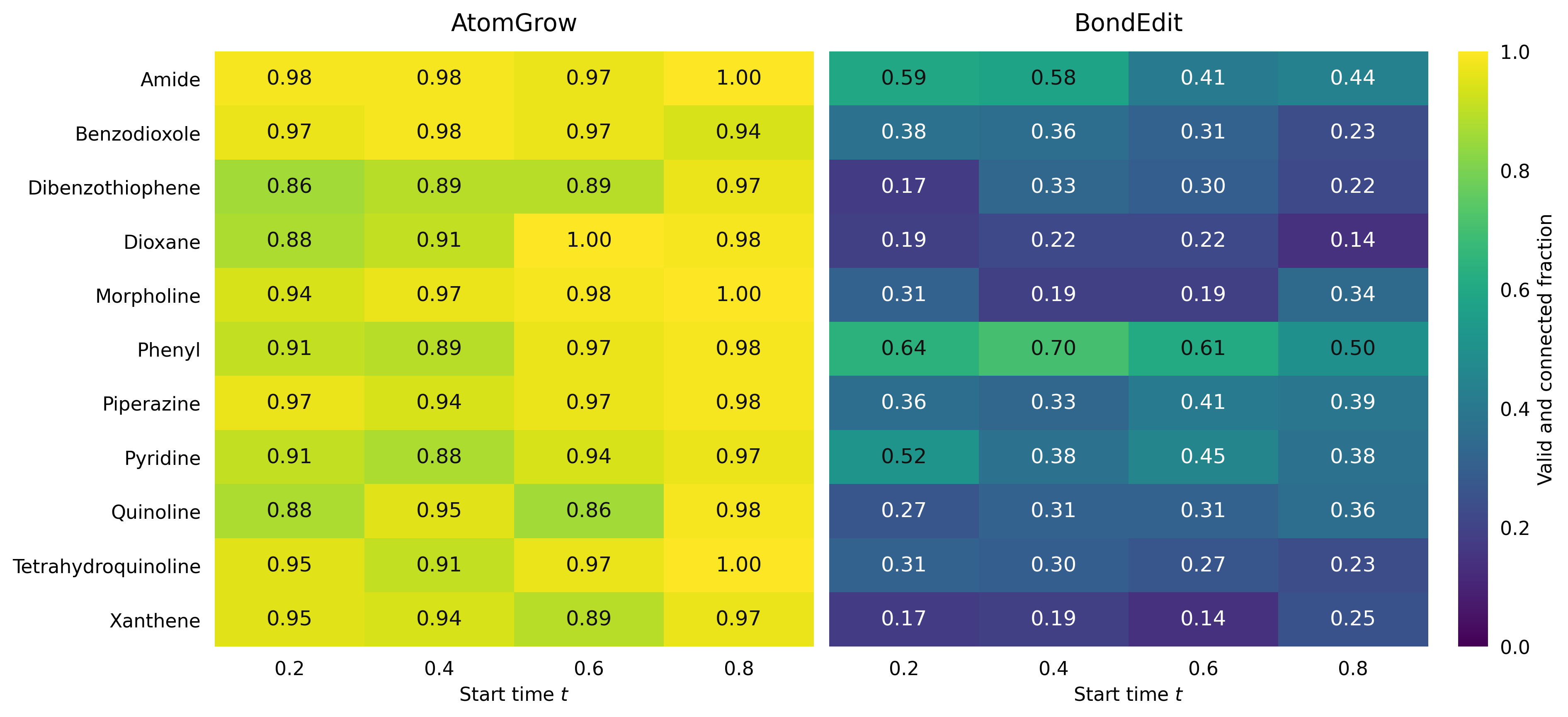}
  \caption{Substructure extension on eleven seed substructures (rows)
    and four Euler start times (columns), reporting the
    valid-and-connected fraction of generated completions.
    \textsc{AtomGrow} (left) accommodates the task by setting $s_I$
    to the seed; \textsc{BondEdit} (right) is adapted by sampling a
    target size and freezing the seed during generation.
    \textsc{AtomGrow} substantially outperforms the factorized
    baseline across all seeds and start times, including
    substructures absent from ZINC training molecules.}
  \label{fig:extension-heatmap}
\end{figure}
 
Beyond the aggregate fractions, the start time provides a natural,
training-free knob over the size of the extension.
\cref{fig:extension-grid} shows valid completions sampled by
\textsc{AtomGrow} for two representative seeds (dibenzothiophene and
quinoline) at four start times: at $t=0.2$ many atoms remain to be
added and the model produces large, decorated extensions; at
$t=0.8$ only a few atoms are added and completions stay close to
the seed (e.g.\ small substituents on the aromatic core). This gives
a single user-controllable lever for trading off seed-faithfulness
against decoration extent --- without retraining or any modification
to the model.
 
\begin{figure}[t]
  \centering
  \subfigure[Dibenzothiophene seed (out-of-distribution: not present in ZINC).]{%
    \includegraphics[width=\linewidth]{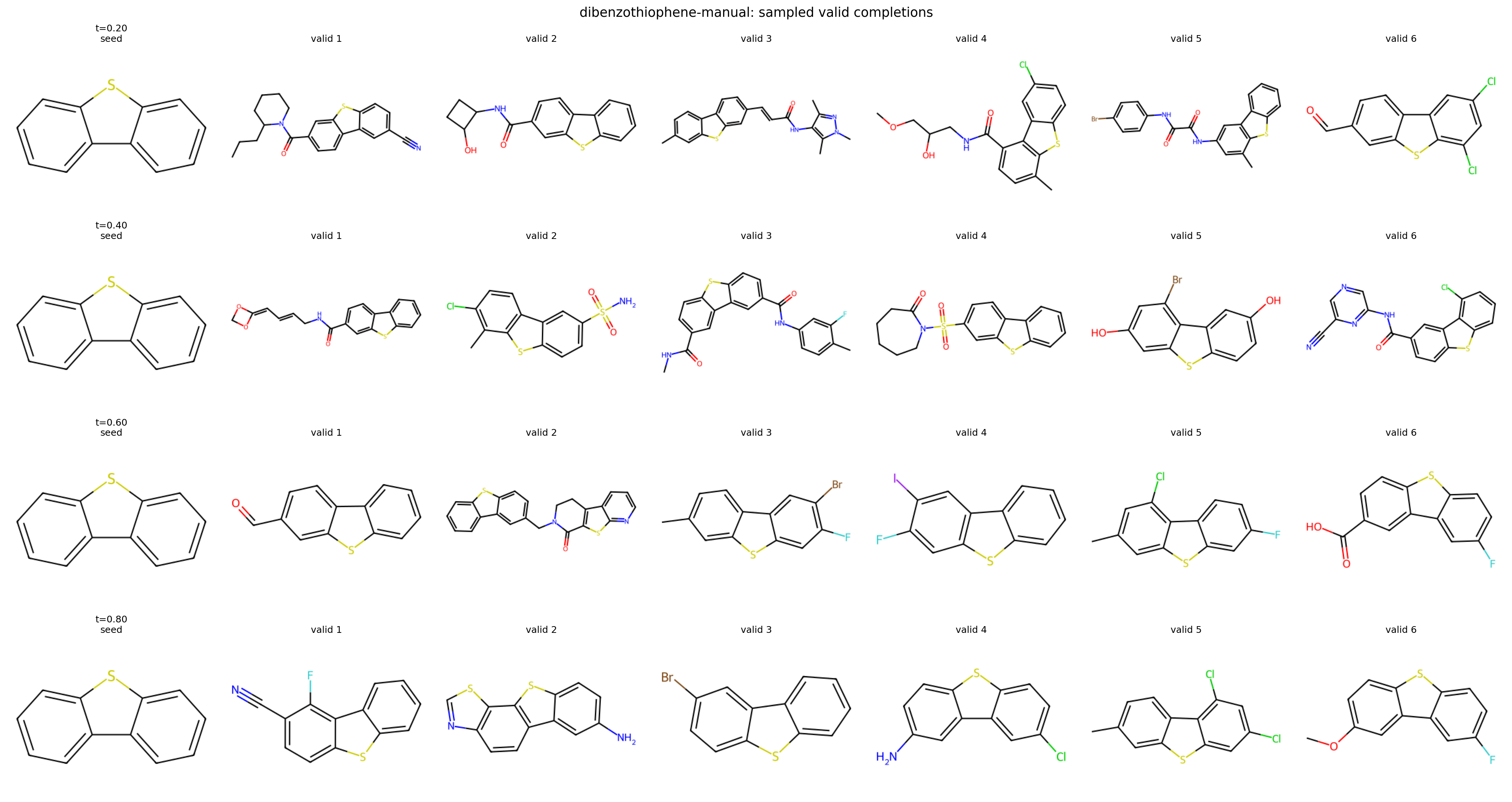}}\\[0.5em]
  \subfigure[Quinoline seed.]{%
    \includegraphics[width=\linewidth]{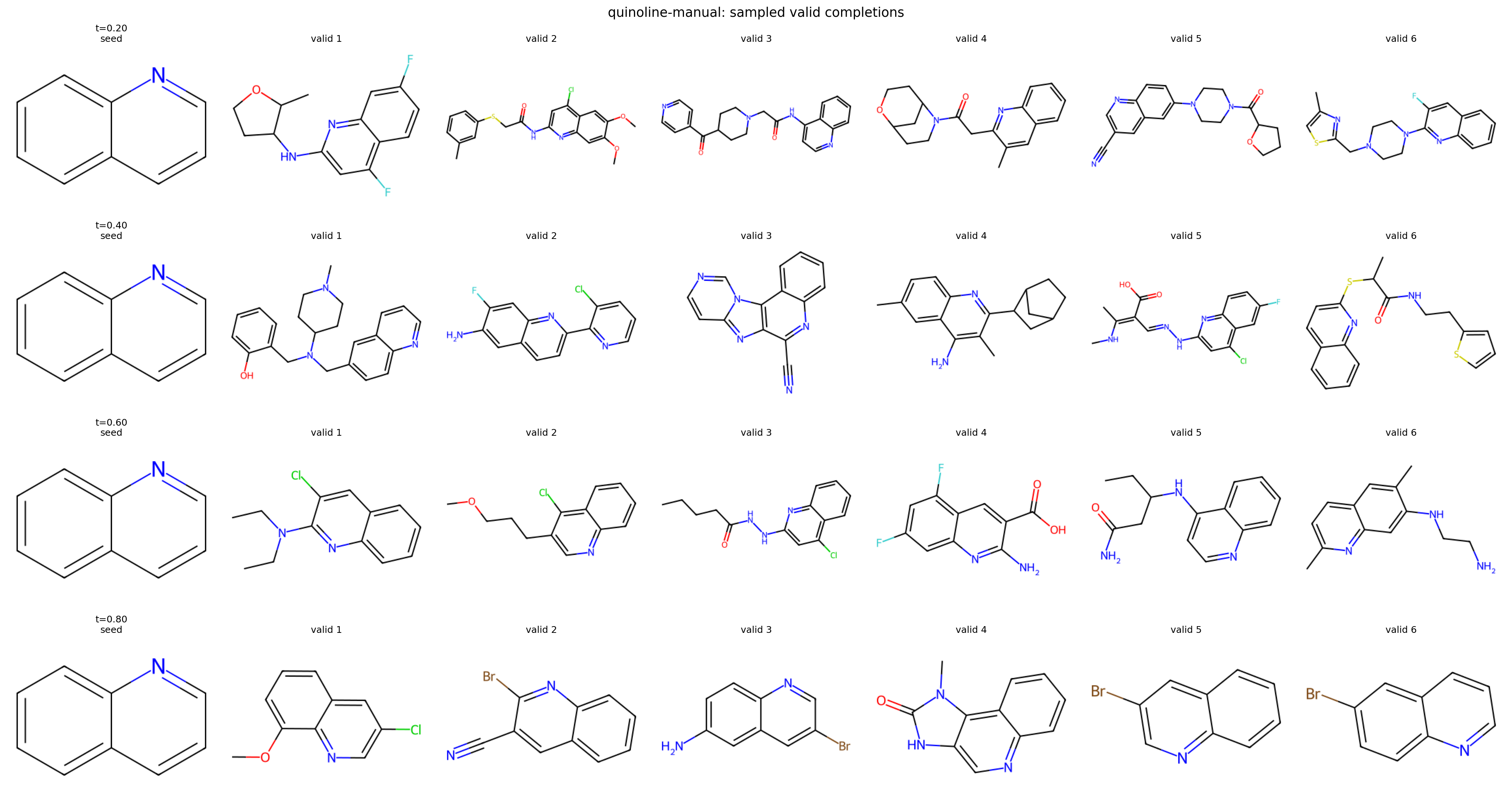}}
  \caption{Valid completions sampled by \textsc{AtomGrow} for two
    seed substructures at four Euler start times $t \in \{0.2, 0.4,
    0.6, 0.8\}$ (one row each). The seed (leftmost column) is
    preserved as a substructure of every completion. Lower $t$
    leaves more of the trajectory remaining and produces larger,
    more elaborately decorated extensions; higher $t$ yields small
    decorations on the seed.}
  \label{fig:extension-grid}
\end{figure}
 
\paragraph{Substructure linking.}
Linking is a strictly harder task: starting from two disjoint
substructures, the model must produce a single molecule containing
both as substructures, which in particular requires forming at
least one connecting path between them. \textsc{AtomGrow}
accommodates this by setting $s_I$ to the union of the two
substructures and treating both as roots of the action DAG; the
remaining atoms and bonds, including any connecting path, are
generated by the standard rollout. \textsc{BondEdit} adapts as in
the extension setting.
 
\cref{fig:linking-heatmap} reports valid-and-connected fractions
on six seed pairs across the same start-time sweep. \textsc{AtomGrow}
attains $0.58$--$0.91$ across all cells and outperforms
\textsc{BondEdit} on every (pair, $t$) combination. For example,
dibenzothiophene + formamide reaches $0.81$ vs.\ $0.33$ at $t=0.8$,
and pyridine + morpholine reaches $0.91$ vs.\ $0.56$ at $t=0.6$.

\begin{figure}[t]
  \centering
  \includegraphics[width=0.85\linewidth]{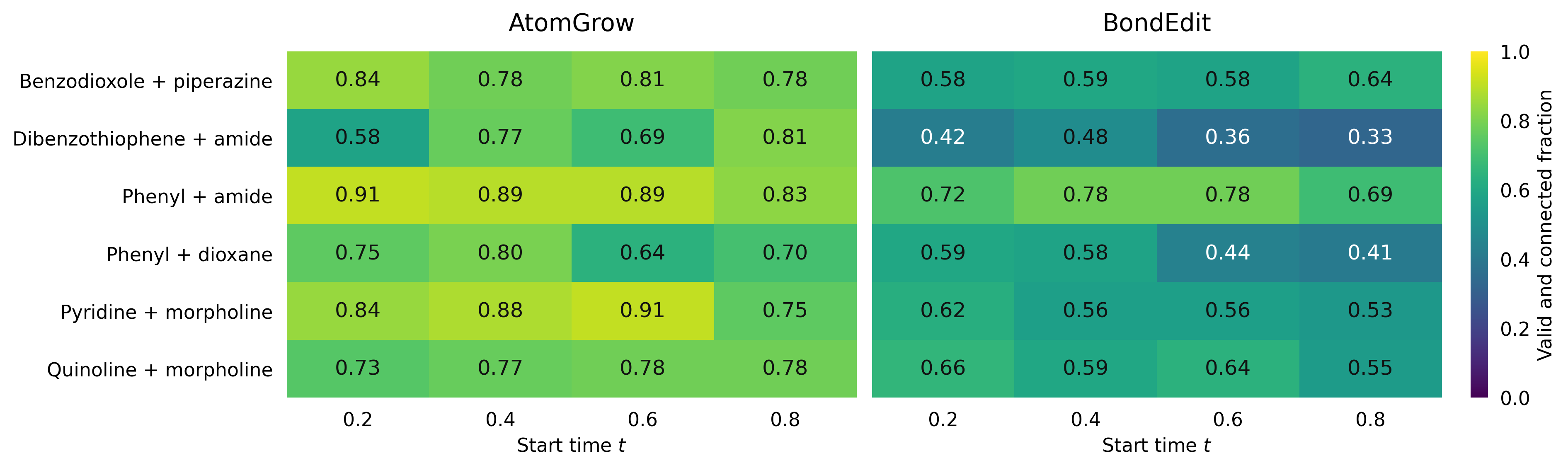}
  \caption{Substructure linking on six seed pairs (rows) and four
    Euler start times (columns), reporting the valid-and-connected
    fraction of generated molecules. \textsc{AtomGrow} (left) sets
    $s_I$ to the union of both seeds and uses both as roots of the
    action DAG; \textsc{BondEdit} (right) is adapted by sampling a
    target size, freezing both seeds, and requiring inserted bonds
    to link them. \textsc{AtomGrow} substantially outperforms the
    factorized baseline across all pairs and start times.}
  \label{fig:linking-heatmap}
\end{figure}
 
\cref{fig:linking-grid} shows qualitative successful linkings
sampled by \textsc{AtomGrow} for two of these seed pairs
(dibenzothiophene + formamide and quinoline + morpholine). Both
seeds are preserved as substructures, and the linker between them
ranges from a single bond or short chain at high $t$ to multi-atom
paths with additional functional groups at low $t$, mirroring
the start-time control observed for extension.
 
\begin{figure}[t]
  \centering
  \subfigure[Dibenzothiophene + formamide.]{%
    \includegraphics[width=\linewidth]{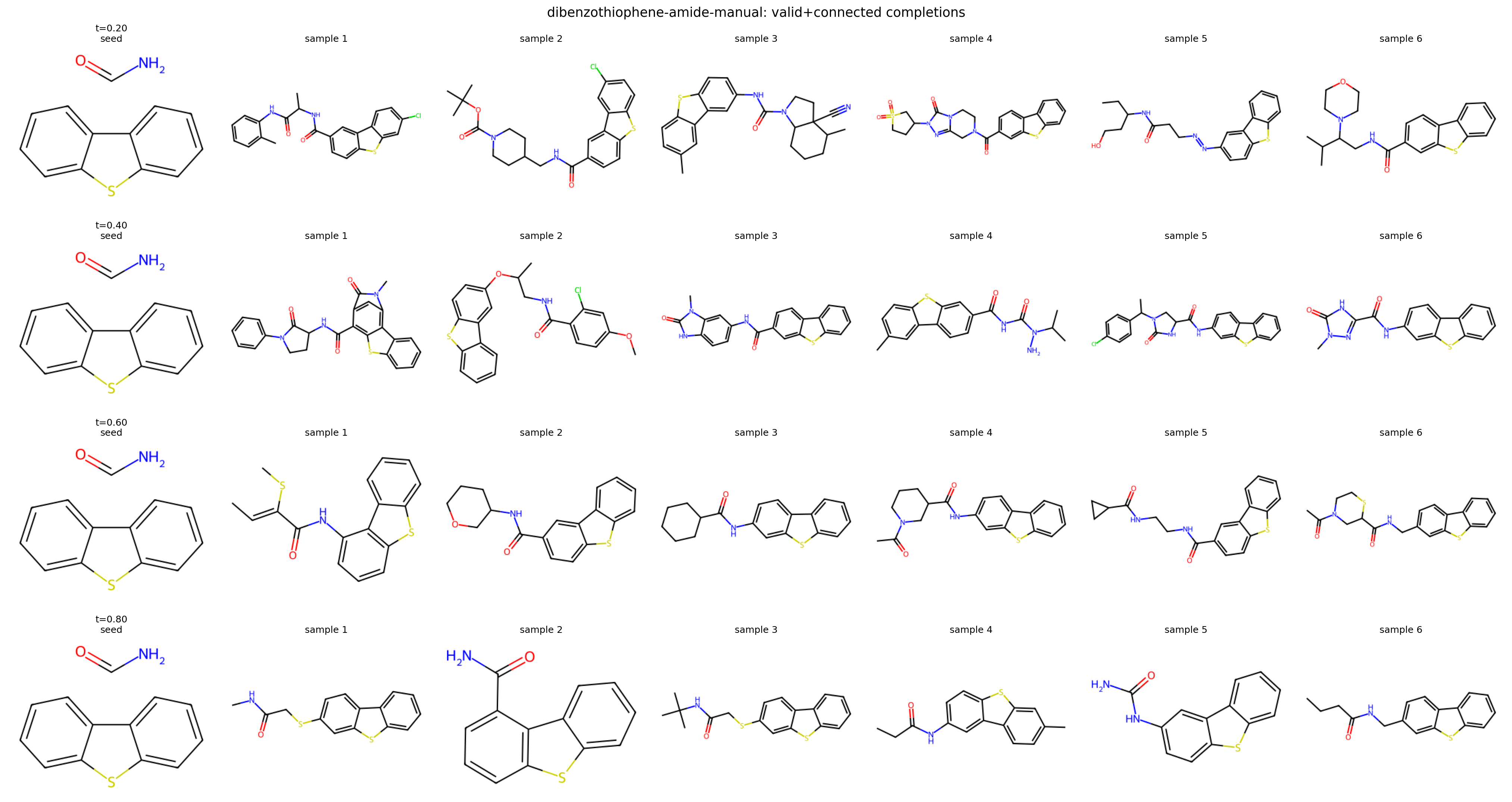}}\\[0.5em]
  \subfigure[Quinoline + morpholine.]{%
    \includegraphics[width=\linewidth]{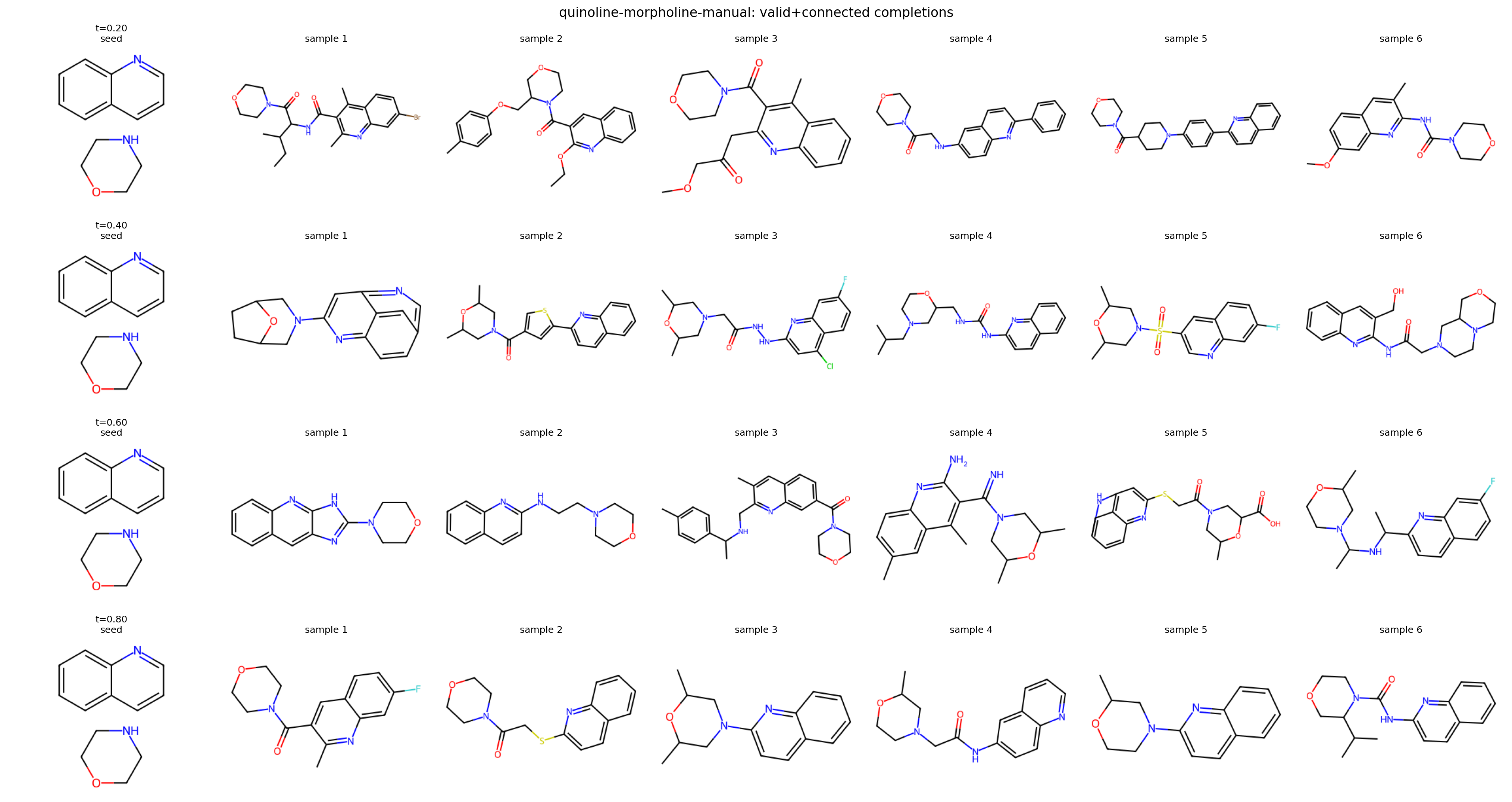}}
  \caption{Valid, connected linkings sampled by \textsc{AtomGrow}
    for two seed pairs at four Euler start times $t \in \{0.2, 0.4,
    0.6, 0.8\}$ (one row each). Both seed substructures (leftmost
    column) appear in every generated molecule, joined by a linker
    whose complexity is controlled by $t$.}
  \label{fig:linking-grid}
\end{figure}

\fi

\end{document}